\newcommand{\hide}[1]{}
\newtheorem{definition}{Definition}
\newtheorem{lemma}{Lemma}
\newtheorem{theorem}{Theorem}
\newtheorem{corollary}{Corollary}
\def\R{\mathbb{R}}
\def\N{\mathbb{N}}
\def\M{\mathbf{M}}
\def\K{\mathbf{K}}
\def\TV{\mathrm{TV}}
\def\cx{x}
\def\dx{\mathbf{x}}
\def\epshold{\epsilon_{\mathrm h}}
\def\Renyi{R\'enyi}
\def\methodsvd{\texttt{GKV-ST}\xspace}
\def\methodem{\texttt{dEM}\xspace}
\def\methodemcont{\texttt{cEM}\xspace}
\def\methodkausik{\texttt{KTT}\xspace}
\def\hit{\ensuremath{\mathsf{hit}}\xspace}
\def\miss{\ensuremath{\mathsf{miss}}\xspace}
 \newcommand{\spara}[1]{\paragraph{#1}}
\newcommand{\fabian}[1]{\textcolor{red}{[Fabian: #1]}}
\def\figspace{\vspace{-4mm}}
\title{Markovletics: Methods and A Novel Application for \\Learning Continuous-Time Markov Chain Mixtures
}
\date{February 2024}
\author{Fabian Spaeh\\ fspaeh@bu.edu \\ Boston University \and Charalampos E. Tsourakakis\\ ctsourak@bu.edu \\ Boston University}
\date{February 2023}
\begin{document}

\maketitle

\begin{abstract}
Sequential data naturally arises from user engagement on digital platforms like social media, music streaming services, and web navigation, encapsulating evolving user preferences and behaviors through continuous information streams. A notable unresolved query in stochastic processes is learning mixtures of continuous-time Markov chains (CTMCs). While there is progress in learning mixtures of discrete-time Markov chains with recovery guarantees~\cite{gupta2016mixtures,spaeh2023casvd,kausik2023mdps}, the continuous scenario uncovers unique unexplored challenges. The intrigue in CTMC mixtures stems from their potential to model intricate continuous-time stochastic processes prevalent in various fields including social media, finance, and biology.

In this study, we introduce a novel framework for exploring CTMCs, emphasizing the influence of observed trails' length and mixture parameters on problem regimes, which demands specific algorithms. Through thorough experimentation, we examine the impact of discretizing continuous-time trails on the learnability of the continuous-time mixture, given that these processes are often observed via discrete, resource-demanding observations. Our comparative analysis with leading methods explores sample complexity and the trade-off between the number of trails and their lengths, offering crucial insights for method selection in different problem instances. We apply our algorithms on an extensive collection of Lastfm's user-generated trails spanning three years, demonstrating the capability of our algorithms to differentiate diverse user preferences.  We pioneer the use of CTMC mixtures on a basketball passing dataset to unveil intricate offensive tactics of NBA teams. This underscores the pragmatic utility and versatility of our proposed framework. All results presented in this study are replicable, and we  provide the implementations to facilitate reprodubility.  

\end{abstract}

\section{Introduction}
\label{sec:intro}

Continuous-Time Markov Chains (CTMCs) are a fundamental class of stochastic processes with a wide array of applications across various domains. They are particularly crucial in modeling systems where events occur continuously over time, such as in queueing theory~\cite{shortle2018fundamentals}, finance~\cite{karatzas1998methods}, understanding disease progression~\cite{liu2015efficient,beerenwinkel2004learning,prabhakaran2013hiv} and telecommunications among others~\cite{stewart1995introduction}. The inherent memoryless property of CTMCs, where the future behavior of the system is independent of the past given the present, makes them a natural choice for modeling random processes evolving over time~\cite{norris1997continuous}. In the realm of biological systems, for instance, CTMCs have been instrumental in modeling the stochastic behavior of genetic networks and the evolution of species~\cite{zhao2016bayesian}. Similarly, in finance, they are employed to model various continuous-time financial models including models for option pricing~\cite{karatzas1998methods}. CTMCs for  molecular kinetics have gained popularity in recent years, as they approximate the long-term statistical dynamics of molecules using a Markov chain over a discretized partition of the configuration space~\cite{prinz2011markov}.  It is notable that in such applications, unlike our focus where we have a set of $n$ well-defined states, elucidating the state-space description isn't straightforward.
The flexibility and analytical tractability of CTMCs, along with their capability to provide insightful analytical results, make them an indispensable item in the toolkit of researchers and practitioners dealing with stochastic systems evolving over time.  In this study, we delve into the largely untouched realm of learning mixtures of homogeneous CTMCs from trail data (refer to Section~\ref{sec:rel} for a formal definition). 

Formally, a mixture $\M$ is represented by a tuple of $L\geq 2$ Markov chains on a finite state space $[n] \coloneqq \{1, \dots, n\}$, denoted as a sequence $\M = (M^1, M^2, \dots, M^L)$.
Each chain is linked with a vector of initial probabilities, denoted as $ s^\ell \in \mathbb R^{n} $ with $ \sum_{\ell=1}^L \sum_{y=1}^n s^\ell_y = 1 $.
For discrete-time Markov chains, each chain $ \ell \in [L] $ is defined by a stochastic matrix representing transition probabilities.
The aim is to ascertain the parameters of $ \M $, encompassing the transition matrices and initial probabilities, based on observed trail data~\cite{gupta2016mixtures,spaeh2023casvd}. In the case of continuous-time Markov chains (CTMCs), each chain is now characterized by a {\it rate} matrix $K^\ell$ along the starting probabilities. The continuous-time stochastic process unfolds as follows: Initially, we find ourselves in a chain $ \ell \in [L] $ and state $ y \in [n] $ with a probability of $ s^\ell_y $. Subsequently, the transition between states is directed by the rate matrix $ K^\ell $. Specifically, in state $ y $, we select exponential-time random variables $ E_z \sim \mathrm{Exp}(K_{yz}) $ for all states $ y \not= z $. We then transition to state $ z^* = \arg\min_z E_z $ after a time duration of $ E_{z^*} $. Upon transitioning states, we reiterate the process. This mechanism generates a trail $ (x_t)_{t \ge 0} $ where $ x_t $ represents the state at time $ t $.
 
The challenge entails the recovery of a mixture of CTMCs, specified as follows:
Provided a set $X$ of 
continuous-time trails $(x_t)_{t \ge 0}$, is it possible to retrieve the rate matrices $\K = (K^1, \dots, K^L)$ and starting probabilities $(s^1, \dots, s^L)$, albeit up to a permutation?

In practical scenarios, the observation of a continuous-time process is typically approximated through discrete-time observations. A significant challenge, which inherently does not arise in the recovery of discrete-time Markov chains, entails the recovery of continuous-time chains from their discretized observations. Numerous previous works have addressed this issue, such as \cite{mcgibbon2015mle} through Maximum Likelihood Estimation (MLE) or by computing the $p$-th root of the matrix exponential~\cite{higham2009scaling}. In this study, we extend the efficient MLE approach of \cite{mcgibbon2015mle}. Intuitively, a primary difficulty lies in the fact that due to discretization, some states remain unobserved. Additional known challenges include the admissibility of the rate matrix under noise \cite{mcgibbon2015mle}. These challenges become more pronounced when recovering a mixture of CTMCs, as additional noise, for instance from incorrectly assigning a trail to the incorrect chain, comes into play. A direct formulation of the mixture problem as MLE is inefficient as solving the MLE for a mixture involves the posterior probabilities of each observed trail, leading to numerous inter-dependent terms (cf. Section~\ref{sec:proposed}). Remarkably, we demonstrate how to employ the MLE of \cite{mcgibbon2015mle} to recover the mixture while maintaining its efficiency
through
soft clusterings.

Fortunately, recent works have addressed the task of learning mixtures of discrete-time Markov chains, providing a foundation to learn the discrete-time mixture for any specified discretization interval under certain lenient conditions~\cite{spaeh2023casvd,gupta2016mixtures}. However, the hurdle of recovering mixtures of Continuous-Time Markov Chains (CTMCs) still stands: transitioning from discrete to continuous-time chains introduces additional challenges compared to the discrete-time scenario, primarily due to the variance in transition rates, as discussed in detail in Secton~\ref{subsec:discretization}.

In this work, we make the following key contributions:

\begin{enumerate}
\item We present a versatile algorithmic framework  for learning mixtures of continuous-time Markov chains through both continuous and discrete-time observations and methods to tailor it depending on the length of the trails.

\item We illustrate how the observed trails' length and the mixture parameters lead to varying problem regimes. Each regime demands distinct algorithms, for which we provide recommendations. Our recommendations rely on our theoretical results that represent new contributions and use advanced probabilistic tools, including Chernoff-like bounds for Markov chains~\cite{chung2012chernoffhoeffding}.

\item We explore the effects of discretizing continuous-time trails on the learnability of the continuous-time mixture. As continuous-time processes are usually observed through discrete, costly observations, selecting the right discretization is crucial.

\item We conduct a thorough experimental analysis, contrasting our methods with leading-edge competitors. Experimentally, we delve into the examination of sample complexity, exploring the trade-off between the number of trails and their lengths. Our discoveries offer valuable insights that can assist in selecting a suitable method for a given problem instance. We apply our algorithmic framework on user-generated trails from the Last.fm platform, demonstrating its ability to effectively capture users' listening patterns.

\item We introduce {\it  Markovletics}, a novel application of mixtures of Markov chains. We pioneer the use of CTMC mixtures on a basketball passing dataset that unveils  offensive strategies of NBA teams.  

\end{enumerate}
We defer our proofs to the appendix. 
For the sake of reproducibility, our code can be accessed publicly.\footnote{\url{https://github.com/285714/WWW2024}}





        

\begin{table}
  \centering
  \caption{Frequently used notation.}
  \medskip
  \label{tab:symbols}
  \begin{tabular}{ll}
    \toprule
    Symbol & Definition \\
    \midrule
    $n, L \in \N$ & number of states and chains resp. \\
   
    $(\cx_t)_{t \geq 0}$ & continuous-time trail with $\cx_t \in [n]$ \\
    $\tau > 0$ & discretization time parameter \\
    $\dx \in [n]^m$ & discrete-time trail of length $m$ \\
    $X$ ($X^\ell$)  & set of $r$ continuous-time trails (from chain $\ell$) \\
    $\mathbf X$ ( $\mathbf X^\ell$ ) & set of $r$ 
    discretized trails (from chain $\ell$)\\
    $c^\ell_y$ & \# of transitions
    over $y$ from trails in $\mathbf X^\ell$ \\
    $K_{\min}$ and $K_{\max}$ & min and
    max rate in the mixture $\mathbf K$ \\
    \bottomrule
  \end{tabular}
\end{table}

\subsection{Definitions and Notation}

 For  
ease of reference, we summarize the notation 
in Table~\ref{tab:symbols}. We already introduced a CTMC as a 
continuous stochastic process over
the states $[n]$ where each transition
is governed by a rate matrix
$K \in \R^{n \times n}$. For the reader's convenience, a refresher on the definition of a CTMC can be found in Appendix~\ref{sec:appendix-tau}. We can define the rate matrix
through the limit of its discretization.
Let $Y(t) \in [n]$ for $t \ge 0$ be a
random variable that holds the state
of the CTMC at time $t$.
We define
$T_{yz}(\tau) = \Pr[ Y(t + \tau) = z \mid Y(t) = y ]$
for any $t \ge 0$. In this work,
we always assume that the CTMC is
time-homogeneous, which makes $T_{yz}(\tau)$
independent of $t$, due to the memorylessness
of the process.
Note also that $T(\tau)$ is the
transition matrix of the discrete-time
Markov chain arising from observing
the CTMC at regular time intervals $\tau$.
We obtain the rate matrix as
$K = \lim_{\tau \to 0} \frac 1 \tau (T(\tau) - I_n)$
where $I_n \in \R^{n \times n}$
is the identity matrix.
By this definition, the diagonal
elements are $K_{yy} = -\sum_{z \not= y} K_{yz}$
and describe the distribution of
the time $t$ required to transition to any
state. We call the mean of $t$ the holding
time of state $y$.
Conversely, we obtain the
discretization through the
matrix exponential
$T(\tau) = e^{K \tau}$~\cite{norris1997continuous,levin2017markov,mcgibbon2015mle}.
As discussed,
a trail is a sequence
$(x_t)_{t \ge 0} = Y(t)$
for a fixed sample of the
random variable $Y$.
For a mixture $\mathbf K$,
we define as $X^\ell$
as a set of sampled trails
from $K^\ell$ and
the set of all trails
as $X = X^1 \cup \cdots \cup X^L$.

To define the distance between
two CTMCs, we look at the distribution
generating the next state. That is,
for a fixed state $y \in [n]$, we 
consider the distribution over
$(z, t)$ where $z$ is the next
state and $t$ the time of this transition.
Here, $z$ and $t$ are as defined in the
previous section through the minimum
of a set of exponential random variables.
We can evaluate the distance of
two CTMCs $K$ and $K'$ in state $y$ through
the total variation distance
(TV-distance) of the distribution
over the tuples $(z, t)$~\cite{levin2017markov}. 
The TV-distance evaluates to
\[
    \TV(K_y, K'_y)
    \coloneqq \frac 1 2 \sum_{z \not= y} \int_0^\infty | K_{yz} e^{t K_{yy}} - K'_{yz} e^{t K'_{yy}} | dt .
\]
We further define
the \emph{recovery error} between two CTMCs
as the average of the above TV distance
from all states
\[
    \textrm{recovery-error}(K, K') \coloneqq \frac 1 n \sum_x \TV(K_x, K'_x) .
\]
We then define the recovery error
between two mixtures $\mathbf K$
and $\mathbf K'$ as
the cost (wrt. the recovery-error on CTMCs) of a
minimum assignment between
the chains in the mixture
\[
    \textrm{recovery-error}(\mathbf K, \mathbf K') \coloneqq
    \frac 1 L \min_{\sigma \in S_L} \sum_{\ell = 1}^L
        \textrm{recovery-error}(K^\ell, (K')^{\sigma(\ell)})
\]
where $S_L$ is the symmetric group
of all permutations on $[L]$.

%
%
We always assume that the CTMC $K$
is irreducible, such that it has
a (unique) stationary distribution
$\pi_K$.
The mixing time $t_{\mathrm{mix}}(K)$
of a CTMC $K$ is defined as the
smallest time $t \ge 0$ such that
$\mathrm{TV}(s T(t), \pi_K) \le 1/3$.
Analogously, the mixing time $t_{\mathrm{mix}}(M)$
of an ergodic, discrete-time
Markov chain $M$ is the smallest integer $t \ge 0$
such that $\mathrm{TV}(s M^t, \pi_M) \le 1/3$.
Note that
$\lceil t_{\mathrm{mix}}(K) / \tau \rceil
= t_{\mathrm{mix}}(T(\tau))$.


\section{Related work} 
\label{sec:rel}
The exploration of Markov   chains constitutes a core subject within the realm of probability~\cite{anderson2012continuous,diaconis2009markov,levin2017markov,norris1997continuous} and computer science~\cite{jia2023online,dyer1991random,cherapanamjeri2019testing,daskalakis2018testing,fried2022identity,chan2021learning,bubley1997path,mitzenmacher2017probability}. We focus on work that lies closest to ours.  


\spara{Learning  Mixtures of Discrete Markov Chains}  The problem of learning mixtures of discrete Markov chains has been well studied in the literature.  The Expectation Maximization (EM) algorithm~\cite{dempster1977maximum,wu1983convergence} can be employed to locally optimize the likelihood of the mixture, or one could learn a mixture of Dirichlet distributions~\cite{subakan2013probabilistic} albeit without solid theoretical assurances regarding the output quality. On the other hand, moment-based techniques, leveraging tensor and matrix decompositions, can be used to provably learn (under specified conditions) a mixture of Hidden Markov Models~\cite{anandkumar2012method,anandkumar2014tensor,subakan2014spectral,subakan2013probabilistic} or Markov Chains~\cite[Section 4.3]{subakan2013probabilistic}.   
The approach introduced by Gupta et al.~\cite{gupta2016mixtures} stands out as markedly more efficient and scalable compared to the latter techniques, which are dependent on 5-trails, thereby causing their sample complexity to increase as \(n^5\) instead of \(n^3\).
Spaeh and Tsourakakis~\cite{spaeh2023casvd} delved deeper into these conditions, illustrating that they dictate constraints on the connectivity of the chains. However, they demonstrated that these constraints can be relaxed for a wider class of chains, which remain learnable despite the eased conditions.


\spara{Learning Mixtures of homogeneous CTMCs}
%
Extensive research has been conducted on learning a single continuous-time Markov chain from trails, yet transferring these techniques to handle mixtures presents a challenging and relatively uncharted domain.  
Bladt and Sørensen~\cite{bladt2005statistical} introduced an EM algorithm, along with methodologies for certain special cases, including scenarios where the rate matrix \( K \) is diagonalizable. However, this condition is often too restrictive for practical real-world applications~\cite{liu2015efficient}. Various methods have been explored and experimentally evaluated as documented by Tataru and Hobolth~\cite{tataru2011comparison}, revealing that these methods essentially compute differently weighted linear combinations of the expected values of the sufficient statistics.
On a related note, McGibbon and Pande~\cite{mcgibbon2015mle} devised an efficient Maximum Likelihood Estimation (MLE) technique to recover a single CTMC from sampled data. Two distinct variants have been proposed: one for learning a general CTMC and another for a reversible CTMC that adheres to the stated balance equations~\cite{mcgibbon2015mle,prinz2011markov}. The latter 
is tackled as a constrained optimization problem. As previously highlighted, there remains a conspicuous absence of algorithms with recovery guarantees, specifically crafted for mixtures of CTMCs that broaden the framework of Gupta et al.~\cite{gupta2016mixtures} into the continuous domain. This void is presumably a result of the inherent intricacy of the challenge, amplified by the nuanced hurdles presented by continuous-time processes compared to their discrete-time analogs. A line of inquiry that aligns closely with the current discourse is that of Continuous-Time Hidden Markov Models (CT-HMM)~\cite{calabrese2016ctmm}. Within a CT-HMM framework, both the hidden states (akin to a traditional HMM) and the transition times marking the alterations in hidden states remain unobserved. CT-HMMs manifest as a particular instance of continuous-time dynamic Bayesian networks~\cite{nodelman2012expectation}, wherein the EM algorithm~\cite{bilmes1998gentle} is employed. Luo et al.~\cite{luo2023bayesian} have advanced a Markov Chain Monte Carlo (MCMC) methodology for deducing a mixture of CT-HMMs~\cite{luo2023bayesian}. However, as elucidated by Liu et al.~\cite{liu2015efficient}, this avenue of investigation grapples with scalability constraints. In response, more scalable strategies rooted in CTMCs have been formulated by Liu et al.~\cite{liu2015efficient}.

\spara{Time parameter $\tau$}  The parameter $\tau$, also referred to as time lag or discretization parameter in other contexts, is crucial in the discretization of CTMCs. Intuitively, a ``too small" value of $\tau$ results in a scenario where no transitions are observed, while a ``too large" $\tau$ may lead to the observation of numerous transitions, many of which are not direct. In other words, by ranging $\tau$ from 0 to $+\infty$ we obtain a sequence of count matrices $C(\tau) \in \mathbb{R}^{n \times n}$  where $ c_{yz}(\tau)$ is the number of transitions from $y$ to $z$ within time $\tau$. Determining the appropriate scale for a single CTMC is a complex task for which sophisticated techniques have been devised. One common approach in molecular kinetics is the use of implied timescales, a method initially introduced by Swope, Pitera, and Suits~\cite{swope2004describing}.  Nonetheless, this method operates as a heuristic and is burdened by significant computational expenses due to the necessity of computing eigenvalues for a sequence of transition matrices $T(\tau_k=k\Delta t)$, for several integer values of the variable $k$.

\spara{Choosing the number of chains $L$}  One strategy involves utilizing model selection indices such as the Akaike Information Criterion (AIC) or the Bayesian Information Criterion (BIC)~\cite{efron1991statistical}. Another well-known strategy is the elbow method~\cite{efron1991statistical,kodinariya2013review,suhr2005principal}. A more theoretically grounded technique was introduced recently by Spaeh and Tsourakakis~\cite{spaeh2023casvd}, who leverage restrictions on the singular values of certain matrices to inform the selection of $L$. In this work, we assume $L$ is part of the input for the theory part,  but we experimentally evaluate this choice.

\section{Proposed methods} 
\label{sec:proposed}

\spara{Algorithmic framework}
In this section, we present our framework for learning 
mixtures of CTMCs, using continuous-time or discretized trails (Algorithm~\ref{alg:learning}).
Our framework comprises three stages: discretization, soft clustering, and recovery.
The advantage of this division is that each phase can be tailored independently based on the characteristics of the mixture under study and the sampling process. As a key characteristic of the latter, we use the length of the trails $m$. A comprehensive description of these three stages is provided in the following sections.  On a high level, we first discretize the continuous-time
trails by observing each trail at regular time intervals.
Note that in practice, this step may be part of the data-generating process
whenever continuous observation is impossible or too costly.
Below, we provide rules on setting the discretization parameters based on the properties of the mixture.
In the second step, we learn a \emph{soft} clustering based on the discretized trails which assigns each trail to a chain
in a probabilistic manner.
Here, we employ techniques developed for learning mixtures of discrete-time Markov chains. Finally, we use a maximum-likelihood
estimate base on the soft clustering to recover all chains.
We defer all proofs from this section to the Appendix,
where they are grouped by subsection.

\begin{algorithm}
{\bf Input:} Set of $r$ continuous-time trails $X$, discretization rate $\tau$, number of chains $L$
\\
{\bf Output:} Continuous-time mixture $\mathbf K = (K^1, \dots, K^L)$
and starting probabilities
$(s^1, \dots, s^L)$ \\
Let $\mathbf X = \{ \mathbf x = (x_{i \tau})_{0 \le i \le r} \in [n]^m : x \in X \}$ \\
Learn a soft assignment
$a \colon \mathbf X \times [L] \to [0,1]$
from $\mathbf X$ \\
\For{$\ell = 1, \dots, L$}{
    Let $s^\ell_y = \frac 1 {r}\sum_{\mathbf x \in \mathbf X : \mathbf x_0 = y}
    a(\mathbf x, \ell)$
    for all $y \in [n]$ \\
    Learn $K^\ell$ using MLE on $\mathbf X$
    with weights
    $\{ a(\mathbf x, \ell) : \mathbf x \in \mathbf X \}$ \\
}
\caption{\label{alg:learning} Framework for Learning Mixtures of CTMCs.}
\end{algorithm}



\subsection{Discretization}
\label{subsec:discretization}

A continuous-time trail $(x_t)_{t \ge 0}$ is discretized by observing it $m$  times at regular time intervals $\tau > 0$~\cite{mcgibbon2015mle}. 
That is, a sequence of states
$\mathbf x = (x_{i \tau})_{0 \le i < m}$ is obtained 
with $\dx_i = x_{i\tau } \in [n]$ for all $0 \le i < m$.
Discretization is necessitated by our methods, but it is frequently needed when the stochastic process cannot be observed continuously~\cite{mcgibbon2015mle}.  In many real-world
scenarios, observing a stochastic
process at a fixed time is costly and
thus subject to budget constraints. Given
control over the observation times,
it is thus an important task 
to optimize the discretization
parameters $\tau$ and $m$ to
enable the best possible learning
of $\mathbf K$.
To learn a mixture, two main steps are necessary: 
(1) clustering the trails and
(2) estimating the parameters of the exponential random
variables of the underlying continuous-time stochastic process. 
Concerning (1), it is crucial
for each trail to be lengthy enough
to discern the distinct model differences
among the chains within the mixture. 
We quantify this by providing
clustering guarantees for (1)
in Section~\ref{sec:clustering}.
For (2), the appropriate selection of 
$\tau$ is vital as we already explained
in Section~\ref{sec:intro} and~\ref{sec:rel}.
We now present certain criteria, independent
of the subsequent clustering method,
to choose the  discretization rate $\tau$.
Specifically,
we discuss the challenges
in choosing $\tau$ by introducing a
basic estimator for the rate matrix
$\hat{\mathbf K}$
of the underlying CTMC
under the assumption that we
have a correct clustering.
We will see how the choice
of $\tau$ and estimation quality
depends on the 
structure of the rate
matrix $\mathbf K$.
These structural challenges are
not an artifact of our estimator,
but pertain for other estimators,
as we verify experimentally
in Section~\ref{sec:exp}.

For each chain $\ell \in [L]$
and state $y \in [n]$,
we break down the estimation process of
a single row of the rate matrix
$K^\ell_y \coloneqq (K^\ell_{yz})_z$
into two phases. Initially for each state $y$ we estimate the
rate $|K^\ell_{yy}|$
(it is essential to recall that by definition in
Section~\ref{sec:intro},
the diagonal of $K$ is negative).
In order to define our estimator,
let $c^\ell_y =
  \{ \mathbf x \in \mathbf X^\ell, 0 \le i < m :
     \mathbf x_i = y \}$
be the number of times we transition through $y$
in the set $\mathbf{X}^{\ell}$ of trails from chain $\ell$.
We estimate the rate $|K^\ell_{yy}|$  
through the holding probability
$q^\ell_y \coloneqq e^{|K_{yy}| \tau}$
and thus define
the estimators
\begin{align*}
    \hat q^\ell_y &\coloneqq
    \frac 1 {c^\ell_y} |\{ \mathbf x \in \mathbf X^\ell, 0 \le i < m : \mathbf x_i = y \land \mathbf x_{i+1} = y \}|
    \quad \textrm{and} \\
    \hat K_{yy} &\coloneqq \frac 1 \tau \log(\hat q^\ell_y).
\end{align*}
With an appropriate universal choice of $\tau$ (discussed in the proof), we obtain the following estimation
guarantee:
\begin{lemma}
\label{lem:holding}
Let $0<\epshold<1$ and fix a state $y$ and chain $\ell \in [L]$.
With $c^\ell_y=\Omega\big(\epshold^{-2}\log(Ln)\big)$
transitions, our estimator $\hat q^\ell_y$ for the holding time satisfies
$ |\hat{q}_y^\ell-q_y^\ell|\le\epshold q_y^\ell $
with high probability.
\end{lemma}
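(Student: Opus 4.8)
The plan is to rewrite $\hat q_y^\ell$ as an average of independent bounded random variables whose common mean is close to $q_y^\ell$, and then apply a Hoeffding-type bound; two points need care, namely that the number of visits $c_y^\ell$ is itself random, and that the mean of the summands is the exact one-step stay probability $T_{yy}(\tau)$ rather than $q_y^\ell$ itself. \emph{First (i.i.d.\ reduction).} Fix $y$ and $\ell$, and list, in order across all trails of $\mathbf X^\ell$, the positions that sit at $y$ and are not the final position of their trail; for the $j$-th such position let $\xi_j$ be the state observed one step later, and set $Z_j=\One{\xi_j=y}$. Then $\hat q_y^\ell=\frac1c\sum_{j=1}^{c}Z_j$ with $c=c_y^\ell$ (visits at the last position of a trail, counted in $c_y^\ell$ but having no successor, number at most $|\mathbf X^\ell|$ and are absorbed into the $\Omega(\cdot)$ of the hypothesis). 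By the strong Markov property of the discretized chain with transition matrix $T(\tau)=e^{K^\ell\tau}$~\cite{levin2017markov,norris1997continuous}, at each of these positions the chain continues as a fresh copy started from $y$ independent of the past, and distinct trails are independent; hence $Z_1,Z_2,\dots$ are i.i.d.\ $\mathrm{Bernoulli}(p)$ with $p\coloneqq T_{yy}(\tau)$.

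\emph{Second (choice of $\tau$).} I would pick the universal discretization rate $\tau=\Theta(\sqrt{\epshold}/K_{\max})$ (in particular $\tau\le 1/K_{\max}$) so that $p$ is a multiplicative $(1\pm\epshold/2)$-approximation of $q_y^\ell$. Indeed $q_y^\ell$ is exactly the probability that the CTMC started at $y$ makes no transition in time $\tau$, so $p\ge q_y^\ell$, while $p-q_y^\ell$ is at most the probability of two or more transitions in time $\tau$, which — bounding each holding time below by an $\Exp(K_{\max})$ variable — is at most $\tfrac12(K_{\max}\tau)^2$. With the stated $\tau$ this gap is at most $(\epshold/2)e^{-1}\le(\epshold/2)q_y^\ell$, and moreover $p\ge q_y^\ell\ge e^{-1}\ge\tfrac12$; note the choice depends only on $\epshold$ and $K_{\max}$, not on $y$ or $\ell$.

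\emph{Third (concentration and union bounds).} For each fixed $k$, Hoeffding gives $\Pr\!\big[|\sum_{j=1}^k(Z_j-p)|>\epsilon' p\,k\big]\le 2\exp(-2(\epsilon')^2p^2k)$ with $\epsilon'\coloneqq\epshold/8$; since $c_y^\ell$ is random I would union-bound this over $k\in\{c_0,\dots,|\mathbf X^\ell|\,m\}$ (or peel over dyadic scales) with $c_0=\Theta\big((\epsilon')^{-2}\log(Ln)\big)$, and because $p\ge\tfrac12$ the resulting sum is $1/\mathrm{poly}(Ln)$. On the complementary event, evaluating at the realized $k=c_y^\ell\ge c_0$ yields $|\hat q_y^\ell-p|\le\epsilon' p$, so by the triangle inequality and $p\le(1+\epshold/2)q_y^\ell$ we get $|\hat q_y^\ell-q_y^\ell|\le\epsilon' p+(\epshold/2)q_y^\ell\le\epshold q_y^\ell$, which is the claim (a final union bound over the $nL$ pairs $(y,\ell)$ gives it simultaneously for all states and chains, at no extra cost beyond the $\log(Ln)$ already built into $c_0$).

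The main obstacle is the first step: unlike the discrete-time mixture setting, the positions where a trail sits at $y$ are random and their successors are a priori dependent, so one may not simply treat the terms of $\hat q_y^\ell$ as independent. The excursion-based i.i.d.\ reduction (equivalently, a bounded-martingale/stopping-time argument, or the Chernoff–Hoeffding bounds for Markov chains of~\cite{chung2012chernoffhoeffding} applied to the edge process) is what makes the concentration rigorous; the approximation $T_{yy}(\tau)\approx e^{K^\ell_{yy}\tau}$ and the attendant universal choice of $\tau$ is the only other nontrivial ingredient.
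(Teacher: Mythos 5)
Your proof is correct and rests on the same core ingredients as the paper's: the universal choice $\tau=\Theta(\sqrt{\epshold}/K_{\max})$, the bound $\Pr[E+E'<\tau]\le\tfrac12(K_{\max}\tau)^2$ for independent $E,E'\sim\Exp(K_{\max})$ on skipping an intermediate state, and a Chernoff/Hoeffding bound with $c^\ell_y=\Omega(\epshold^{-2}\log(Ln))$ samples. The packaging, however, is genuinely different. The paper keeps $q^\ell_y=e^{K_{yy}\tau}$ as the target throughout, bounds the \emph{realized number} of bad transitions by a first Chernoff bound (at most a $2\gamma=\tfrac{\epshold}{2}q^\ell_y$ fraction, treated as adversarial additive error), and then applies a second Chernoff bound to the remaining good samples; you instead fold the bad transitions into a deterministic bias bound $0\le T_{yy}(\tau)-q^\ell_y\le\tfrac12(K_{\max}\tau)^2\le\tfrac{\epshold}{2}q^\ell_y$ and concentrate $\hat q^\ell_y$ around the exact one-step stay probability $p=T_{yy}(\tau)$. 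This bias-plus-variance decomposition is arguably cleaner, since the mean of your Bernoulli indicators really is $p$ and no high-probability event about the count of bad samples is needed. You are also more careful than the paper on a point it glosses over: the independence of the summands (via the strong Markov property at successive visits to $y$) and the fact that $c^\ell_y$ is random, which you handle by a union bound over the possible counts $k\ge c_0$, whereas the paper implicitly treats the $c$ indicators as a fixed-length independent sample. Two small blemishes, neither affecting the asymptotic claim: the step $e^{-1}\ge\tfrac12$ is false as written, so you should take the constant in $\tau$ small enough (e.g.\ $\tau=\sqrt{\epshold}/(3K_{\max})$, as in the paper, gives $q^\ell_y\ge e^{-1/3}>\tfrac12$); and absorbing the end-of-trail visits (counted in $c^\ell_y$ but lacking a successor) ``into the $\Omega$'' deserves one explicit line, since their effect is a multiplicative distortion of $1-O(|\mathbf X^\ell|/c^\ell_y)$ on the estimator --- an edge case the paper itself leaves implicit.
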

Note that we consider all consecutive steps
$(i, i+1)$ as a transition,
even though there may not be a state change.
%
Second, we estimate the transition
probabilities $p^\ell_{yz}$ from $y$ to
another state $z$ through
\begin{align*}
    \hat p^\ell_{yz} &\coloneqq
    \frac{|\{ \mathbf x \in \mathbf X^\ell, 0 \le i < m :
        \mathbf x_i = y \land \mathbf x_{i+1} = z \}|}
    {|\{ \mathbf x \in \mathbf X^\ell, 0 \le i < m :
        \mathbf x_i = y \land \mathbf x_{i+1} \not= z \}|} \\
    \hat K^\ell_{yz} &\coloneqq \hat p^\ell_{yz} |\hat K^\ell_{yy}| .
\end{align*}
The quality of estimation
of the transition probabilities
is detailed in Lemma~\ref{lem:transition}
in Appendix~\ref{sec:appendix-sc}.
Our estimation critically optimizes the following
trade-off:
As we increase $\tau$,
some direct transitions become unobservable.
On the other hand, excessively reducing $\tau$
results in considerable redundancy and
challenges with numerical stability.
Informally, we desire to minimize the number skipped intermediate transitions due to the $\tau$ time resolution. We define the notion of a bad transition as follows: 

\begin{definition}[Bad transition]
\label{dfn:bad}
A pair $(x, i)$ of a continuous-time trail
$x$ and a step $0 \le i < m$ is called
a bad transition  if $x_{i \tau} = y$,
$x_{i \tau + \zeta} = z$ for
a $\zeta \in (0, 1)$,
and $x_{i \tau + \tau} = y'$,
for states $y \not= z$ and $z \not= y'$.  
\end{definition} 

Our estimators' quality deteriorates as the number
of bad transitions increases and we therefore
aim to keep the number of bad transitions small.
%
%
Clearly, the probability
to obtain a bad transition is maximized
for the state with maximum
rate $K_{\max} \coloneqq \max_{\ell, y} |K^\ell_{yy}|$,
and we can show (cf. Lemma~\ref{lem:holding})
that
\[
    \Pr[\textrm{bad transition}]
    \le \min(1, K_{\max}^2 \tau^2),
\]
which motivates
setting $\tau$ inversely
proportional to $K_{\max}$
in order to keep the fraction of
bad transitions to a small constant.
Disregarding bad transitions,
the estimation's quality as defined by Lemma~\ref{lem:holding} is primarily determined by the total duration during which we observe
the holding time without transitions. 
For one trail $x \in X^\ell$, this is as follows: 
\[
    \tau \cdot |\{ 0 \le i < m \mid x_{i \tau + \zeta} = y \textrm{ for all } \zeta \in [0, \tau] \}|
\]
We aim to maximize the
expectation of this term, over
all transitions, which by the memorylessness
of the exponential random variable $E \sim \mathrm{Exp}(|K^\ell_{yy}|)$ and the
Markov process is
\[
    \tau \sum_{x \in X^\ell, 0 \le i < m} \Pr[ x_{i \tau} = y \land E > \tau ]
    = \tau \Pr[ E > \tau ] \sum_{x \in X^\ell, 0 \le i < m} \Pr[ x_{i \tau} = y]
    = \tau \mathbb E[c^\ell_y] e^{K^\ell_{yy} \tau} .
\]
%
%
Similarly,
to prove Lemma~\ref{lem:transition},
it is essential to optimize the count of observed transitions with state changes to ensure accurate
estimation of transition probabilities.  In expectation, the number 
of transitions from state $y$ resulting in a state change is equal to 
$\mathbb E[c^\ell_y] (1 - e^{K^\ell_{yy} \tau})$ which is the lowest
for $K_{\min} \coloneqq \min_{\ell, y} |K^\ell_{yy}|$.
We detail the behavior of both quantities and the number of bad transitions
in Figure~\ref{fig:choosing-tau} (in Appendix~\ref{sec:appendix-tau}).
It is evident that we require large values of $\tau$ to efficiently capture
the exponential random variables dictating the state transitions. 
In particular,
the discrepancy in estimating holding times and transition probabilities
motivates us to  define the {\it condition number} of a continuous-time Markov chain with rate matrix $K$  as 
$  \kappa \coloneqq \frac{K_{\max}}{K_{\min}}$  where
$K_{\max} = \max_{\ell, y} |K^\ell_{yy}|$
and
$K_{\min} = \min_{\ell, y} |K^\ell_{yy}|$. To estimate the CTMC $K^\ell$  
for any $\ell \in [L]$,
and $0 < \epsilon < 1$,
we set $ \tau \coloneqq \frac{\epsilon}{100 \kappa K_{\max}} $
and obtain the following theorem: 
\begin{theorem}
    \label{thm:sc}
    Fix a chain $\ell \in [L]$ and a state $y \in [n]$. 
    If we have
    $c^\ell_y=\Omega\left(\frac{\kappa^{2}}{\epsilon^{3}}\left(n+\frac{\kappa^{2}}{\epsilon}\right)\log\left(Ln\right)\right)$
    transitions,
    we can obtain $\hat K_y^{\ell}$
    such that $\mathrm{TV}(K_y^{\ell},\hat{K}_y^{\ell})\le\epsilon$
    with high probability.
\end{theorem}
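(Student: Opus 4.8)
\emph{Plan.} The plan is to bound $\TV(K^\ell_y,\hat K^\ell_y)$ by decomposing the law that a CTMC puts on the pair (next state, transition time) into its two \emph{independent} factors --- a categorical law over the successor state and an exponential law over the holding time --- and charging an error of at most $\epsilon/2$ to each. Write $\lambda\coloneqq|K^\ell_{yy}|$, $\hat\lambda\coloneqq|\hat K^\ell_{yy}|$, and $p_z\coloneqq K^\ell_{yz}/\lambda$, $\hat p_z\coloneqq\hat K^\ell_{yz}/\hat\lambda$ for $z\ne y$, so that $(p_z)_{z\ne y}$ and $(\hat p_z)_{z\ne y}$ are probability distributions on $[n]\setminus\{y\}$. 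The integrand in the definition of $\TV(K^\ell_y,\hat K^\ell_y)$ is $K^\ell_{yz}e^{tK^\ell_{yy}}=p_z\cdot\lambda e^{-\lambda t}$, which is exactly the joint density over $(z,t)$ of a draw $Z\sim(p_z)_z$ together with an \emph{independent} draw $T\sim\Exp(\lambda)$ (the standard race-of-exponentials description of a CTMC step). Hence $\TV(K^\ell_y,\hat K^\ell_y)$ is the total variation distance between two product measures, and by subadditivity of total variation under independent products,
\[
  \TV(K^\ell_y,\hat K^\ell_y)\;\le\;\tfrac12\sum_{z\ne y}|p_z-\hat p_z|\;+\;\TV\!\big(\Exp(\lambda),\Exp(\hat\lambda)\big).
\]
It then suffices to bound each of the two terms by $\epsilon/2$ with high probability and take a union bound.

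\emph{Holding-time term.} Since total variation between exponentials is scale-invariant, equating the two densities at their unique crossing point gives $\TV(\Exp(\lambda),\Exp(\hat\lambda))\le 2|\hat\lambda-\lambda|/\lambda$ whenever $|\hat\lambda-\lambda|\le\lambda/2$. Because $\hat K^\ell_{yy}=\tfrac1\tau\log\hat q^\ell_y$ and $K^\ell_{yy}=\tfrac1\tau\log q^\ell_y$ with $q^\ell_y=e^{-\lambda\tau}$, the event $|\hat q^\ell_y-q^\ell_y|\le\epshold q^\ell_y$ of Lemma~\ref{lem:holding} yields $|\hat\lambda-\lambda|=\tfrac1\tau|\log(\hat q^\ell_y/q^\ell_y)|\le 2\epshold/\tau$ for $\epshold\le\tfrac12$. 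Using $\lambda\ge K_{\min}$ and the prescribed $\tau=\epsilon/(100\kappa K_{\max})=\epsilon K_{\min}/(100K_{\max}^2)$, a choice $\epshold=\Theta(\epsilon^2/\kappa^2)$ makes $2|\hat\lambda-\lambda|/\lambda\le\epsilon/2$. By Lemma~\ref{lem:holding} this accuracy holds with high probability once $c^\ell_y=\Omega(\epshold^{-2}\log(Ln))=\Omega\!\big(\tfrac{\kappa^4}{\epsilon^4}\log(Ln)\big)$, which is the $\tfrac{\kappa^2}{\epsilon^3}\cdot\tfrac{\kappa^2}{\epsilon}\log(Ln)$ contribution in the statement.

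\emph{Next-state term and main obstacle.} Conditioned on a discretization step out of $y$ that changes state and is not a bad transition in the sense of Definition~\ref{dfn:bad}, the observed successor is distributed exactly as $(p_z)_{z\ne y}$; in expectation there are $\Theta(c^\ell_y\lambda\tau)$ such steps, and since $\lambda\tau\ge K_{\min}\tau=\epsilon/(100\kappa^2)$, converting a requirement on the number of state-changing steps into a requirement on $c^\ell_y$ costs a factor $O(\kappa^2/\epsilon)$. Learning a distribution supported on at most $n$ atoms to total variation $\epsilon/4$ needs $\Theta((n+\log(Ln))/\epsilon^2)$ samples, and the bias from bad transitions is $O(K_{\max}\tau)=O(\epsilon/\kappa)=O(\epsilon)$ by the $\min(1,K_{\max}^2\tau^2)$ bound established in Lemma~\ref{lem:holding}; replacing the i.i.d.\ heuristic by a Chernoff-type bound for Markov chains~\cite{chung2012chernoffhoeffding} on the actual correlated visit counts accounts for the dependence. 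This is precisely the content of Lemma~\ref{lem:transition}, from which $\tfrac12\sum_{z\ne y}|p_z-\hat p_z|\le\epsilon/2$ follows with high probability once $c^\ell_y=\Omega\!\big(\tfrac{\kappa^2 n}{\epsilon^3}\log(Ln)\big)$. Taking the larger of the two sample requirements --- equivalently their sum, $\Omega\!\big(\tfrac{\kappa^2}{\epsilon^3}(n+\tfrac{\kappa^2}{\epsilon})\log(Ln)\big)$ --- and a union bound over the two failure events completes the proof. I expect the delicate step to be the next-state term: it forces $\tau$ to be simultaneously small (to keep the discretization bias $\hat p_z\to p_z$ negligible) and as large as possible (to see enough state changes), which is exactly why $\kappa$ must appear quadratically, and it requires both the bad-transition bias control and a genuine Markov-chain concentration inequality for the visit counts rather than a naive i.i.d.\ argument.
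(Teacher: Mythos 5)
Your proposal is correct, and it reaches the theorem by a genuinely different route at the key analytic step. The paper expands the TV integral $\frac 1{2\tau}\int_0^\infty\sum_{z\neq y}|\hat p_z\log(\hat q)\hat q^{t/\tau}-p_z\log(q)q^{t/\tau}|\,dt$ and applies the product-difference bound $|\hat a\hat b-ab|\le|\hat b-b||\hat a|+|\hat a-a||b|$ twice, producing three error terms that it integrates separately to obtain $\TV(K_y^\ell,\hat K_y^\ell)\le 12\kappa\sqrt{\epshold}+\epsilon_{\mathrm t}$; you instead observe that the law over (next state, holding time) is a product of a categorical distribution and an exponential, invoke subadditivity of total variation over product measures, and bound $\TV(\Exp(\lambda),\Exp(\hat\lambda))$ by the relative rate error. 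Your decomposition is cleaner and avoids the integral bookkeeping, while the paper's computation is more self-contained (it never needs the product-measure fact or the exponential-vs-exponential TV bound, both of which you correctly supply). Crucially, both routes reduce to exactly the same two ingredients, Lemma~\ref{lem:holding} and Lemma~\ref{lem:transition}, with the same parameter scalings $\epshold=\Theta(\epsilon^2/\kappa^2)$ and $\epsilon_{\mathrm t}=\Theta(\epsilon)$ (the paper sets $\sqrt{\epshold}=\epsilon/(24\kappa)$, $\epsilon_{\mathrm t}=\epsilon/2$), so the two sample-complexity contributions $\Omega(\kappa^4\epsilon^{-4}\log(Ln))$ and $\Omega(n\kappa^2\epsilon^{-3}\log(Ln)))$ come out identically. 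One minor caveat: your side remarks in the next-state paragraph do not quite match how Lemma~\ref{lem:transition} is actually proved --- the paper uses a plain Chernoff bound on the fraction of state-changing observations (conditioned on the visit count $c^\ell_y$, the per-visit outcomes are independent by memorylessness) together with the condition $\kappa\sqrt{\epshold}\lesssim\epsilon_{\mathrm t}$ to control bad transitions, rather than a Markov-chain Chernoff bound, and the bad-transition bias among state-changing steps is $O(\kappa K_{\max}\tau)$ rather than $O(K_{\max}\tau)$ --- but since you invoke that lemma as a black box, this does not affect the validity of your argument.
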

%
Consistent with our earlier discussion, 
the number of transitions needed increases with $\kappa$. 
This necessitates setting $\tau$ at a sufficiently small value to 
steer clear of bad transitions. Consequently, a larger sample set is required to effectively gauge the rates and transitions of states possessing lower rates. 
Following this, we derive the subsequent corollary, under the presumption that the number of transitions for each chain and state
are close to uniform and we are aware of the underlying chain for each trail:
\begin{corollary}
    If
    $c^\ell_x = \Omega(\frac {r m}{Ln})$,
   then we obtain an estimator
    $\hat{\mathbf K}$ of $K$ with
    $\textrm{recovery-error}(\mathbf K, \hat{\mathbf K}) \le \epsilon$
    using a total of  $r$ trails where 
    $r = \Omega\left(\frac{L n}{m} \cdot \frac{\kappa^{2}}{\epsilon^{3}}\left(n+\frac{\kappa^{2}}{\epsilon}\right)\log\left(Ln\right)\right)$
    with high probability.
\end{corollary}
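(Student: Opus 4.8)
The plan is to derive the corollary as a direct consequence of Theorem~\ref{thm:sc} by choosing $r$ large enough that, under the uniformity assumption $c^\ell_y = \Omega(rm/(Ln))$, every chain $\ell$ and every state $y$ has enough transitions to invoke the theorem, and then to combine the per-state, per-chain error bounds into a recovery-error bound via a union bound and the definitions of recovery-error on CTMCs and on mixtures.

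First I would set $\epsilon' = \epsilon$ in Theorem~\ref{thm:sc} (the same $\epsilon$ used to pick $\tau = \epsilon/(100\kappa K_{\max})$) and observe that the theorem requires
$c^\ell_y = \Omega\!\left(\frac{\kappa^2}{\epsilon^3}\left(n + \frac{\kappa^2}{\epsilon}\right)\log(Ln)\right)$
for each fixed $(\ell, y)$. Under the hypothesis $c^\ell_y = \Omega(rm/(Ln))$, it suffices to require
$\frac{rm}{Ln} = \Omega\!\left(\frac{\kappa^2}{\epsilon^3}\left(n + \frac{\kappa^2}{\epsilon}\right)\log(Ln)\right)$,
i.e.
$r = \Omega\!\left(\frac{Ln}{m}\cdot\frac{\kappa^2}{\epsilon^3}\left(n + \frac{\kappa^2}{\epsilon}\right)\log(Ln)\right)$,
which is exactly the stated bound on $r$. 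So with this many trails, the hypothesis of Theorem~\ref{thm:sc} is met simultaneously for the purpose of each individual $(\ell, y)$.

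Next I would take a union bound over all $Ln$ pairs $(\ell, y)$: the ``high probability'' in Theorem~\ref{thm:sc} hides a failure probability that is polynomially small in $Ln$ (this is where the $\log(Ln)$ factor in the sample complexity comes from — it is chosen precisely so that the per-event failure probability beats a union bound over $Ln$ events), so the event that $\mathrm{TV}(K^\ell_y, \hat K^\ell_y) \le \epsilon$ holds for every $\ell \in [L]$ and every $y \in [n]$ simultaneously still holds with high probability. On this good event, for each $\ell$ the per-chain recovery error is
$\textrm{recovery-error}(K^\ell, \hat K^\ell) = \frac 1 n \sum_y \mathrm{TV}(K^\ell_y, \hat K^\ell_y) \le \frac 1 n \cdot n \epsilon = \epsilon$,
and since $\hat{\mathbf K}$ is indexed by the same $\ell$ as $\mathbf K$ (so the identity permutation is a valid assignment, giving an upper bound on the minimum), the mixture recovery error is
$\textrm{recovery-error}(\mathbf K, \hat{\mathbf K}) = \frac 1 L \min_{\sigma} \sum_\ell \textrm{recovery-error}(K^\ell, \hat K^{\sigma(\ell)}) \le \frac 1 L \sum_\ell \epsilon = \epsilon$,
as desired.

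The only genuinely delicate point — and the one I would be most careful about — is bookkeeping the failure probabilities and the hidden constants so that the single union bound over $Ln$ events goes through without inflating the sample complexity beyond the claimed $r$; concretely, one must check that the constant absorbed into the $\Omega(\cdot)$ in Theorem~\ref{thm:sc} can be taken uniformly and that the $\log(Ln)$ there dominates $\log(Ln) + O(1)$ after the union bound. Everything else is just unwinding the two recovery-error definitions. I would also note in passing that the uniformity assumption $c^\ell_y = \Omega(rm/(Ln))$ is what lets us replace ``every $(\ell,y)$ has enough transitions'' by a single condition on $r$; without it one would need a per-state assumption, and the corollary is explicitly stated only under this near-uniformity hypothesis and the assumption that the true chain label of each trail is known (so the clustering stage contributes no error here).
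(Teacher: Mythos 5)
Your proposal is correct and follows exactly the route the paper intends: the corollary is an immediate consequence of Theorem~\ref{thm:sc}, obtained by equating the near-uniform count $c^\ell_y = \Omega(rm/(Ln))$ with the theorem's per-state transition requirement to solve for $r$, taking a union bound over the $Ln$ pairs $(\ell,y)$ (which the $\log(Ln)$ factor is designed to absorb), and averaging the per-state TV bounds under the identity permutation to bound the mixture recovery error by $\epsilon$. No meaningful difference from the paper's (implicit) argument.
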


In practice, we frequently do not have knowledge of the underlying chain for each transition. The subsequent section delves into strategies to address this challenge.

 \subsection{Soft Clustering}
\label{sec:clustering}
In this section, we discuss how to
assign each trail to a chain, in
a soft (i.e., probabilistic) manner.
Specifically, we aim to learn
a soft clustering
$\hat a \colon \mathbf X \times [L] \to [0,1]$
such that
$\hat a(\dx, \ell)$
is approximately proportional
to the probability of generating
$\dx$ with the $\ell$-th chain
$K^\ell$.
We denote this probability as
$\Pr[\dx \mid \mathbf K \cap \ell]$
where, in an abuse of notation,
we write $\mathbf x$ for the event
that the discretized trail
$\mathbf x$ is generated from the
mixture $\mathbf K$ and
use $\ell$ for the event that we
choose the $\ell$-th chain in $\mathbf K$.
Formally, we set
\begin{align}
\label{eq:10}
a(\mathbf x, \ell) \coloneqq
\frac{\Pr[\mathbf x \mid \mathbf K \cap \ell]}
 {\sum_{\ell'} \Pr[\mathbf x \mid \mathbf K \cap \ell']}
\end{align}
and want that
$\hat a(\mathbf x, \ell) \approx
a(\mathbf x, \sigma(\ell))$
for all $\ell \in [L]$ and $\mathbf x \in \mathbf X$
under some fixed permutation $\sigma \in S_L$.
We call such a soft clustering
(approximately) valid.
In the following Section~\ref{subsec:recovery}, we will
argue formally that a valid
soft clustering is important
for the recovery of the CTMCs.
To obtain such a valid soft
clustering, we utilize
techniques developed
for learning mixtures of
discrete-time Markov chains
and use the simple fact that
for the discretized mixture
$\mathbf T(\tau) \coloneqq (e^{K^1 \tau}, \dots, e^{K^\ell \tau})$
holds
$\Pr[\dx \mid \mathbf K \cap \ell]
= \Pr[\dx \mid \mathbf T(\tau) \cap \ell]$
which allows us to calculate
\eqref{eq:10}.

We classify problem instances
according to
properties of the mixture
and the sampling process
(i.e. the values of $r$, $m$, and $\tau$)
into different
regimes that necessitate
different approaches
to learn the soft clustering.
Naturally, for shorter
trails, we require more
difference in the transition
processes between the CTMCs of the
mixture, to be able to discern the
trails.
%
%
%
For longer trails, we can
get away with less
difference per state.
However, it is important
that the difference
in the transition process
is reflected in
the discretized mixture
$\mathbf T(\tau)$.
For instance, 
if $\tau$ is chosen close to
the mixing time in the chain,
it is only possible to
differentiate trails if
the stationary distributions
are distinct.
We introduce the
following learning
regimes categorized by different
trail lengths $m$.

\subsubsection{Short to Medium Length}
\label{sssec:short-medium-len}

For trails that are short such as those of length three or of a fixed (i.e., constant) length, or of medium length 
where $\tau m \ll t_{\mathrm{mix}}(\mathbf K)$,  encountering a state with notably distinct transition probabilities across different chains is crucial. This is vital for effectively differentiating trails from various chains. Such a state is referred to as a model difference.  As the trail length transitions from short to medium, we anticipate an improvement in the quality of the soft assignment. 

For $m=3$ (the shortest length that
allows learning a mixture \cite{gupta2016mixtures}),
we require
a model difference in every state
and non-zero starting probabilities to
observe transitions from each state.
%
In this case, we can use
singular value decomposition (SVD)
based algorithms
\cite{gupta2016mixtures, spaeh2023casvd}
to learn a
mixture
from the discretized trails $\mathbf X$,
that aims to recover
the discretized mixture
$\mathbf T(\tau)$
via an estimate $\hat{\mathbf T}(\tau)$.
From there, we
derive the soft assignment
by setting
\begin{align}
    \label{eq:11}
    \hat a(\dx, \ell) \coloneqq 
    \frac{\Pr[\dx \mid \hat{\mathbf T}(\tau) \cap \ell]}{\sum_{\ell'} \Pr[\dx \mid \hat{\mathbf T}(\tau) \cap \ell']} .
\end{align}
The following theorem establishes
the recovery guarantee, and
is based on the complex algebraic
conditions that outline the model
difference as detailed in
\cite{spaeh2023casvd}.
\begin{theorem}
    \label{thm:very-long-trails}
    If the discretized mixture
    $\mathbf T(\tau)$
    fulfills the conditions
    of Theorem~1 in
    \cite{spaeh2023casvd}, we
    can recover $\mathbf T(\tau)$
    and therefore obtain
    a valid
    $\hat a(\mathbf x, \ell)$
    from the 3-trail
    distribution
    $p_{xyz} = 
     \Pr[x_0 = x
         \land x_\tau = y
         \land x_{2 \tau} = z]$
    for all triples
    of states $x, y, z \in [n]$.
\end{theorem}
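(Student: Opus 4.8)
The plan is to reduce Theorem~\ref{thm:very-long-trails} directly to the main recovery result of \cite{spaeh2023casvd} by observing that the discretized mixture $\mathbf T(\tau) = (e^{K^1\tau}, \dots, e^{K^L\tau})$ is itself an ordinary mixture of discrete-time Markov chains, so the only real work is (i) checking that the object fed into the SVD algorithm is exactly the $3$-trail distribution that \cite{spaeh2023casvd} consumes, and (ii) verifying that recovering $\mathbf T(\tau)$ (up to a permutation $\sigma$) suffices to produce a valid soft clustering in the sense of \eqref{eq:10}--\eqref{eq:11}. First I would argue that the continuous-time trail observed at times $0, \tau, 2\tau$ has joint law $p_{xyz} = s_x^{(\cdot)} \cdot (\text{mixture of } e^{K^\ell\tau})$, i.e. $p_{xyz} = \sum_\ell \sum_{x} \tilde s^\ell_x [e^{K^\ell\tau}]_{xy} [e^{K^\ell\tau}]_{yz}$ where $\tilde s^\ell$ is the appropriately normalized starting vector; this is immediate from the defining identity $T(\tau) = e^{K\tau}$ stated in Section~1 together with the Markov property, and it shows that the empirical $3$-trail statistics computed from $\mathbf X$ are unbiased estimates of the $3$-trail distribution of the discrete-time mixture $\mathbf T(\tau)$.

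Next I would invoke Theorem~1 of \cite{spaeh2023casvd} verbatim: under the hypothesis that $\mathbf T(\tau)$ satisfies its algebraic (model-difference / singular-value) conditions, the SVD-based algorithm applied to $p_{xyz}$ returns estimates $\hat{\mathbf T}(\tau)$ and $\hat{\tilde s}$ that, after a suitable permutation $\sigma \in S_L$, are close to the true $\mathbf T(\tau)$ and $\tilde s$. The remaining step is to convert this into validity of $\hat a$. Using the identity $\Pr[\dx \mid \mathbf K \cap \ell] = \Pr[\dx \mid \mathbf T(\tau) \cap \ell]$ highlighted in the text, the idealized soft assignment \eqref{eq:10} is a continuous function of the pair $(\mathbf T(\tau), \tilde s)$ through the likelihoods $\Pr[\dx \mid \mathbf T(\tau)\cap\ell] = \tilde s^\ell_{\dx_0}\prod_{i}[e^{K^\ell\tau}]_{\dx_i \dx_{i+1}}$ and the normalization in the denominator; substituting the estimates $\hat{\mathbf T}(\tau), \hat{\tilde s}$ into \eqref{eq:11} and applying a standard perturbation argument (the likelihoods are products of finitely many entries, each perturbed by at most the recovery error, and the denominator is bounded below because $\dx$ is a realizable trail) gives $\hat a(\dx, \ell) \approx a(\dx, \sigma(\ell))$ uniformly over $\mathbf x \in \mathbf X$, which is exactly the claimed approximate validity.

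The main obstacle I anticipate is not the probabilistic content but making the final perturbation step quantitatively honest: the denominator $\sum_{\ell'}\Pr[\dx \mid \hat{\mathbf T}(\tau)\cap\ell']$ can in principle be tiny for pathological trails, so one needs either a lower bound coming from the conditions of \cite{spaeh2023casvd} (which typically guarantee all relevant transition probabilities are bounded away from $0$ for the states and trails under consideration) or a restriction to trails of bounded length $m$ so that the product does not shrink too fast; I would handle this by carrying the length $m$ explicitly into the error bound and noting that for the short-trail regime of this subsection ($m$ constant, e.g. $m=3$) the bound is uniform. A secondary point is bookkeeping around the starting probabilities: \cite{spaeh2023casvd} recovers the $L$ chains together with their mixing weights and per-chain start vectors, and one must check that these are precisely the quantities appearing in \eqref{eq:10}, so that the permutation $\sigma$ obtained from the SVD step is simultaneously the permutation aligning the soft clusterings; this is a matter of matching notation rather than a genuine difficulty.
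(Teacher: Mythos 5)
Your proposal takes essentially the same route as the paper, which treats this theorem as a direct reduction: the discretized observations at times $0,\tau,2\tau$ are exactly 3-trails of the discrete-time mixture $\mathbf T(\tau)=(e^{K^1\tau},\dots,e^{K^L\tau})$, so Theorem~1 of \cite{spaeh2023casvd} recovers $\mathbf T(\tau)$ up to permutation and plugging the result into \eqref{eq:11} yields the valid soft clustering. Note only that your worry about perturbation bounds and small denominators is unnecessary for the statement as given, since it assumes access to the \emph{exact} 3-trail distribution $p_{xyz}$, so the recovery is exact and $\hat a(\mathbf x,\ell)=a(\mathbf x,\sigma(\ell))$ holds exactly; sampling error is handled separately in the paper's remark following the theorem.
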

The algorithm of
\cite{spaeh2023casvd}
requires time
$O(n^5 + n^3 L^3 + L^{\mathrm{cc}})$
where $\mathrm{cc}$ is the
number of connected
components in the
mixture.
In practice,  we do not have access to the exact 3-trail distribution, but can estimate it from the transitions.
Utilizing Chernoff bounds, it can be demonstrated that  $O(n^3 \log n / \epsilon^2 )$ transitions suffice to estimate
this distribution up to $\pm \epsilon$~\cite{gupta2016mixtures}. For trails of length $m > 3$,
we use expectation maximization to learn
an estimate $\hat {\mathbf T}(\tau)$.
As in the case $m=3$, we obtain a soft
assignment from \eqref{eq:11}.
This works well in practice
especially when the number of
transitions is low, but is merely
a heuristic as convergence guarantees
of expectation for mixtures
of Markov chains are not known.

 

\subsubsection{Long Length}
\label{sssec:long-len}
If trails are sufficiently long,
we are able cluster them directly
as in
\cite{kausik2023mdps}.
Intuitively, if $\tau m \gg t_{\mathrm{mix}}(\mathbf K)$
and if the stationary distributions
are all different, we are
able to cluster the trails just by counting
the number of visits to each state.
Formally,
let $\alpha$ and $\Delta$ be such that
for all pairs of distinct chains
$K, K' \in \{K^1, \dots, K^L\}$
there exists a state $y$
such that
$\pi_K(y), \pi_{K'}(y) \ge \alpha$
and
$\| K_y - K'_y \|_2 \ge
\frac 1 \tau \Delta + 8 \tau (1 + K_{\max}^2)$.
That is, the state $y$ is visited
sufficiently often and witnesses
a model difference.
We use the algorithm of \cite{kausik2023mdps}
to obtain a clustering of the trails
which we directly use for the assignment $\hat a(\mathbf x, \ell)$.
We note that the obtained clustering
is hard, due to the long length of the trails. By Lemma~\ref{lem:kausik} in Appendix~\ref{subsec:long-trails} and \cite[Theorem 1]{kausik2023mdps}, we obtain the following result stated as a theorem: 
\begin{theorem}
    \label{thm:long-trails}
    If we have $r = \Omega(n^2 L^2 /
      \mathrm{poly}(\Delta, \alpha))$
    trails of length
    \[
        m = \Omega\left(L^{1.5} t_{\mathrm{mix}}
        \frac{\mathrm{polylog}(r)}{\mathrm{poly}(\Delta, \alpha)}
        \right)
    \]
    we 
    obtain a valid soft clustering $\hat a(\mathbf x, \ell)$
    with high probability.
\end{theorem}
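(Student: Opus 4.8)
The plan is to reduce Theorem~\ref{thm:long-trails} to the clustering guarantee of \cite[Theorem~1]{kausik2023mdps}, which operates on discrete-time trails, by verifying that the discretized mixture $\mathbf T(\tau) = (e^{K^1\tau}, \dots, e^{K^L\tau})$ inherits the two properties that algorithm needs: (i) each pair of distinct chains is separated in some coordinate that (ii) is visited a constant fraction of the time. Concretely, I would first observe that a trail of length $m$ sampled at rate $\tau$ from $K^\ell$ is exactly a trail of length $m$ from the discrete-time chain $e^{K^\ell\tau}$, so it suffices to establish the hypotheses of \cite{kausik2023mdps} for the discrete mixture with mixing time $t_{\mathrm{mix}}(\mathbf T(\tau)) = \lceil t_{\mathrm{mix}}(\mathbf K)/\tau\rceil$ (the identity noted at the end of the Definitions section).

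The two ingredients are handled as follows. For the stationary-mass condition, the CTMC $K^\ell$ and its $\tau$-skeleton $e^{K^\ell\tau}$ share the same stationary distribution $\pi_{K^\ell}$, so the assumption $\pi_K(y),\pi_{K'}(y)\ge\alpha$ transfers verbatim. For the separation condition, I need to convert the separation $\|K_y - K'_y\|_2 \ge \frac1\tau\Delta + 8\tau(1+K_{\max}^2)$ in rate space into a separation $\|e^{K\tau}_y - e^{K'\tau}_y\|_2 \ge \Delta$ in transition-probability space. Since $e^{K\tau} = I + \tau K + O(\tau^2 K_{\max}^2)$ entrywise, a first-order Taylor expansion of the matrix exponential gives $e^{K\tau}_y - e^{K'\tau}_y = \tau(K_y - K'_y) + R$ where the remainder $R$ is bounded in $\ell_2$ by something like $4\tau^2(1+K_{\max}^2)$ per row (using $\|K_y\|_1 \le 2K_{\max}$ and the standard bound $\|e^A - I - A\|\le \|A\|^2 e^{\|A\|}$); the slack term $8\tau(1+K_{\max}^2)$ in the hypothesis is precisely what absorbs $\|R\|_2$ after multiplying through by $\tau$, leaving $\|e^{K\tau}_y - e^{K'\tau}_y\|_2 \ge \Delta$. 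This is the content of Lemma~\ref{lem:kausik} referenced in the statement. Plugging $t_{\mathrm{mix}}(\mathbf T(\tau))$ and the separation $\Delta$, stationary mass $\alpha$ into \cite[Theorem~1]{kausik2023mdps} yields the stated bounds on $r$ and $m$ (the $m$ bound from \cite{kausik2023mdps} is in units of discrete steps, i.e.\ multiples of $t_{\mathrm{mix}}(\mathbf T(\tau)) = t_{\mathrm{mix}}(\mathbf K)/\tau$ steps, which after rescaling by $\tau$ gives the $t_{\mathrm{mix}} = t_{\mathrm{mix}}(\mathbf K)$ appearing in the theorem). Finally, the hard clustering output by \cite{kausik2023mdps} is turned into a soft assignment $\hat a(\mathbf x,\ell)$ by setting $\hat a(\mathbf x,\ell) = 1$ if $\mathbf x$ is placed in cluster $\ell$ and $0$ otherwise; since the clustering is correct up to a permutation with high probability, this trivially matches the definition of a valid soft clustering (Equation~\eqref{eq:10}) where the true posteriors $a(\mathbf x,\sigma(\ell))$ are themselves essentially $\{0,1\}$-valued for trails this long.

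The main obstacle I anticipate is the perturbation bound converting rate-space separation to skeleton separation while keeping the error terms explicit and matching the exact constant ($8\tau(1+K_{\max}^2)$) in the hypothesis — one has to be careful that the bound on $\|e^{K\tau}_y - e^{K'\tau}_y - \tau(K_y-K'_y)\|_2$ is controlled by the \emph{maximum} rate $K_{\max}$ uniformly over both chains and that the $\tau^2$ remainder, once divided by the $\tau$ that relates the two separations, indeed collapses into the prescribed slack; a secondary subtlety is checking that $\tau$ is small enough (which it is, since $\tau$ is chosen $\ll 1/K_{\max}$ as in Section~\ref{subsec:discretization}) for the linearization to be in its valid regime and for $e^{K\tau}$ to be a bona fide stochastic matrix with the claimed mixing time.
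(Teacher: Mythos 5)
Your proposal follows essentially the same route as the paper: reduce to \cite[Theorem~1]{kausik2023mdps} on the discretized mixture $\mathbf T(\tau)$, use the fact that the $\tau$-skeleton shares the stationary distribution, and transfer the rate-space separation to $\| e^{K\tau}_y - e^{K'\tau}_y \|_2 \ge \Delta$ via a first-order expansion of the matrix exponential with a second-order remainder absorbed by the $8\tau(1+K_{\max}^2)$ slack, which is exactly the content and proof of Lemma~\ref{lem:kausik}. The only cosmetic difference is that you bound the Taylor remainder with the standard operator-norm series estimate while the paper bounds it via Gershgorin-type eigenvalue considerations; otherwise the arguments coincide.
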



Using the method
of \cite{kausik2023mdps}
requires $O(n^3 + r^2 n^2)$ time.

\subsubsection{Very Long Length}

We are able to learn the discretized chain $T(\tau)$
from only a single trail in
$\mathbf X^\ell$ for any $\ell \in [L]$
and we can obtain $\hat a(\mathbf x, \ell)$
as in \eqref{eq:11}, if the trail
is long enough.
The following theorem establishes
the length of such a trail
subject to
$t_{\mathrm{mix}}(\mathbf K)$,
the maximum mixing time in any chain,
and $\pi_{\mathrm{min}} \coloneqq
\min_{\ell, y} \pi_{K^\ell}(y)$.

%
\begin{theorem}
    \label{thm:very-long-trails}
    If $\sum_{y = 1}^n s_i^\ell = \Omega(1 / L)$ and
    we have $r = \Omega(L \log L)$ trails of length
    \[
        m = \Omega\left( \frac 1 {\pi_{\min}}
            \left( \frac n {\epsilon^2} + \frac {t_{\mathrm{mix}}(\mathbf K)}{\tau} \right)
            \log \frac n {\pi_{\mathrm{min}}}
        \right),
    \]
    then we can learn
    $\mathbf T(\tau)$ with recovery
    error at most $\epsilon$ with high probability.
\end{theorem}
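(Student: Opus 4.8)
The plan is to split the argument into (i) a coverage step ensuring every chain contributes at least one trail, and (ii) a single-trail estimation step showing that one trail of length $m$ from chain $K^\ell$ already determines $T^\ell(\tau)=e^{K^\ell\tau}$ up to row-wise total variation $\epsilon$. For (i), note that $\sum_{y}s^\ell_y$ is exactly the probability that a given trail is generated by chain $\ell$, so by hypothesis it is $\Omega(1/L)$; since the $r$ trails are independent, chain $\ell$ is missed by all of them with probability at most $(1-\Omega(1/L))^r\le e^{-\Omega(r/L)}$, and a union bound over the $L$ chains shows that $r=\Omega(L\log L)$ suffices for all chains to be represented with high probability.

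For (ii), fix a trail $\mathbf x=(x_0,\dots,x_{m-1})$ from $K^\ell$ and abbreviate $P=T^\ell(\tau)$, $\pi=\pi_{K^\ell}$, and $t=t_{\mathrm{mix}}(T^\ell(\tau))=\lceil t_{\mathrm{mix}}(K^\ell)/\tau\rceil\le\lceil t_{\mathrm{mix}}(\mathbf K)/\tau\rceil$. Fix a state $y$ and let $N_y=|\{0\le i<m-1:x_i=y\}|$. Applying a Chernoff--Hoeffding bound for Markov chains \cite{chung2012chernoffhoeffding} to the indicator $i\mapsto\mathbf 1[x_i=y]$, whose stationary mean is $\pi(y)$, and using that the assumed length $m=\Omega\!\big(\pi_{\min}^{-1}(t+n\epsilon^{-2})\log(n/\pi_{\min})\big)$ forces both $m\gg t$ and $m\pi(y)/t=\Omega(\log(n/\pi_{\min}))$, we get $N_y\ge\tfrac12 m\pi(y)=\Omega\!\big((n/\epsilon^2)\log(n/\pi_{\min})\big)$ with failure probability an arbitrarily small inverse polynomial in $n/\pi_{\min}$; a union bound over the $n$ states and the $r$ trails keeps this a high-probability event. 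The burn-in term $t_{\mathrm{mix}}(\mathbf K)/\tau$ in $m$ is precisely what lets us apply the Markov Chernoff bound from the non-stationary start and conclude $N_y=\Theta(m\pi(y))$.

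Next, condition on the visit times $i_1<\cdots<i_{N_y}$ at which $\mathbf x$ is in $y$. By the strong Markov property the successor states $x_{i_1+1},\dots,x_{i_{N_y}+1}$ are i.i.d.\ draws from the row $P_y$: each $i_j$ is a stopping time with $x_{i_j}=y$, so $x_{i_j+1}\sim P_y$ independently of the path up to $i_j$. Hence the empirical row $\hat P_y$ formed from these samples satisfies $\|\hat P_y-P_y\|_{\mathrm{TV}}\le\epsilon$ with high probability, since $\mathbb E\|\hat P_y-P_y\|_1\le\sqrt{n/N_y}$ by Cauchy--Schwarz and $\|\hat P_y-P_y\|_1$ has bounded differences $2/N_y$ in the $N_y$ samples, so McDiarmid's inequality turns the mean bound into a tail bound; the value $N_y=\Omega\!\big((n/\epsilon^2)\log(n/\pi_{\min})\big)$ is exactly what makes $\sqrt{n/N_y}+\sqrt{\mathrm{polylog}/N_y}\le\epsilon$. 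Running this for every $y$ yields an estimate $\hat T^\ell(\tau)$ within (average, indeed row-wise) TV error $\epsilon$ of $T^\ell(\tau)$. Combining with (i): with high probability we obtain, for each trail, an $\epsilon$-accurate estimate of its source chain, and every chain is represented; a simple merge --- cluster estimates that lie within $2\epsilon$, keep one representative per cluster, pad to $L$ matrices --- gives $\hat{\mathbf T}(\tau)$ matching $\mathbf T(\tau)$ up to a permutation with recovery error $O(\epsilon)$, and rescaling $\epsilon$ by a constant finishes the argument; the soft assignment $\hat a$ of \eqref{eq:11} then follows.

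I expect step (ii) to be the crux, and within it the sharp $n/\epsilon^2$ rate: estimating the row $P_y$ coordinate-by-coordinate from Markov--Chernoff tail bounds on the counts of $y\to z$ transitions would cost an extra factor of order $n/\pi_{\min}$, so it is essential to first reduce to the i.i.d.\ successor sequence via the strong Markov property and only then apply a tight empirical-distribution estimate. The remaining bookkeeping --- the Markov-chain Chernoff bound from a non-stationary start, and the clustering/padding to exactly $L$ matrices without a separation assumption --- is routine but must be handled with the constants in mind.
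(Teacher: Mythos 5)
Your proposal is correct and follows essentially the same route as the paper: a Chernoff--Hoeffding bound for Markov chains \cite{chung2012chernoffhoeffding} to guarantee $\Omega\big((n/\epsilon^2)\log(n/\pi_{\mathrm{min}})\big)$ visits to every state, row-by-row estimation of $e^{K^\ell\tau}$ from the exits out of each state, and a coupon-collector argument showing $r=\Omega(L\log L)$ trails hit every chain. The only (immaterial) difference is the empirical-distribution tail bound --- you use Cauchy--Schwarz plus McDiarmid where the paper applies a Hoeffding bound union-bounded over all $2^n$ subsets --- both yielding the same $n/\epsilon^2$ rate.
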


\subsection{Recovery}
\label{subsec:recovery}
We now show how to recover
the individual chains in the
mixture, given the discretized
trails $\mathbf X$ and
the valid soft clustering
$\hat a(\mathbf x, \ell)$
which---depending on the chosen method
in the previous section---is
equal to or approximates $a(\mathbf x, \ell)$
up to permutation of $\ell$.
We approach this via
a Maximum Likelihood Estimation (MLE)
given $a(\mathbf x, \ell)$,
which means we
want to find
$\tilde{\mathbf K} = (\tilde K^1, \dots, \tilde K^L)$
to maximize the likelihood
of observing $\mathbf X$
under knowledge of the posteriors
$\Pr[\mathbf x \mid \mathbf K \cap \ell]$
for each $\mathbf x \in \mathbf X$.
By the law of total probability,
we can compute the probability of generating
$\mathbf X$ from $\tilde{\mathbf K}$ as
\begin{align}
    \label{eq:5}
    \Pr[\mathbf X \mid \tilde{\mathbf K}] =
    \prod_{\mathbf x \in \mathbf X} \Pr[\mathbf x \mid \tilde{\mathbf K}] =
    \prod_{\mathbf x \in \mathbf X} \sum_{\ell=1}^L
        \Pr[\ell] \cdot \Pr[\mathbf x \mid \tilde{\mathbf K} \cap \ell] 
\end{align}
To use the soft clustering and our
approximate knowledge of the posterior
$\Pr[\mathbf x \mid \mathbf K \cap \ell]$,
we try to maximize
the correlation instead of \eqref{eq:5}:
\begin{align}
    \label{eq:6}
    \prod_{\mathbf x \in \mathbf X}
    \sum_{\ell=1}^L \Pr[\ell] \cdot
        \Pr[\mathbf x \mid \tilde{\mathbf K} \cap \ell] \cdot
        \underbrace{\frac{\Pr[\mathbf x \mid \mathbf K \cap \ell]}{
        \sum_{\ell'} \Pr[\mathbf x \mid \mathbf K \cap \ell']}}_{= a(\mathbf x, \ell)}
\end{align}
The mixture $\tilde{\mathbf K}$ found by
maximizing \eqref{eq:6} serves as an approximation
to the maximizer of \eqref{eq:5}, whose
quality improves with the certainty of
the soft clustering $a(\mathbf x, \ell)$.
However, even maximizing \eqref{eq:6} is
difficult as we cannot optimize
chains individually but have to consider
their effect on the sample probability 
of each $\mathbf x$. Thus, instead of
maximizing the arithmetic mean
$\sum_{\ell=1}^L \Pr[\ell] \cdot
 \Pr[\mathbf x \mid \tilde{\mathbf K} \cap \ell]
 \cdot a(\mathbf x, \ell)$,
we consider the geometric mean\footnote{To shed light on the 
technical nuances, envision randomly picking from a mixture of two coins, each having success probabilities of \( p \) and \( q \) respectively. The resulting success probability becomes \( \frac{p+q}{2} \). Rather than maximizing the true likelihood, we maximize $p^{1/2}q^{1/2}$, which serves as a lower limit.}
\[
    \prod_{\ell=1}^L \Pr[\ell] \cdot
    \Pr[\mathbf x \mid \tilde{\mathbf K} \cap \ell]^{a(\mathbf x, \ell)}.
\]
Using this approximation, we can
rewrite \eqref{eq:6} as
\begin{align}
    \label{eq:7}
    \prod_{\mathbf x \in \mathbf X}
    \prod_{\ell=1}^L
    \Pr[\ell] \cdot
    \Pr[\mathbf x \mid \tilde{\mathbf K} \cap \ell]^{a(\mathbf x, \ell)}
    = 
    \prod_{\ell=1}^L
    \Pr[\ell]^{|\mathbf X|}
    \prod_{\mathbf x \in \mathbf X}
    \Pr[\mathbf x \mid \tilde{\mathbf K} \cap \ell]^{a(\mathbf x, \ell)}
\end{align}
In particular, we can now
optimize each chain $\ell$
individually by maximizing
the corresponding term in the RHS
of \eqref{eq:7}.
It remains to show that
\eqref{eq:7} is a good
approximation for \eqref{eq:6}.
Clearly,
when $a(\mathbf x, \ell) \to 1$
for some chain $\ell$, the
two terms also approach
equality. However, we
can even show that the terms
are close when
the entropy of $a(\dx, \ell)$
for a fixed trail $\bf x$
is high:
\begin{theorem}
    \label{thm:amgm}
    For each $\mathbf x \in \mathbf X$,
    \begin{align*}
        \prod_{\ell=1}^L
        \Pr[\mathbf x \mid \mathbf K \cap \ell]^{a (\mathbf x, \ell)}
        \le
        \sum_{\ell=1}^L
        a (\mathbf x, \ell) \cdot \Pr[\mathbf x \mid \mathbf K \cap \ell]
        \le L \cdot (\max_\ell a(\mathbf x, \ell)) \cdot
        \prod_{\ell=1}^L
        \Pr[\mathbf x \mid \mathbf K \cap \ell]^{a (\mathbf x, \ell)} .
    \end{align*}
\end{theorem}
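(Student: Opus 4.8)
The claim is a sandwich with three terms: a weighted geometric mean, a weighted arithmetic mean, and $L\cdot(\max_\ell a(\mathbf x,\ell))$ times the geometric mean again — all with the same weights $a(\mathbf x,\ell)$ which sum to $1$ over $\ell$ for each fixed $\mathbf x$. The left inequality is exactly the weighted AM--GM inequality, so nothing needs doing there beyond invoking it: for nonnegative reals $u_\ell = \Pr[\mathbf x\mid\mathbf K\cap\ell]$ and weights $w_\ell = a(\mathbf x,\ell)\ge 0$ with $\sum_\ell w_\ell = 1$, one has $\prod_\ell u_\ell^{w_\ell}\le\sum_\ell w_\ell u_\ell$. (One should note the weights indeed sum to $1$: from the definition~\eqref{eq:10}, $\sum_\ell a(\mathbf x,\ell)=\sum_\ell\Pr[\mathbf x\mid\mathbf K\cap\ell]/\sum_{\ell'}\Pr[\mathbf x\mid\mathbf K\cap\ell']=1$.)

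For the right inequality, the strategy is to bound the arithmetic mean from above by the largest term and then bound that largest term by the geometric mean. First, $\sum_{\ell=1}^L a(\mathbf x,\ell)\,\Pr[\mathbf x\mid\mathbf K\cap\ell]\le L\cdot\max_\ell\big(a(\mathbf x,\ell)\,\Pr[\mathbf x\mid\mathbf K\cap\ell]\big)$, since a sum of $L$ nonnegative terms is at most $L$ times the maximum. It remains to show that for \emph{each} $\ell$, $a(\mathbf x,\ell)\,\Pr[\mathbf x\mid\mathbf K\cap\ell]\le(\max_{\ell'}a(\mathbf x,\ell'))\cdot\prod_{\ell'}\Pr[\mathbf x\mid\mathbf K\cap\ell']^{a(\mathbf x,\ell')}$. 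Here I would use the definition of $a$: write $Z=\sum_{\ell'}\Pr[\mathbf x\mid\mathbf K\cap\ell']$, so $a(\mathbf x,\ell)=\Pr[\mathbf x\mid\mathbf K\cap\ell]/Z$, hence $a(\mathbf x,\ell)\,\Pr[\mathbf x\mid\mathbf K\cap\ell]=Z\,a(\mathbf x,\ell)^2$. More usefully, observe that $\Pr[\mathbf x\mid\mathbf K\cap\ell]=Z\,a(\mathbf x,\ell)$, so the geometric mean equals $\prod_{\ell'}(Z\,a(\mathbf x,\ell'))^{a(\mathbf x,\ell')}=Z\cdot\prod_{\ell'}a(\mathbf x,\ell')^{a(\mathbf x,\ell')}$ (using $\sum_{\ell'}a(\mathbf x,\ell')=1$). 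The desired per-$\ell$ inequality thus reduces to $a(\mathbf x,\ell)^2\le(\max_{\ell'}a(\mathbf x,\ell'))\cdot\prod_{\ell'}a(\mathbf x,\ell')^{a(\mathbf x,\ell')}$, and since $a(\mathbf x,\ell)\le\max_{\ell'}a(\mathbf x,\ell')$ it suffices to show $a(\mathbf x,\ell)\le\prod_{\ell'}a(\mathbf x,\ell')^{a(\mathbf x,\ell')}$ — but wait, this is false in general (the geometric mean of the $a$'s is typically \emph{smaller} than any individual one). So the reduction must be done more carefully: keep one factor of $\max_{\ell'}a(\mathbf x,\ell')$ in reserve and instead bound $a(\mathbf x,\ell)\le 1$, reducing to $a(\mathbf x,\ell)\le\max_{\ell'}a(\mathbf x,\ell')$, which is immediate. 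Concretely: $a(\mathbf x,\ell)\,\Pr[\mathbf x\mid\mathbf K\cap\ell]=Z\,a(\mathbf x,\ell)^2\le Z\,a(\mathbf x,\ell)\cdot\max_{\ell'}a(\mathbf x,\ell')$, and I want this $\le\max_{\ell'}a(\mathbf x,\ell')\cdot Z\prod_{\ell'}a(\mathbf x,\ell')^{a(\mathbf x,\ell')}$, i.e. $a(\mathbf x,\ell)\le\prod_{\ell'}a(\mathbf x,\ell')^{a(\mathbf x,\ell')}$ — still the false direction.

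**The real obstacle and the fix.** The difficulty is exactly this: the naive reductions push toward the false inequality "an individual weight is below the geometric mean of all weights." The correct route is to \emph{not} pass through $\max(a\cdot\Pr)$ termwise but to bound the whole arithmetic sum against the geometric mean directly. The key identity is $\sum_{\ell}a(\mathbf x,\ell)\,\Pr[\mathbf x\mid\mathbf K\cap\ell]=Z\sum_\ell a(\mathbf x,\ell)^2$ and the geometric mean equals $Z\prod_\ell a(\mathbf x,\ell)^{a(\mathbf x,\ell)}$, so the claim becomes $\sum_\ell a(\mathbf x,\ell)^2\le L\,(\max_\ell a(\mathbf x,\ell))\prod_\ell a(\mathbf x,\ell)^{a(\mathbf x,\ell)}$. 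Now bound $\sum_\ell a(\mathbf x,\ell)^2\le(\max_\ell a(\mathbf x,\ell))\sum_\ell a(\mathbf x,\ell)=\max_\ell a(\mathbf x,\ell)$, so it suffices that $1\le L\prod_\ell a(\mathbf x,\ell)^{a(\mathbf x,\ell)}$, equivalently $\prod_\ell a(\mathbf x,\ell)^{a(\mathbf x,\ell)}\ge 1/L$, equivalently (taking $-\log$) the Shannon entropy $H(a(\mathbf x,\cdot))=-\sum_\ell a(\mathbf x,\ell)\log a(\mathbf x,\ell)\le\log L$. That is the standard bound: entropy of a distribution on $L$ points is at most $\log L$, with equality at the uniform distribution. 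So the proof is: (i) invoke weighted AM--GM for the left inequality; (ii) for the right, rewrite both sides using $\Pr[\mathbf x\mid\mathbf K\cap\ell]=Z\,a(\mathbf x,\ell)$ with $Z=\sum_{\ell'}\Pr[\mathbf x\mid\mathbf K\cap\ell']$ and $\sum_\ell a(\mathbf x,\ell)=1$; (iii) bound $\sum_\ell a(\mathbf x,\ell)^2\le\max_\ell a(\mathbf x,\ell)$; (iv) conclude via $H(a(\mathbf x,\cdot))\le\log L$. The remark in the footnote about entropy being high is precisely step (iv) read backwards — when the entropy is high the slack $L\prod a^a$ in the right inequality is close to $1$, matching the stated "the terms are close when the entropy is high." I'd also note the degenerate case $\Pr[\mathbf x\mid\mathbf K\cap\ell]=0$ for some $\ell$ (then $a=0$ there and $0^0=1$ by convention, and all identities still go through on the support).
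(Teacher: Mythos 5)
Your proposal is correct and follows essentially the same route as the paper: normalize so that the probabilities coincide with the weights $a(\mathbf x,\ell)$ (the paper assumes WLOG $\sum_\ell \Pr[\mathbf x\mid\mathbf K\cap\ell]=1$, you factor out $Z$), reducing the right inequality to $\sum_\ell a_\ell^2 \le L(\max_\ell a_\ell)\prod_\ell a_\ell^{a_\ell}$, which you split into $\sum_\ell a_\ell^2\le\max_\ell a_\ell$ and $\prod_\ell a_\ell^{a_\ell}\ge 1/L$. These are exactly the paper's \Renyi-entropy bounds $H_2\ge H_\infty$ and $H_1\le\log_2 L$ written in elementary form, and the left inequality is the same weighted AM--GM invocation.
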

%
%
Note that the above is tight
whenever $a(\mathbf x, \ell)$ is
uniform, over all chains $\ell \in [L]$.
This shows that our approximation
is good, for high and low entropy.
We also establish the merit of
this approximation experimentally
in Section~\ref{sec:exp}. Given the soft clustering,
we can thus use an MLE
to learn the individual chains
and their starting probabilities.
Specifically, we adapt
the iterative heuristic
introduced by \cite{mcgibbon2015mle}
to use soft assignments.
As an iterative heuristic,
the MLE of \cite{mcgibbon2015mle}
does not provide any convergence
guarantees, but performs
well in practice.
The MLE step requires
$O(n^3)$ time per iteration and
per chain
as well as scanning through
each trail to
pre-compute the transition
counts $c^\ell_y$, which requires
$O(r m)$ time.

\subsection{Customizing the Algorithmic Framework} 
\label{ssec:proposed-custom}


After presenting the three phases of our algorithmic framework for learning mixtures of CTMCs, we now describe three practical implementations that demonstrate both real-world efficiency, as discussed in Section~\ref{sec:exp}, and adherence to the previously mentioned theoretical guarantees.

\begin{itemize}
\item \methodsvd:
The  SVD-based algorithm as referenced in~\cite{gupta2016mixtures,spaeh2023casvd} is employed to learn mixtures of discrete-time Markov chains, leading to the soft clustering detailed in Section~\ref{sssec:short-medium-len}.
Given that SVD-based techniques are tailored for discrete-time
chains of length 3, we subdivide each discretized trail
into segments of this length prior to the clustering phase. 

\item \methodem: In lieu of the SVD-based algorithm, we use
expectation maximization to learn a mixture
of discrete-time Markov chains.

\item \methodkausik: We exclusively employ spectral clustering solely for the assignment step, as outlined in \cite{kausik2023mdps}, with the underlying algorithm being credited to Vempala and Wang in their work~\cite{vempala2004spectral}.  This method is elaborated on  in Section~\ref{sssec:long-len}. It is noteworthy that this method ensures hard clustering, as per its algorithmic design. 
\end{itemize}


\section{Experimental Evaluation} 
\label{sec:exp}


In our experimental analysis, we aim
to answer the following key questions
with
experiments on synthetic data:

\begin{enumerate}
\item What are the practical boundaries of the problem regimes, and how do different soft clusterings impact the performance of the algorithm?  
We investigate this in Figure~\ref{fig:error-trail-length} by varying
the trail length.
         
\item How accurate is the soft clustering and how much
of the recovery error is attributed to error in the clustering?
We examine this by monitoring the clustering error over varying trail length
$m$ in Figure~\ref{fig:error-isolate}.
In Figure~\ref{fig:error_n_observations}, we show the recovery error while maintaining a constant number of transitions
(i.e.,  $r \cdot m$ is constant); if we treat the number of transitions as a constant, the recovery error only depends on the error in the soft clustering.
        
\item  How much error is attributed to the  recovery? 
Figure~\ref{fig:error-tau} shows the recovery error across various values of~$\tau$.
\end{enumerate}

We also apply our algorithms on two real-world scenarios, user trails on Last.fm and an NBA passing data set (public yet proprietary dataset obtained from Second Spectrum player tracking).  Our results suggest that no single method is universally superior; however, \methodem consistently performs well across various trail lengths, while \methodkausik excels with extended trail lengths at a higher computational cost.


\subsection{Experimental Setup}


\spara{Synthetic Data}
We construct an underlying mixture of $L$
CTMCs as follows:
for every chain $\ell \in [L]$,
we randomly select a uniform rate  matrix $K^\ell$ from the set of all rate matrices
$K \in \R^{n \times n}$ with
entries $K_{yz} \in [0, 1]$ for all distinct states $y,z$.
Additionally, we randomly determine 
the starting
probabilities $s^\ell \in \R^n$, drawing 
uniformly from the set of starting probabilities
that sum up to $1$ over $\ell \in [L]$ and $y \in [n]$.
%
We sample $r$ continuous-time trails from the mixture according to the stochastic process described in Section~\ref{sec:intro}.
By monitoring the CTMC at consistent time intervals of $\tau$, we obtain discretized
trails $\mathbf x = (x_{i \tau})_{0 \le i < m}$.
In this process, only the first $m$ observations are retained.

\spara{Last.fm dataset}
We obtain user trails from the
Lastfm-1k dataset which can be accessed via the provided link~\cite{zenodo}.
This dataset captures users' music listening history over three years, detailing each track played with associated 
user information,
song title, and   timestamp. 
For our study, we interpret a continuous sequence of songs listened to by a user as a single trail, provided that there are no interruptions exceeding 15 minutes. To streamline our data, we limit the states to the 10 most frequently listened songs in the dataset and focus on users boasting the highest trail count.
This results in a total of 2763 continuous-time trails with an average of 9 minutes
of listening history.
We then convert the trails using a time frame of $\tau=10$ seconds.
Following this, the converted discrete trails are segmented into smaller trails, each with a length of 10. 
We obtain 13\,615 discrete-time trails.

\spara{NBA Dataset}
The NBA dataset from Second Spectrum  archives every pass executed during NBA basketball games for the 2022 and 2023 seasons.   Each documented pass is linked with a specific offensive opportunity and is marked with the time it was made, as well as 
the passer and the receiver. In this context, an opportunity refers to a continuous duration when a team possesses the ball. This record also mentions the team on the offense and the points they score during the possession.   It is important to note that in NBA rules (rule no 7), a team's opportunity to score is constrained to 24 seconds due to the shot clock regulation. This comprehensive dataset 
consists of 
1\,433\,788 passes made within 535\,351  opportunities  spanning 2\,460 games.

To dissect the data for each team during the two seasons, we employ the following approach. We designate a state for each of the top 12 players who have the ball for the longest durations.  In addition, we introduce
two special distinct states \hit
and \miss. These states signify whether an opportunity culminated in the offensive team scoring or failing to score, respectively.  
The continuous-time trail begins at the state that corresponds to the player who receives the opening pass.
The subsequent state is the receiving
player of the next pass, and
the transition time mirrors the time lapse between the two passes. 
When an opportunity wraps up, the trail concludes (i.e., is absorbed in terms of Markov chains) in the $\hit$ or $\miss$ state, contingent on the scoring outcome.    
Our analysis only includes opportunities that span beyond 5 seconds and involve a minimum of 3 passes. To ensure balance, we exclude any surplus of $\hit$ and $\miss$ opportunities.
We use trails whose total time duration is between 10 and 20 seconds. 
This leaves us with a total of 3850 trails per team on average.

\spara{Algorithms}
%
We use the three algorithms elucidated in   Section~\ref{ssec:proposed-custom}:
\methodsvd, \methodkausik,
and \methodem.
For \methodem, we limit the discrete-time expectation maximization 
algorithm to a maximum  of 100 iterations, typically ensuring adequate convergence. 
For synthetic data, we discretize with
$\tau=0.1$ unless otherwise specified.

In the second phase, when the objective is to recover a CTMC for a cluster of trails, we invariably opt for an approximation approach to the
maximum likelihood estimator
as proposed by \cite{mcgibbon2015mle} for a single chain.  Given our CTMC mixture scenario's choice of soft cluster assignments, we have modified the method to cater for a weighted set of trails, as discussed in Section~\ref{sec:proposed}.
For comparative analysis, we use  continuous-time expectation
maximization \methodemcont that is given access to  an initial span of $\tau \cdot m$  time for each continuous-time trail.
We also attempted to train  a mixture
of CTMCs by employing the Python library \texttt{HMMs}
on continuous-time hidden Markov models. Nevertheless, even with modest examples (such as when $L = 2, n = 5$), these attempts failed to reach a coherent mixture. Therefore, we omit these results from our presentation.

\spara{Evaluation Metrics} We evaluate the quality of the mixtures we have acquired using the recovery error, detailed in Section~\ref{sec:intro}. To gauge the effectiveness of the soft clustering, we present the \emph{clustering error} as:

\[
    \textrm{clustering-error}(a, a_{\mathrm{gt}})
    \coloneqq
    \frac 1 { 2 |\mathbf X|}
    \min_{\sigma \in S_L}
    \sum_{\ell = 1}^L
    \sum_{\mathbf x \in \mathbf X}
    |a(\mathbf x, \ell) -
     a_{\mathrm{gt}}(\mathbf x, \sigma(\ell))|
\]
In this context, $a$ is the soft clustering derived from our algorithms, while $a_{\mathrm{gt}}$ signifies the ground truth. Specifically, $a_{\mathrm{gt}}$ ($\mathbf x, \ell) = 1$
if $\mathbf x \in \mathbf X^\ell$ and $0$, otherwise.

%

\spara{Machine specs}
We developed our software using Python~3 and executed it on a system powered by a 2.9 GHz Intel Xeon Gold 6226R processor, equipped with 384GB RAM.

\subsection{Synthetic Experiments}

To highlight the differences in our algorithms, we study situations with different trail lengths and numbers of transitions.
In all experiments, we run each algorithm 5 times and report mean and standard deviation.

\begin{figure}
    \centering
    \includegraphics[width=0.9\linewidth]{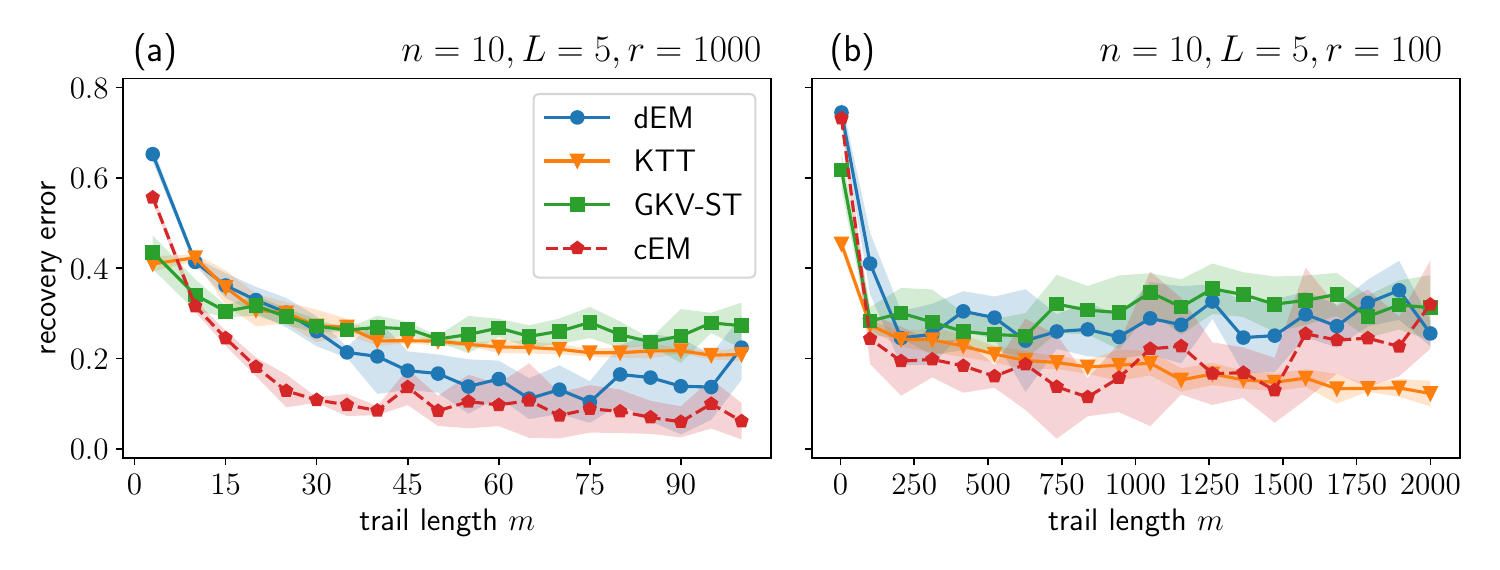}
    \figspace
    \caption{
    Recovery error across different trail lengths: The plot illustrates two distinct scenarios: (a) A large number of transitions with shorter trails, and (b) a small number of transitions with long trails.}
    \label{fig:error-trail-length}

    \bigskip

    \centering
    \includegraphics[width=0.9\linewidth]{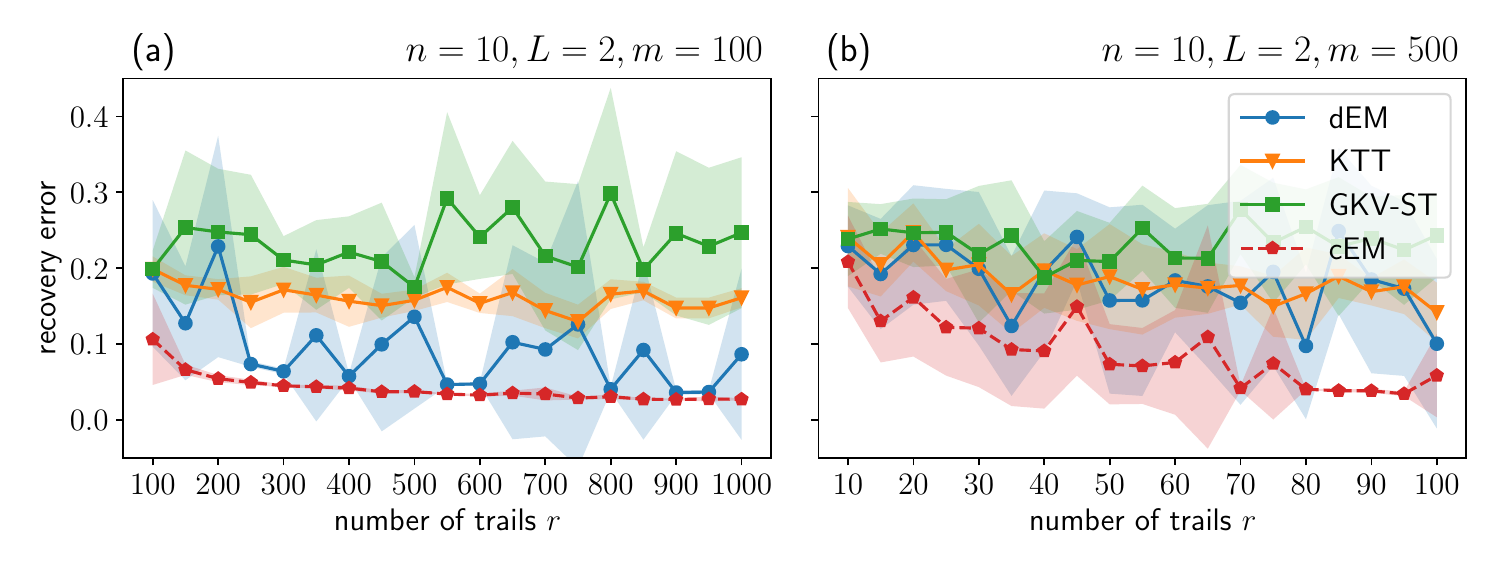}
    \figspace
    \caption{Recovery error under a
    varying number
    of trails. We consider trails
    of medium length in (a) and trails of long length in (b).}
    \label{fig:error-n_samples}

\end{figure}
\spara{Varying Trail Length and Number of Transitions}
Figure~\ref{fig:error-trail-length} presents the recovery error for our three proposed algorithms in scenarios with (a) abundant medium-length trails  and (b) limited extended-length trails. In scenario (a), \methodem mirrors the performance of \methodemcont, achieving minimal recovery error given adequate samples. Conversely, scenario (b) highlights \methodkausik's superior performance to \methodem with extended trails, attributed to its enhanced clustering capabilities. However, as trails lengthen, calculating expectations becomes less stable. Notably, \methodsvd is restricted to trails of length 3, yet excels when supplied with numerous transitions. 
This trend is further validated in Figures~\ref{fig:error-n_samples} and \ref{fig:error_n_observations} by maintaining a consistent total number of observations at (a) 2500 and (b) 25000  and contrasting medium and long trail scenarios respectively.


\begin{figure}
    \centering
\end{figure}

\spara{Scalability}
Figure~\ref{fig:scalability}~(a) shows the
scalability of our algorithms and the
\methodemcont baseline. We observe that
\methodkausik scales poorly in $n$.
On the other hand, \methodsvd and \methodem are
much faster in practice (even though the former
scales with $n^5$) and clearly outperform
\methodemcont.
Figure~\ref{fig:scalability} also shows
the running times of
\methodem, \methodsvd,
\methodkausik, and \methodemcont.
We vary
the number of
chains $L$ in (b) and the 
trail length $m$ in (c). \methodkausik scales worse than the rest of the methods. The fastest is based on the GKV-ST methodology for discrete chains~\cite{gupta2016mixtures,spaeh2023casvd} with the caveat that the theoretical conditions of the recovery theorems may not always apply, as we observed in some of our experiments.

\spara{Clustering Error}  We generate a random mixture comprised of $L=2$
chains and $n=10$ states with $\tau=0.1$. In Figure~\ref{fig:error-isolate}, we plot the clustering error as the trail length varies. We observe that 
\methodkausik is able to  enhance the clustering quality as the length of the trails increases. However, \methodem's clustering  does not benefit from the use of  trails longer than 200. We also observe that the variance of \methodem is larger than \methodkausik. 
Our methods
show differing behavior across problem regimes
depending on $m$, $r$,
but also in terms of scalability.
The methods selection is therefore problem specific.

\begin{figure}
    
    \centering
    \includegraphics[width=1.0\linewidth]{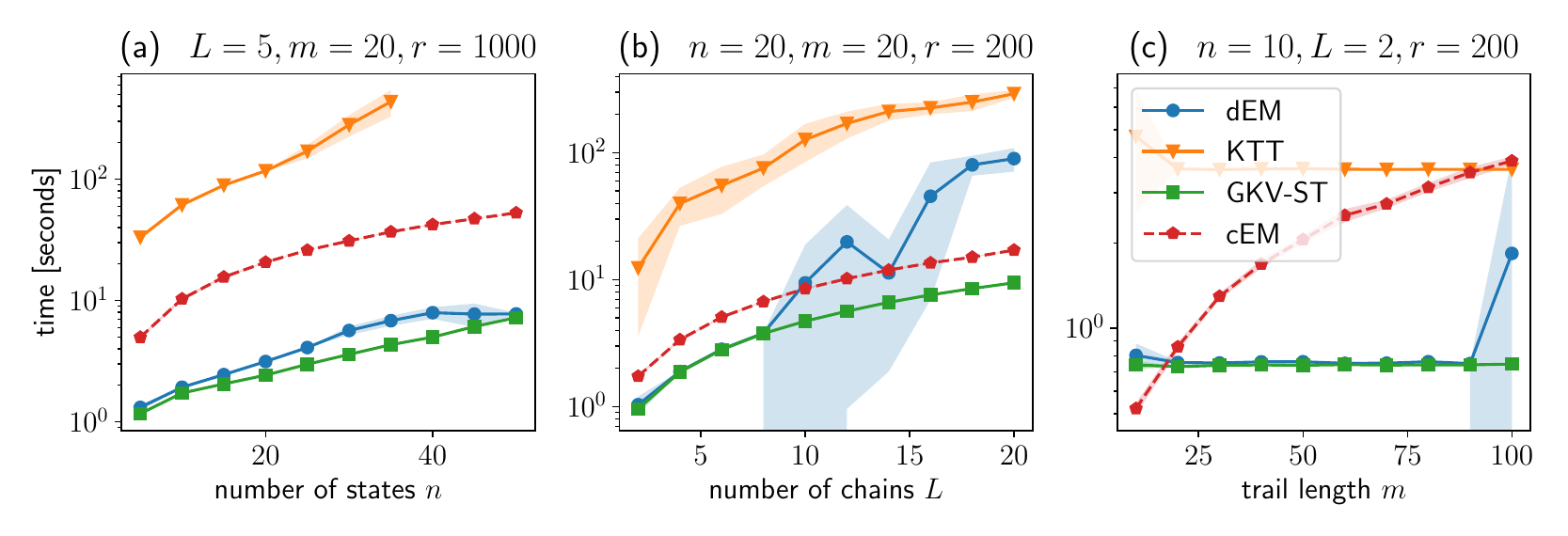}
    \figspace
    \vspace{-1em}
    \caption{Running times for varying number of states $n$ (a), chains $L$ (b), and varying trail length (c).}
    \label{fig:scalability}

    \bigskip
    
    \centering
    \includegraphics[width=0.5\linewidth]{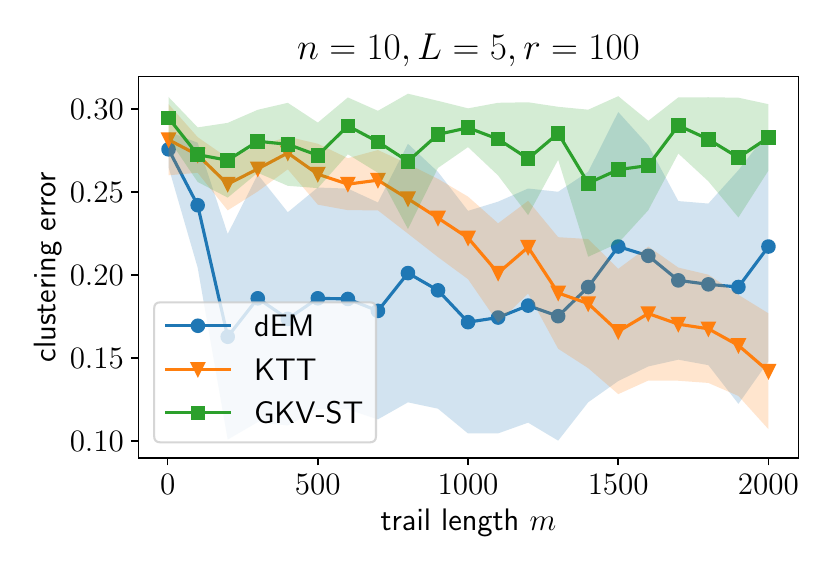}
    \figspace
    \caption{The clustering error for varying trail length $m$.}
    \label{fig:error-isolate}
\end{figure}

\spara{Varying $\tau$}
Figure~\ref{fig:error-tau} shows the effect of using different  $\tau$ values 
on the recovery error.  Note that \methodem,  \methodkausik, and \methodsvd utilize $\tau$ as a discretization parameter. In contrast, \methodemcont operates on trails generated by the CTMC, continually observed for a duration of $\tau \cdot m$.
We can see that, as the observation duration increases, the performance of \methodemcont improves with larger value of $\tau$ but also has the most variance compared to the other methods. For \methodem and \methodkausik, there exists an optimal value for $\tau$. We observe from Figures~\ref{fig:error-tau}(a) and (b) that this $\tau$ value varies as we vary the trail length $m$. This optimal value strikes a balance between observing each transition sufficiently often to ensure effective clustering and keeping $\tau$ sufficiently small to achieve optimal recovery, represented  by the mapping  
$e^{K \tau} \mapsto K$.

\spara{Ranging the trail  length $m$} 
 Figure~\ref{fig:error_n_observations} examines the behaviors of \methodem, \methodkausik, and \methodsvd, maintaining a consistent total number of observations $m \cdot r$ of (a) 2500 and (b) 25000. For longer trails, \methodkausik surpasses even the \methodemcont baseline. However, (a) also underscores \methodem's reliance on adequately longer trails, a criterion not necessary for its continuous counterpart.

\begin{figure}
    \centering
    \includegraphics[width=0.9\linewidth]{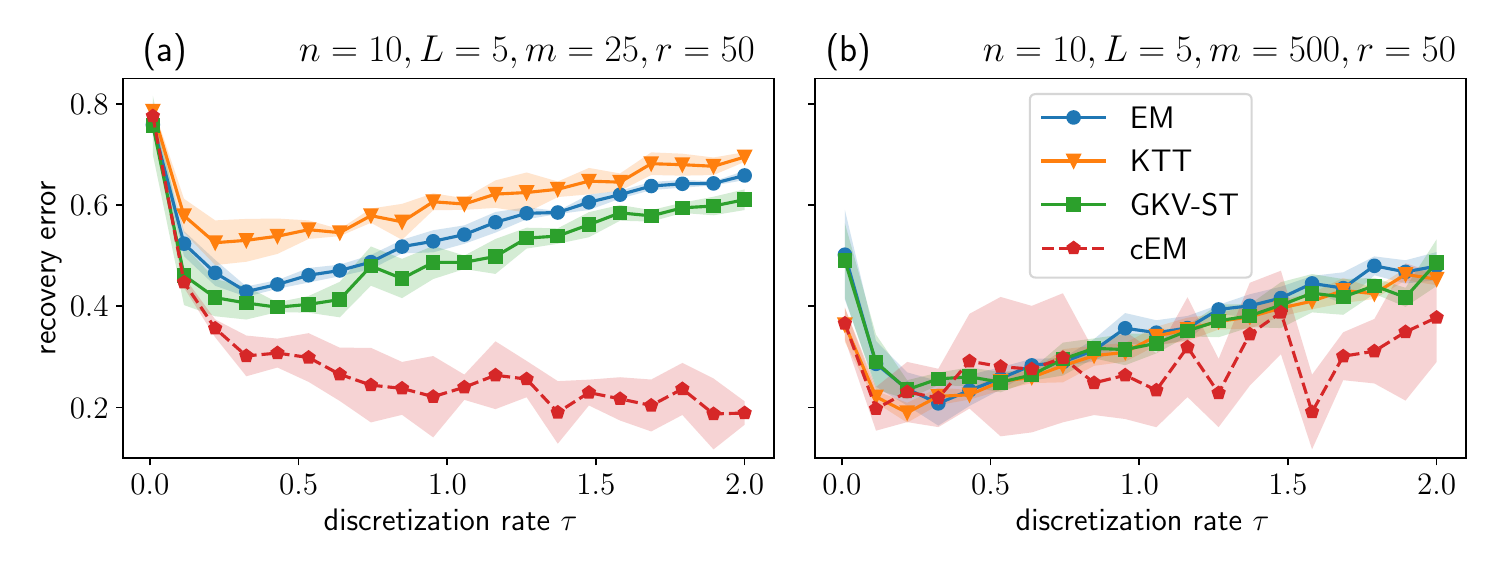}
    \figspace
    \caption{Recovery error   for different discretization rates $\tau$: (a) 20 samples with 25-length trails and (b) 100 samples with 200-length trails.}
    \label{fig:error-tau}

    \bigskip
    
    \centering
    \includegraphics[width=0.9\linewidth]{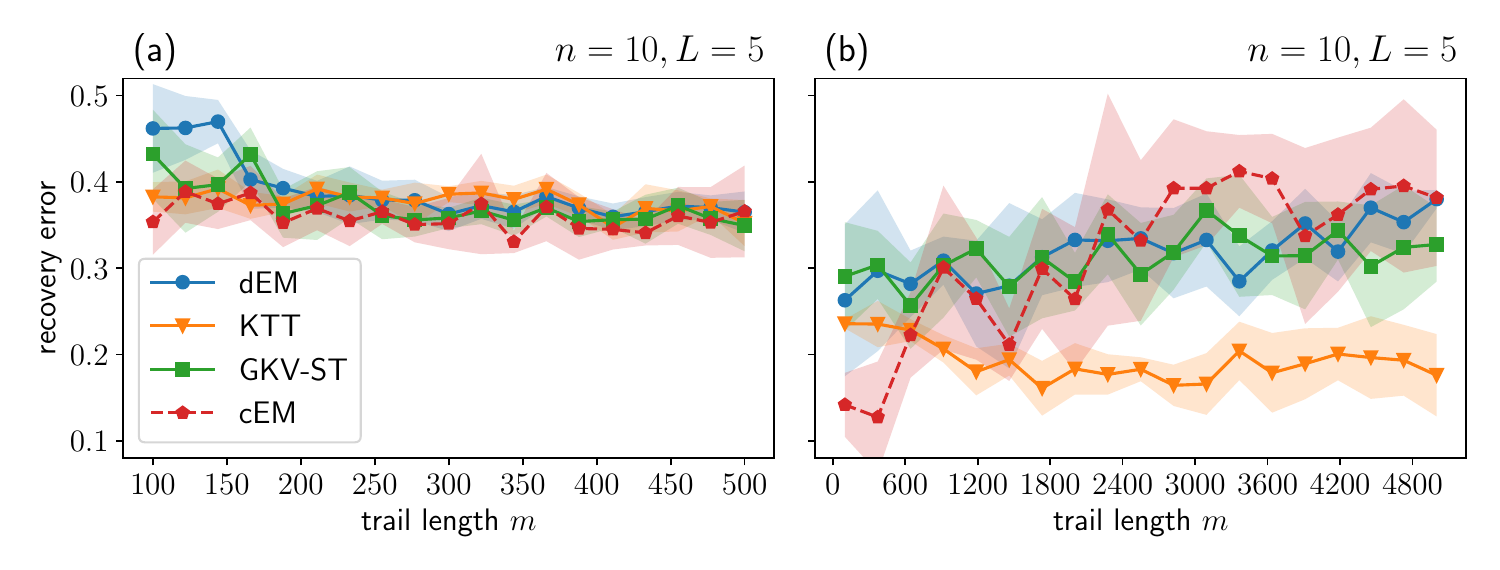}
    \figspace
    \caption{Recovery error based on transition counts: The plots present the error as a function of the total transitions, derived from multiplying the number of samples with the trail length. Two specific cases are highlighted: (a) $r \cdot m = 2\,500$ total transitions, and (b) $r \cdot m = 25\,000$ total transitions. The  figure titles contain the remaining parameters, $\tau$ is set to 0.1.}
    \label{fig:error_n_observations}
\end{figure}

\spara{Ranging the number of trails $r$} Figures~\ref{fig:error-n_samples}(a) and (b) show the recovery error for all methods as the number of samples increases from 100 to 1000 with a step of 50 and from 10 to 100 with a step of 5 respectively. The values for $n,L,m$ are 10, 2, and 500, respectively. We observe that \methodemcont has the lowest recovery error in all cases. The performance of the other three methods alternates with \methodem ranking as the second best. In the regime where we see few trails, the variance of all methods increases. 

\begin{figure}[t]
    
    \centering
     \includegraphics[width=0.9\linewidth]{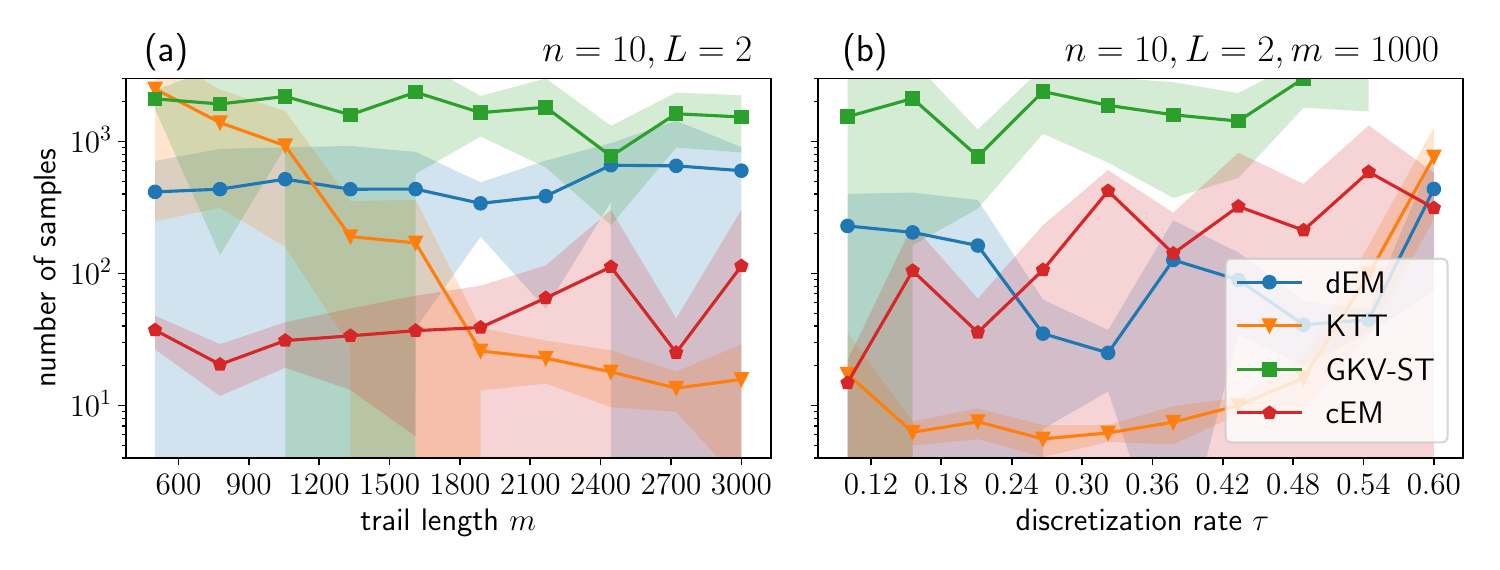}
    \figspace
    \caption{Sample complexity for a
    varying number of samples (a) and
    a varying discretization rate
    $\tau$ (b).}
    \label{fig:sample-complexity}

    \bigskip

    \centering
    \includegraphics[width=0.9\linewidth]{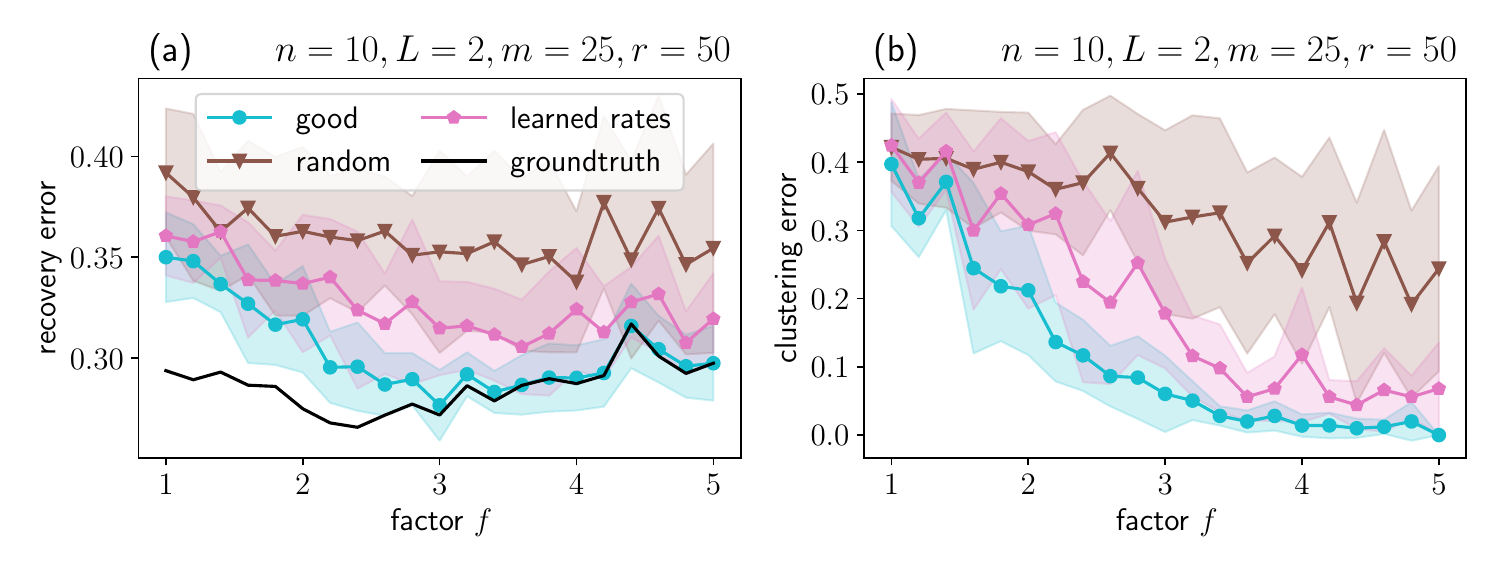}
    \figspace
    \caption{Mixture of $L=2$ chains with $K^1 = f \cdot K^2$. We show the effect of different initializations, as explained in the text.}
    \label{fig:proportional-rates}
    
\end{figure}

\spara{Sample Complexity}
For \methodem
and \methodkausik,
we study the empirical
sample complexity, i.e.
the number of samples
required to obtain
a recovery error
below a certain
threshold.
For our results,
we use a threshold of $0.1$.
Figure~\ref{fig:sample-complexity}
shows the sample
complexity for varying
trail length (a) and
varying discretization
rates $\tau$ (b).
We can clearly observe
that for increasing
trail length, \methodkausik
performs better while
\methodem performs
worse. This behavior is attributed to the local optimization nature of the EM algorithm. 
Furthermore, (b) shows
that \methodkausik still
achieves low recovery error
if trails are long enough,
even for low $\tau$, compared
to \methodem.



\spara{Proportional Rates}
We consider the difficult case when
the rate matrices $K^1$
and $K^2$ of a mixture
are proportional, i.e. there exists
a factor $f > 0$ such that
$K^1 = f \cdot K^2$ for the same graph topologies.
This is a difficult case as the discretization step may conflate the two chains into the same discrete chain.
In this hard case, we found that \methodem performed best. 
We thus use \methodem with several initializations.
For clarity, let us denote with
$\mathcal K_{[0,f]}$ the
uniform distribution over rate matrices
$K$ with $K_{yz} \in [0,f]$ for states $y \not= z$.
First, we initialize with a random mixture
sampled from $(\mathcal K_{[0, 1]}, \mathcal K_{[0,f]})$.
We call this initialization \textsf{good}.
Second, we try to learn the holding times first
and initialize \methodem with random rate matrices that have
the learned holding times (\textsf{learned}).
Third, we sample both initial rate matrices from $K_{[0,1]}$
(\textsf{random}).
We observe that the recovery and clustering error is
almost as good as when using the groundtruth clustering,
after using the good initialization or learning the
holding times.

\subsection{Real-world Experiments} 

\spara{Last.fm} 
We select $k$ users from the {\it last.fm} dataset who have generated the most trails, where $k\in \{2,5,8,\ldots, 20\}$.  We apply \methodem, \methodkausik and \methodemcont by setting the chain count $L$ equal to $k$.
We were not able to apply \methodsvd
as the conditions of \cite{spaeh2023casvd}
are violated.
This configuration inherently sets up a classification task, which is to classify the trails based on the originating user. Figure~\ref{fig:last-fm}(a) plots the average classification error for both the train (depicted by dashed lines) and test dataset calculated across five 80\%-20\% train-test splits of the entire dataset.  Using the hard clustering of \methodkausik we obtain the best possible classification error. We also observe that this can be attributed to the good performance of the clustering step.
Figure~\ref{fig:last-fm}(b) shows the median entropy of the assignment of each trail to the chains. We observe that \methodkausik even on the test data performs a low-entropy assignment, suggesting a near  hard clustering.
Interestingly, \methodem produces an assignment that tracks the entropy of the groundtruth. 

\begin{figure}[t]
    
    \centering
    \includegraphics[width=0.9\linewidth]{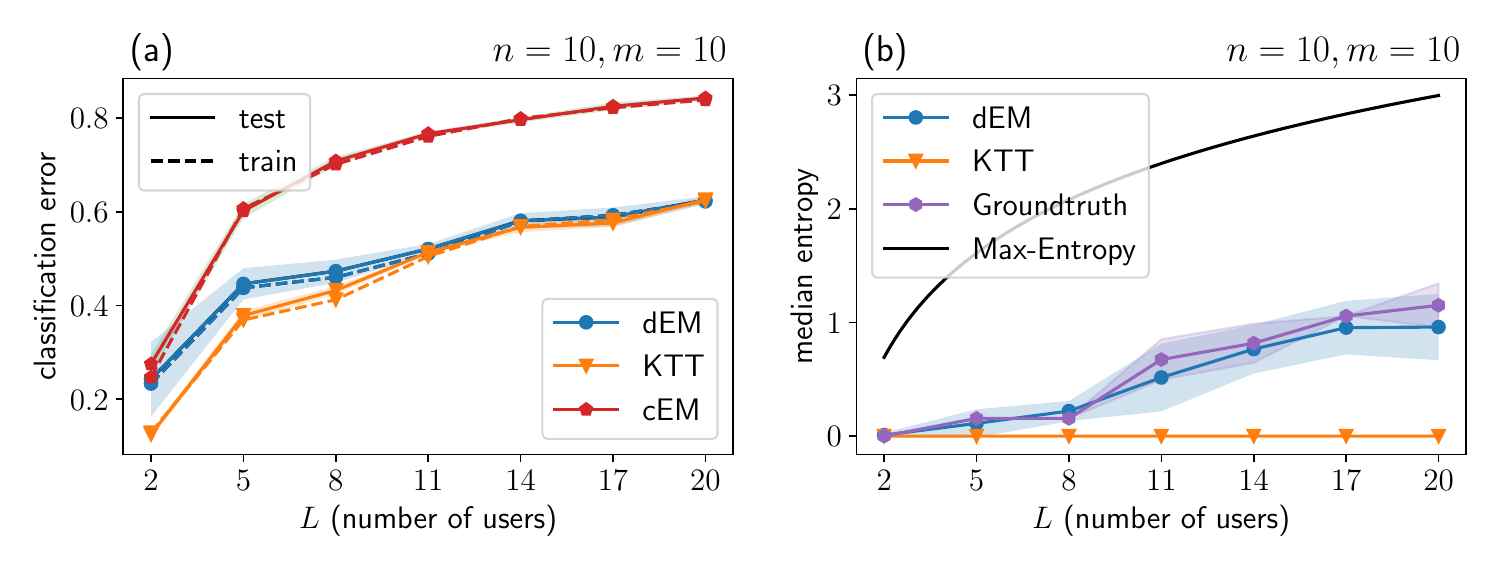}
    \figspace
    \caption{Classification error and assignment entropy on the Last.fm dataset.
     }
    \label{fig:last-fm}

    \bigskip

    \centering
    \includegraphics[width=0.5\linewidth]{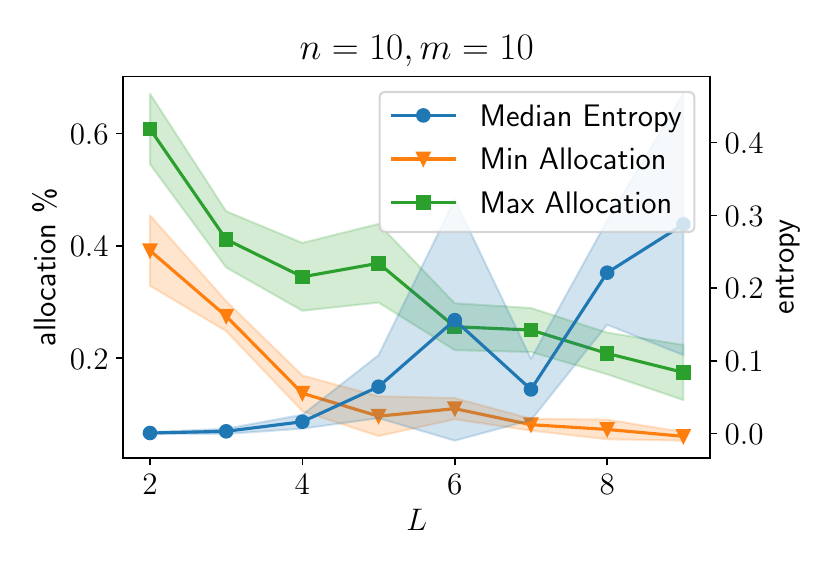}
    \figspace
    \caption{   \label{fig:real-lastfm} Allocation-percentage and median entropy for \methodem on Last.fm.}
\end{figure}

As the number of users in the dataset increases, the performance deteriorates due   to the soft-assignment having to decide among a greater number of potential chains, coupled with the reality that some users have similar listening habits.
In Figure~\ref{fig:real-lastfm},
we explore the learning of a
mixture encompassing $L$ chains with values ranging from 2 to 10 by centering our attention on 20 users. This is done to discern   typical user behaviors, also known as archetypical behaviors~\cite{cutler1994archetypal}. 
We graphically represent the median assignment
entropy and the minimum and
maximum of the assignment probabilities
over the chains for \methodem. In this context, the probability of assignment to a chain 
$\ell \in [L]$ is defined as
$\frac 1 r \sum_{\mathbf x \in \mathbf X} a(\mathbf x, \ell)$.
We notice that as the number of chains increases, the entropy also tends to rise. This suggests that several chains produce comparable likelihoods for the trails, leading to greater uncertainty in assignment. As previously noted in the main content, determining a strategy for selecting $L$ remains an open challenge for future work.

\begin{figure}
    \centering

\tikzstyle{court}=[black!20, line width=2pt,samples=100]
\tikzstyle{player}=[circle, inner sep=0pt, text width=16pt, align=center, draw=black!80, line width=1pt, fill=white]
\tikzstyle{pass}=[-latex, black]
\tikzstyle{basket}=[player, rectangle, inner sep=2pt, minimum height=13pt, text centered]
\tikzstyle{start}=[player,fill=none,draw=blue,line width=2.5pt]

\scalebox{0.68}{
\begin{tikzpicture}[scale=1.6]

    \draw[court, domain=0:180] plot ({min(1.7, max(-1.7, 2*cos(\x)))}, {2*sin(\x)});
    \draw[court] (-0.5,0) -- (-0.5,1.2) -- (0.5,1.2) -- (0.5,0);
    \draw[court, domain=0:180] plot ({0.3*cos(\x)}, {1.2 + 0.3*sin(\x)});
    \draw[court, dashed, domain=0:180] plot ({0.3*cos(\x)}, {1.2 - 0.3*sin(\x)});
    \draw[court] (-1.8,0) -- (1.8,0);

    \node[player] (PG) at (0,2) {PG};
    \node[player] (SG) at (-1.3,1.5) {SG};
    \node[player] (PF) at (-1.0,0.6) {PF};
    \node[player] (C) at (0.9,1.0) {C};
    \node[player] (SF) at (1.1,0.3) {SF};

    \node[basket] (miss) at (-0.4,0) {$\mathsf{miss}$};
    \node[basket] (score) at (0.4,0) {$\mathsf{hit}$};

    \node[label={[label distance=6pt]below:{3.2s}}] at (PF) {};
    \draw[pass,opacity=0.17421865316690174,line width=1.7421865316690175pt,red] (PF) to (miss);
    \draw[pass,opacity=0.3847985576814498,line width=3.8479855768144984pt] (PF) to (SG);
    \node[label={[label distance=6pt]below:{3.5s}}] at (SF) {};
    \draw[pass,opacity=0.2136728663275433,line width=2.136728663275433pt,red] (SF) to (miss);
    \draw[pass,opacity=0.25372907576075426,line width=2.5372907576075425pt] (SF) to (PF);
    \draw[pass,opacity=0.3377098028408584,line width=3.377098028408584pt] (SF) to (SG);
    \node[label={[label distance=6pt]above:{3.0s}}] at (PG) {};
    \draw[pass,opacity=0.23277201800810687,line width=2.3277201800810685pt,red] (PG) to (miss);
    \draw[pass,opacity=0.2074350416129804,line width=2.074350416129804pt,green] (PG) to (score);
    \draw[pass,opacity=0.2365322622400459,line width=2.365322622400459pt] (PG) to (PF);
    \node[label={[label distance=6pt]above:{1.8s}}] at (SG) {};
    \node[start] at (SG) {};
    \draw[pass,opacity=0.386305757980682,line width=3.86305757980682pt] (SG) to (PF);
    \draw[pass,opacity=0.26621745786886863,line width=2.6621745786886866pt] (SG) to (SF);
    \node[label={[label distance=6pt]above:{4.9s}}] at (C) {};
    \draw[pass,opacity=0.18551262560900317,line width=1.8551262560900317pt,red] (C) to (miss);
    \draw[pass,opacity=0.3408037952378343,line width=3.4080379523783426pt] (C) to (PF);
    \draw[pass,opacity=0.2812897982035745,line width=2.812897982035745pt] (C) to (SG);
    
\end{tikzpicture}}
\hfill
\scalebox{0.68}{
\begin{tikzpicture}[scale=1.6]

    \draw[court, domain=0:180] plot ({min(1.7, max(-1.7, 2*cos(\x)))}, {2*sin(\x)});
    \draw[court] (-0.5,0) -- (-0.5,1.2) -- (0.5,1.2) -- (0.5,0);
    \draw[court, domain=0:180] plot ({0.3*cos(\x)}, {1.2 + 0.3*sin(\x)});
    \draw[court, dashed, domain=0:180] plot ({0.3*cos(\x)}, {1.2 - 0.3*sin(\x)});
    \draw[court] (-1.8,0) -- (1.8,0);

    \node[player] (PG) at (0,2) {PG};
    \node[player] (SG) at (-1.3,1.5) {SG};
    \node[player] (PF) at (-1.0,0.6) {PF};
    \node[player] (C) at (0.9,1.0) {C};
    \node[player] (SF) at (1.1,0.3) {SF};

    \node[basket] (miss) at (-0.4,0) {$\mathsf{miss}$};
    \node[basket] (score) at (0.4,0) {$\mathsf{hit}$};

    \node[label={[label distance=6pt]below:{3.3s}}] at (PF) {};
    \draw[pass,opacity=0.8075762006944756,line width=5.075762006944755pt,red] (PF) to (miss);
    \draw[pass,opacity=0.19242379930552447,line width=1.9242379930552447pt,green] (PF) to (score);
    \node[label={[label distance=6pt]below:{2.3s}}] at (SF) {};
    \draw[pass,opacity=0.1927787148366214,line width=1.927787148366214pt,red] (SF) to (miss);
    \draw[pass,opacity=0.33217766424127865,line width=3.3217766424127864pt] (SF) to (SG);
    \node[label={[label distance=6pt]above:{3.0s}}] at (PG) {};
    \draw[pass,opacity=0.23758948295778162,line width=2.3758948295778164pt,red] (PG) to (miss);
    \draw[pass,opacity=0.20027838571496026,line width=2.002783857149603pt,green] (PG) to (score);
    \draw[pass,opacity=0.20319853585774264,line width=2.0319853585774266pt] (PG) to (SF);
    \node[label={[label distance=6pt]above:{1.7s}}] at (SG) {};
    \node[start] at (SG) {};
    \draw[pass,opacity=0.32372030104618044,line width=3.2372030104618044pt] (SG) to (SF);
    \draw[pass,opacity=0.2729473982795745,line width=2.729473982795745pt] (SG) to (C);
    \node[label={[label distance=6pt]above:{5.0s}}] at (C) {};
    \draw[pass,opacity=0.2133173246087413,line width=2.133173246087413pt,red] (C) to (miss);
    \draw[pass,opacity=0.3419116990103844,line width=3.4191169901038436pt] (C) to (SF);
    \draw[pass,opacity=0.21227502806208576,line width=2.1227502806208576pt] (C) to (SG);
    
\end{tikzpicture}}
\hfill
\scalebox{0.68}{
\begin{tikzpicture}[scale=1.6]

    \draw[court, domain=0:180] plot ({min(1.7, max(-1.7, 2*cos(\x)))}, {2*sin(\x)});
    \draw[court] (-0.5,0) -- (-0.5,1.2) -- (0.5,1.2) -- (0.5,0);
    \draw[court, domain=0:180] plot ({0.3*cos(\x)}, {1.2 + 0.3*sin(\x)});
    \draw[court, dashed, domain=0:180] plot ({0.3*cos(\x)}, {1.2 - 0.3*sin(\x)});
    \draw[court] (-1.8,0) -- (1.8,0);

    \node[player] (PG) at (0,2) {PG};
    \node[player] (SG) at (-1.3,1.5) {SG};
    \node[player] (PF) at (-1.0,0.6) {PF};
    \node[player] (C) at (0.9,1.0) {C};
    \node[player] (SF) at (1.1,0.3) {SF};

    \node[basket] (miss) at (-0.4,0) {$\mathsf{miss}$};
    \node[basket] (score) at (0.4,0) {$\mathsf{hit}$};

    \node[label={[label distance=6pt]below:{3.8s}}] at (PF) {};
    \draw[pass,opacity=0.37392593149577874,line width=3.7392593149577875pt,red] (PF) to (miss);
    \draw[pass,opacity=0.2284713616965366,line width=2.284713616965366pt,green] (PF) to (score);
    \draw[pass,opacity=0.22081377774483252,line width=2.208137777448325pt] (PF) to (SG);
    \node[label={[label distance=6pt]below:{1.9s}}] at (SF) {};
    \draw[pass,opacity=0.2367965485820656,line width=2.367965485820656pt] (SF) to (PF);
    \draw[pass,opacity=0.380614120208571,line width=3.80614120208571pt] (SF) to (SG);
    \node[label={[label distance=6pt]above:{1.5s}}] at (PG) {};
    \node[start] at (PG) {};
    \node[start] at (SG) {};
    \draw[pass,opacity=0.20338539187545587,line width=2.033853918754559pt] (PG) to (PF);
    \draw[pass,opacity=0.21459358852473412,line width=2.1459358852473414pt] (PG) to (SF);
    \draw[pass,opacity=0.309456980303618,line width=3.09456980303618pt] (PG) to (SG);
    \draw[pass,opacity=0.2487899076175299,line width=2.487899076175299pt] (PG) to (C);
    \node[label={[label distance=6pt]above:{3.1s}}] at (SG) {};
    \draw[pass,opacity=0.25829774855031873,line width=2.5829774855031875pt,red] (SG) to (miss);
    \draw[pass,opacity=0.16082902411395902,line width=1.6082902411395903pt,green] (SG) to (score);
    \draw[pass,opacity=0.20289307596242384,line width=2.0289307596242385pt] (SG) to (PF);
    \draw[pass,opacity=0.2972631598272379,line width=2.972631598272379pt] (SG) to (SF);
    \node[label={[label distance=6pt]above:{5.2s}}] at (C) {};
    \draw[pass,opacity=0.17982582680726972,line width=1.798258268072697pt,red] (C) to (miss);
    \draw[pass,opacity=0.22391385656422996,line width=2.2391385656422997pt] (C) to (PF);

\end{tikzpicture}}
\hfill
\scalebox{0.68}{
\begin{tikzpicture}[scale=1.6]

    \draw[court, domain=0:180] plot ({min(1.7, max(-1.7, 2*cos(\x)))}, {2*sin(\x)});
    \draw[court] (-0.5,0) -- (-0.5,1.2) -- (0.5,1.2) -- (0.5,0);
    \draw[court, domain=0:180] plot ({0.3*cos(\x)}, {1.2 + 0.3*sin(\x)});
    \draw[court, dashed, domain=0:180] plot ({0.3*cos(\x)}, {1.2 - 0.3*sin(\x)});
    \draw[court] (-1.8,0) -- (1.8,0);

    \node[player] (PG) at (0,2) {PG};
    \node[player] (SG) at (-1.3,1.5) {SG};
    \node[player] (PF) at (-1.0,0.6) {PF};
    \node[player] (C) at (0.9,1.0) {C};
    \node[player] (SF) at (1.1,0.3) {SF};

    \node[basket] (miss) at (-0.4,0) {$\mathsf{miss}$};
    \node[basket] (score) at (0.4,0) {$\mathsf{hit}$};

    \node[label={[label distance=6pt]below:{3.4s}}] at (PF) {};
    \draw[pass,opacity=0.2646707546813019,line width=2.646707546813019pt] (PF) to (SF);
    \draw[pass,opacity=0.3492253077767991,line width=3.492253077767991pt] (PF) to (PG);
    \node[label={[label distance=6pt]below:{2.5s}}] at (SF) {};
    \draw[pass,opacity=0.22297372532204007,line width=2.2297372532204007pt,red] (SF) to (miss);
    \draw[pass,opacity=0.273387913148029,line width=2.73387913148029pt] (SF) to (PF);
    \draw[pass,opacity=0.29273244640530244,line width=2.9273244640530245pt] (SF) to (PG);
    \node[label={[label distance=6pt]above:{1.9s}}] at (PG) {};
    \node[start] at (PG) {};
    \draw[pass,opacity=0.16402163312886964,line width=1.6402163312886964pt,red] (PG) to (miss);
    \draw[pass,opacity=0.26280332083317187,line width=2.628033208331719pt] (PG) to (PF);
    \draw[pass,opacity=0.24797016209535766,line width=2.4797016209535765pt] (PG) to (SF);
    \node[label={[label distance=6pt]above:{3.9s}}] at (SG) {};
    \draw[pass,opacity=0.226501197906821,line width=2.26501197906821pt,red] (SG) to (miss);
    \draw[pass,opacity=0.16440778468778028,line width=1.6440778468778028pt,green] (SG) to (score);
    \draw[pass,opacity=0.21130649181278355,line width=2.1130649181278356pt] (SG) to (PF);
    \draw[pass,opacity=0.24279992439121978,line width=2.427999243912198pt] (SG) to (PG);
    \node[label={[label distance=6pt]above:{1.8s}}] at (C) {};
    \draw[pass,opacity=0.32645372749091417,line width=3.2645372749091415pt] (C) to (SF);
    \draw[pass,opacity=0.2172921815305613,line width=2.172921815305613pt] (C) to (PG);
    \draw[pass,opacity=0.26708452449449693,line width=2.670845244944969pt] (C) to (SG);

\end{tikzpicture}}
    \caption{
    Offensive strategies for the Golden State Warriors represented as a  mixture of $L=4$ CTMCs. We use a discretization rate of $\tau=0.1$. The scoring probabilities are (from left to right): $44\%$, $37\%$,
    $41\%$, and $41\%$. }


    \label{fig:nba-strategy}
    \label{fig:nba-strategy2}

\scalebox{0.68}{
\begin{tikzpicture}[scale=1.6]

    \draw[court, domain=0:180] plot ({min(1.7, max(-1.7, 2*cos(\x)))}, {2*sin(\x)});
    \draw[court] (-0.5,0) -- (-0.5,1.2) -- (0.5,1.2) -- (0.5,0);
    \draw[court, domain=0:180] plot ({0.3*cos(\x)}, {1.2 + 0.3*sin(\x)});
    \draw[court, dashed, domain=0:180] plot ({0.3*cos(\x)}, {1.2 - 0.3*sin(\x)});
    \draw[court] (-1.8,0) -- (1.8,0);

    \node[player] (PG) at (0,2) {PG};
    \node[player] (SG) at (-1.3,1.5) {SG};
    \node[player] (PF) at (-1.0,0.6) {PF};
    \node[player] (C) at (0.9,1.0) {C};
    \node[player] (SF) at (1.1,0.3) {SF};

    \node[basket] (miss) at (-0.4,0) {$\mathsf{miss}$};
    \node[basket] (score) at (0.4,0) {$\mathsf{hit}$};

    \node[label={[label distance=6pt]above:{2.4s}}] at (PG) {};
    \draw[pass,opacity=0.5490022121149478,line width=5.490022121149478pt,red] (PG) to (miss);
    \draw[pass,opacity=0.3042228275350427,line width=3.042228275350427pt,green] (PG) to (score);
    \node[label={[label distance=6pt]above:{1.5s}}] at (SG) {};
    \node[start] at (SG) {};
    \draw[pass,opacity=0.17239181907233098,line width=1.7239181907233099pt,red] (SG) to (miss);
    \draw[pass,opacity=0.3507830598589462,line width=3.507830598589462pt] (SG) to (SF);
    \draw[pass,opacity=0.27112934503052466,line width=2.711293450305247pt] (SG) to (PF);
    \node[label={[label distance=6pt]above:{1.4s}}] at (C) {};
    \draw[pass,opacity=0.5607077173097216,line width=5.607077173097216pt] (C) to (SG);
    \draw[pass,opacity=0.21548858185307543,line width=2.1548858185307544pt] (C) to (PF);
    \node[label={[label distance=6pt]below:{2.4s}}] at (SF) {};
    \draw[pass,opacity=0.2562511785994422,line width=2.5625117859944218pt,red] (SF) to (miss);
    \draw[pass,opacity=0.15415140871021468,line width=1.5415140871021469pt,green] (SF) to (score);
    \draw[pass,opacity=0.21956172174880614,line width=2.1956172174880613pt] (SF) to (SG);
    \draw[pass,opacity=0.35706462584741144,line width=3.5706462584741145pt] (SF) to (PF);
    \node[label={[label distance=6pt]below:{3.2s}}] at (PF) {};
    \draw[pass,opacity=0.33126838473279757,line width=3.3126838473279756pt,red] (PF) to (miss);
    \draw[pass,opacity=0.18751661934018496,line width=1.8751661934018495pt,green] (PF) to (score);
    \draw[pass,opacity=0.2067510439006554,line width=2.067510439006554pt] (PF) to (SG);

\end{tikzpicture}}
\hfill
\scalebox{0.68}{
\begin{tikzpicture}[scale=1.6]

    \draw[court, domain=0:180] plot ({min(1.7, max(-1.7, 2*cos(\x)))}, {2*sin(\x)});
    \draw[court] (-0.5,0) -- (-0.5,1.2) -- (0.5,1.2) -- (0.5,0);
    \draw[court, domain=0:180] plot ({0.3*cos(\x)}, {1.2 + 0.3*sin(\x)});
    \draw[court, dashed, domain=0:180] plot ({0.3*cos(\x)}, {1.2 - 0.3*sin(\x)});
    \draw[court] (-1.8,0) -- (1.8,0);

    \node[player] (PG) at (0,2) {PG};
    \node[player] (SG) at (-1.3,1.5) {SG};
    \node[player] (PF) at (-1.0,0.6) {PF};
    \node[player] (C) at (0.9,1.0) {C};
    \node[player] (SF) at (1.1,0.3) {SF};

    \node[basket] (miss) at (-0.4,0) {$\mathsf{miss}$};
    \node[basket] (score) at (0.4,0) {$\mathsf{hit}$};

    \node[label={[label distance=6pt]above:{1.7s}}] at (PG) {};
    \draw[pass,opacity=0.2549082518489791,line width=2.549082518489791pt,red] (PG) to (miss);
    \draw[pass,opacity=0.27712076105491706,line width=2.7712076105491708pt,green] (PG) to (score);
    \draw[pass,opacity=0.3247724541152446,line width=3.247724541152446pt] (PG) to (PF);
    \node[label={[label distance=6pt]above:{1.7s}}] at (SG) {};
    \draw[pass,opacity=0.28338471699693496,line width=2.8338471699693497pt,red] (SG) to (miss);
    \draw[pass,opacity=0.1957239411713233,line width=1.957239411713233pt,green] (SG) to (score);
    \draw[pass,opacity=0.36069075818733076,line width=3.6069075818733074pt] (SG) to (PF);
    \node[label={[label distance=6pt]above:{11.1s}}] at (C) {};
    \draw[pass,opacity=0.46079547376975943,line width=4.607954737697594pt] (C) to (SG);
    \draw[pass,opacity=0.26040210012986276,line width=2.6040210012986273pt] (C) to (PF);
    \node[label={[label distance=6pt]below:{2.4s}}] at (SF) {};
    \draw[pass,opacity=0.35024026610497166,line width=3.5024026610497168pt,red] (SF) to (miss);
    \draw[pass,opacity=0.2615065743578059,line width=2.615065743578059pt] (SF) to (SG);
    \draw[pass,opacity=0.20632520237056867,line width=2.0632520237056866pt] (SF) to (PF);
    \node[label={[label distance=6pt]below:{2.3s}}] at (PF) {};
    \node[start] at (PF) {};
    \draw[pass,opacity=0.15349453788466144,line width=1.5349453788466145pt,red] (PF) to (miss);
    \draw[pass,opacity=0.5915056472219498,line width=5.915056472219497pt] (PF) to (SG);

\end{tikzpicture}}
\hfill
\scalebox{0.68}{
\begin{tikzpicture}[scale=1.6]

    \draw[court, domain=0:180] plot ({min(1.7, max(-1.7, 2*cos(\x)))}, {2*sin(\x)});
    \draw[court] (-0.5,0) -- (-0.5,1.2) -- (0.5,1.2) -- (0.5,0);
    \draw[court, domain=0:180] plot ({0.3*cos(\x)}, {1.2 + 0.3*sin(\x)});
    \draw[court, dashed, domain=0:180] plot ({0.3*cos(\x)}, {1.2 - 0.3*sin(\x)});
    \draw[court] (-1.8,0) -- (1.8,0);

    \node[player] (PG) at (0,2) {PG};
    \node[player] (SG) at (-1.3,1.5) {SG};
    \node[player] (PF) at (-1.0,0.6) {PF};
    \node[player] (C) at (0.9,1.0) {C};
    \node[player] (SF) at (1.1,0.3) {SF};

    \node[basket] (miss) at (-0.4,0) {$\mathsf{miss}$};
    \node[basket] (score) at (0.4,0) {$\mathsf{hit}$};

    \node[label={[label distance=6pt]above:{1.5s}}] at (PG) {};
    \draw[pass,opacity=0.30352284776261124,line width=3.0352284776261125pt] (PG) to (SG);
    \draw[pass,opacity=0.4649132504483589,line width=4.649132504483589pt] (PG) to (PF);
    \node[label={[label distance=6pt]above:{1.5s}}] at (SG) {};
    \node[start] at (SG) {};
    \draw[pass,opacity=0.2910253566899712,line width=2.910253566899712pt,red] (SG) to (miss);
    \draw[pass,opacity=0.37575375598237404,line width=3.7575375598237404pt] (SG) to (PF);
    \node[label={[label distance=6pt]above:{5.6s}}] at (C) {};
    \draw[pass,opacity=0.22141734298212745,line width=2.2141734298212743pt,red] (C) to (miss);
    \draw[pass,opacity=0.23431454033192414,line width=2.3431454033192414pt] (C) to (SG);
    \draw[pass,opacity=0.27242143558245535,line width=2.7242143558245537pt] (C) to (PF);
    \node[label={[label distance=6pt]below:{2.2s}}] at (SF) {};
    \draw[pass,opacity=0.5266853161972386,line width=5.266853161972386pt] (SF) to (SG);
    \node[label={[label distance=6pt]below:{3.6s}}] at (PF) {};
    \draw[pass,opacity=0.27427168209987246,line width=2.7427168209987247pt,red] (PF) to (miss);
    \draw[pass,opacity=0.1534552912922337,line width=1.534552912922337pt,green] (PF) to (score);
    \draw[pass,opacity=0.5186571651559817,line width=5.186571651559817pt] (PF) to (SG);

\end{tikzpicture}}
\hfill
\scalebox{0.68}{
\begin{tikzpicture}[scale=1.6]

    \draw[court, domain=0:180] plot ({min(1.7, max(-1.7, 2*cos(\x)))}, {2*sin(\x)});
    \draw[court] (-0.5,0) -- (-0.5,1.2) -- (0.5,1.2) -- (0.5,0);
    \draw[court, domain=0:180] plot ({0.3*cos(\x)}, {1.2 + 0.3*sin(\x)});
    \draw[court, dashed, domain=0:180] plot ({0.3*cos(\x)}, {1.2 - 0.3*sin(\x)});
    \draw[court] (-1.8,0) -- (1.8,0);

    \node[player] (PG) at (0,2) {PG};
    \node[player] (SG) at (-1.3,1.5) {SG};
    \node[player] (PF) at (-1.0,0.6) {PF};
    \node[player] (C) at (0.9,1.0) {C};
    \node[player] (SF) at (1.1,0.3) {SF};

    \node[basket] (miss) at (-0.4,0) {$\mathsf{miss}$};
    \node[basket] (score) at (0.4,0) {$\mathsf{hit}$};

    \node[label={[label distance=6pt]above:{1.8s}}] at (PG) {};
    \node[start] at (PG) {};
    \draw[pass,opacity=0.18616675203452066,line width=1.8616675203452067pt,red] (PG) to (miss);
    \draw[pass,opacity=0.30391780876473257,line width=3.0391780876473256pt] (PG) to (SG);
    \draw[pass,opacity=0.24310310403748392,line width=2.4310310403748394pt] (PG) to (PF);
    \node[label={[label distance=6pt]above:{2.5s}}] at (SG) {};
    \draw[pass,opacity=0.20904850708227513,line width=2.0904850708227514pt,red] (SG) to (miss);
    \draw[pass,opacity=0.4104426518500701,line width=4.104426518500701pt] (SG) to (PG);
    \node[label={[label distance=6pt]above:{7.6s}}] at (C) {};
    \draw[pass,opacity=0.33479783678598196,line width=3.3479783678598194pt] (C) to (PG);
    \draw[pass,opacity=0.33512439437772085,line width=3.3512439437772086pt] (C) to (PF);
    \node[label={[label distance=6pt]below:{3.3s}}] at (SF) {};
    \draw[pass,opacity=0.29383515431545587,line width=2.9383515431545586pt,red] (SF) to (miss);
    \draw[pass,opacity=0.1812620348932682,line width=1.812620348932682pt,green] (SF) to (score);
    \draw[pass,opacity=0.35049065814899677,line width=3.504906581489968pt] (SF) to (PF);
    \node[label={[label distance=6pt]below:{3.4s}}] at (PF) {};
    \draw[pass,opacity=0.45880765181475996,line width=4.5880765181475995pt] (PF) to (PG);

\end{tikzpicture}}

    \caption{ Strategies for the New York Knicks, anaogous to Figure~\ref{fig:nba-strategy}. The scoring probabilities are (from left to right): $37\%$, $36\%$,
    $35\%$, and $40\%$.}
    
    \label{fig:nba-strategy3}
    
\end{figure}

\spara{Markovletics: Navigating Sports Strategies through CTMCs}
For every team in the 2022 season, we utilize trails crafted from an NBA passing dataset to deduce a blend of CTMCs. We aim for each chain to represent an offensive team's tactic, shedding light on the probability of point-scoring associated with that particular strategy by inspecting its steady-state distribution.
In Figure~\ref{fig:nba-strategy}, we highlight two of the offensive tactics of Golden State Warriors (GSW) discerned from \methodem with four CTMCs. The basketball game involves five positions: Point Guard (PG), Shooting Guard (SG), Power Forward (PF), Center (C), and Small Forward (SF). Each position is annotated with the calculated ball holding time. Arrows represent potential passes between positions, with their thickness and opacity denoting the pass's probability. Passes with a low likelihood (less than 0.2) are excluded for visual convenience. The player with the highest starting probability is highlighted in blue. Shoot attempts are highlighted in red (miss) and green (hit). Each strategy illuminates unique offensive patterns. 
We show another set of 4 offensive strategies learned from trails of the New York Knicks in the 2022 season in Figure~\ref{fig:nba-strategy3}.

We also provide additional qualitative results on the NBA dataset.
Since \hit and \miss are the sole absorbing states, $\pi_K(\hit)$ and $\pi_K(\miss) = 1 - \pi_K(\hit)$, they signify the odds of scoring or not scoring points, respectively. Thus, we can ascribe a score likelihood to each tactic.
To gauge the efficacy of the deduced mixture, we assess its predictive precision, as done in other work \cite{cervone2014pointwise}. With a given mixture of continuous-time Markov chains and a trail prefix $x' = (x_t)_{0 \le t \le t'}$ halting prior to reaching the absorbing states \hit or \miss at time $t'$, we ascertain the probabilities $\Pr[x' \cap \ell \mid \mathbf K]$ and $\Pr[x_\infty = \hit \mid x' \cap K^\ell]$ for every $\ell \in [L]$, facilitating the determination of the score likelihood $\Pr[x_\infty = \hit]$ via the theorem of total probability.
We use  trails with a 80\%-20\% train-test split from the teams
Golden State Warriors (GSW), Boston Celtics (BOS), Los Angeles Lakers (LAL), Miami Heat (MIA), Los Angeles Clippers (LAC), and Houston Rockets (HOU) in the 2022 season.
We plot the predictive
accuracy of the chains learned
via \methodem, \methodkausik,
and \methodemcont. As a baseline,
we implemented a recurrent neural network
using Pytorch~\cite{goodfellow2016deep} that is 
trained on the set of discretized trails.

\begin{figure}
    \centering
    \includegraphics[width=0.7\linewidth]{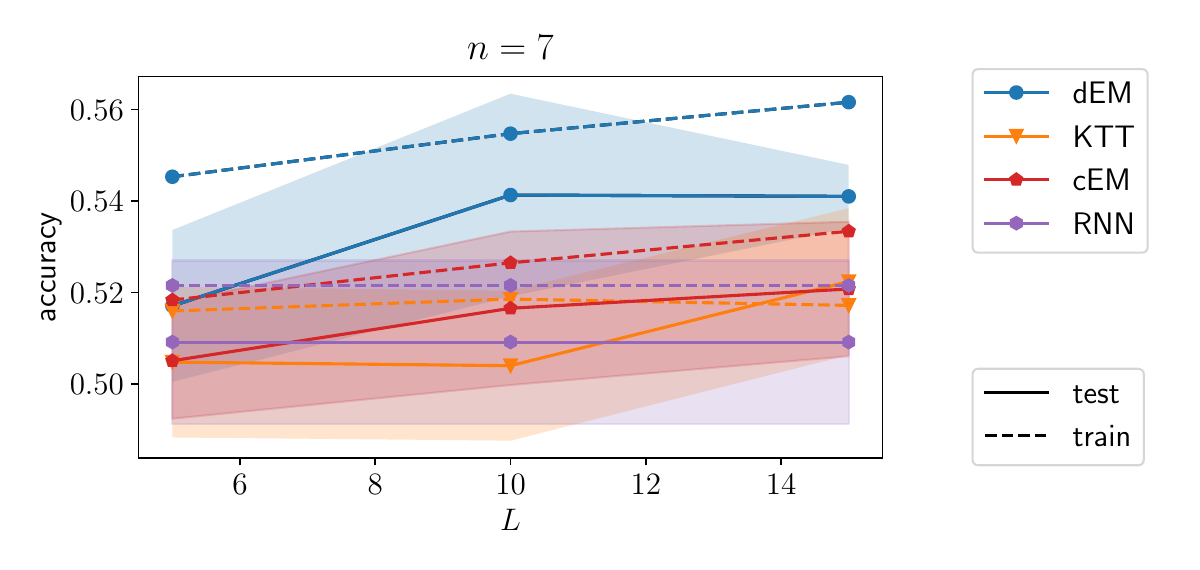}
    \figspace
    \caption{   \label{fig:real-nba} Train and test accuracy of \miss and \hit prediction using \methodem, \methodemcont, \methodkausik and RNNs on the NBA dataset.}
\end{figure}

\section{Conclusion}
\label{sec:concl}
This research delved extensively into the study of learning mixtures of CTMCs, presenting novel algorithms and conducting comparisons with leading competitors across synthetic and real-world scenarios.   Our methods have been proven effective in real-world scenarios, as seen in the \textit{Last.fm} application. Additionally, we introduced the innovative concept of  {\it Markovletics}  for learning offensive tactics in NBA. In essence, our research   adds a fresh dimension to the theoretical aspects of Markov chains and exemplifies its real-world applicability.  Our study raises several intriguing questions, including the choice of the parameter $L$ and the expansion of our techniques to a broader spectrum of datasets, including those from bioinformatics~\cite{beerenwinkel2009markov}.

\bibliographystyle{alpha}
\bibliography{ref}

\newpage
\appendix


\def\Yd{Y^{\mathrm{d}}}
\def\Yc{Y^{\mathrm{c}}}

\section{Effect of the discretization parameter $\tau$} 
\label{sec:appendix-tau}

We remind the reader of the following formal
definition of a CTMC with rate matrix
$K \in \R^{n \times n}$.
Let $M \in \R^{n \times n}$ be the transition
matrix of a discrete-time Markov chain given
by $M_{yz} \coloneqq K_{xy} / |K_{yy}|$
for $y \not= z$ and $M_{yy} = 0$. Let
$\Yd(i) \in [n]$ be the state of a
random walk through $M$ at step $i \in \N_0$.
To define the continuous-time process,
we sample transition times
$T(i) \sim \mathrm{Exp}(|K_{yy}|)$
where $y = \Yd(i)$.
Now, for a time $t \ge 0$,
we set $\Yc(t) = \Yd(i)$ where 
$i \in \N_0$ is such that
$\sum_{j < i} T(j) \le t < \sum_{j \le i} T(j)$.

In the remainder of this section, we
provide further intuition on the quality of estimation of $\mathbf K$
for different values of $\tau$, as discussed in the
introduction and Section~\ref{subsec:discretization}.
  
Figure~\ref{fig:discret-problems} demonstrates the significance of selecting an appropriate value for $\tau$. The illustration depicts transitions from state 1 to state 2 in two separate chains, each with distinct transition rates of 1 and 10, respectively. The choice of the time scale $\tau$ has a substantial impact on the observed transition probabilities. It is essential to carefully choose $\tau$ (e.g., $\tau=0.1$) in order to distinguish between the discretized chains, as we discussed in Section~\ref{subsec:discretization}. The underlying issue is that when one chain within the mixture transitions significantly faster than another, using the correct discretization choice becomes critical. A too-small discretization may result in too few observed transitions in the slower chain, while a too-large discretization may cause both chains to converge to their potentially identical stationary distributions, hindering the differentiation of the trajectories.

\begin{figure}[htp]
    \centering
    \begin{tikzpicture}[state/.style={circle, draw=black, thick, minimum size=15pt, inner sep=0pt}, scale=0.8, every node/.style={transform shape}]  
        \node[align=right,anchor=east] at (-1, 0.5) {CTMCs $K^1$ and $K^2$};
        \node[align=right,anchor=east] at (-3, -2) {discretized chains \\[0.5mm] $e^{K^1 \tau}$ and $e^{K^2 \tau}$};
        
        \node[align=right,anchor=east] at (-1, -1) {$\tau=5$};
        \node[align=right,anchor=east] at (-1, -2) {$\tau=0.1$};
        \node[align=right,anchor=east] at (-1, -3) {$\tau=0.002$};

        \node[state] (x1) at (0, 0.5) {1};
        \node[state] (y1) at (1.5, 0.5) {2};
        \draw[->,thick] (x1) -- (y1) node[midway,above] {$1$};
        
        \node[state] (x2) at (3, 0.5) {1};
        \node[state] (y2) at (4.5, 0.5) {2};
        \draw[->,thick] (x2) -- (y2) node[midway,above] {$10$};

        \node[state] (x3) at (0, -1) {1};
        \node[state] (y3) at (1.5, -1) {2};
        \draw[->,thick] (x3) -- (y3) node[pos=0.45,above] {\small $\approx\! 0.99$};
        
        \node[state] (x4) at (3, -1) {1};
        \node[state] (y4) at (4.5, -1) {2};
        \draw[->,thick] (x4) -- (y4) node[pos=0.45,above] {\small $\approx\! 1.00$};

        \node[state] (x5) at (0, -2) {1};
        \node[state] (y5) at (1.5, -2) {2};
        \draw[->,thick] (x5) -- (y5) node[pos=0.45,above] {\small $\approx\! 0.63$};
        
        \node[state] (x6) at (3, -2) {1};
        \node[state] (y6) at (4.5, -2) {2};
        \draw[->,thick] (x6) -- (y6) node[pos=0.45,above] {\small $\approx\! 0.10$};

        \node[state] (x7) at (0, -3) {1};
        \node[state] (y7) at (1.5, -3) {2};
        \draw[->,thick] (x7) -- (y7) node[pos=0.45,above] {\small $\approx\! 0.02$};
        
        \node[state] (x8) at (3, -3) {1};
        \node[state] (y8) at (4.5, -3) {2};
        \draw[->,thick] (x8) -- (y8) node[pos=0.45,above] {\small $\approx\! 0.00$};
        
        \draw [decorate,decoration={brace,amplitude=5pt},xshift=-4pt,yshift=0pt]
(-2.4, -3.25) -- (-2.4, -0.75) node [black,midway,xshift=9pt] {};
    \end{tikzpicture}

    \caption{Choosing the right value of $\tau$ is crucial.}
    \label{fig:discret-problems}
\end{figure}
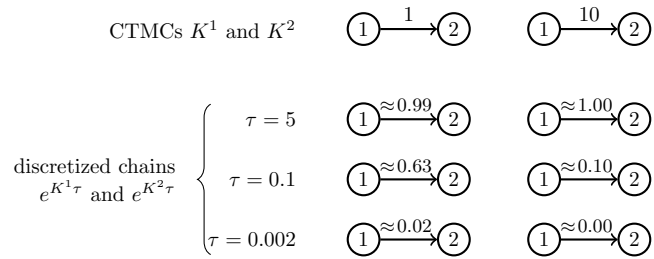

Figure~\ref{fig:choosing-tau} shows the asymptotic trends of the variables in the estimation of $\mathbf K$  with respect to $\tau$ that we introduced in Section~\ref{subsec:discretization}:
the total observation time for holdings periods without transitions, the count of observed transitions and the tally of bad transitions. Owing to the varying gradient as $\tau$ approaches 0, we can adjust $\tau$ to yield a minimal set of bad transitions while ensuring a significant number of quality  (i.e., not bad) transitions.

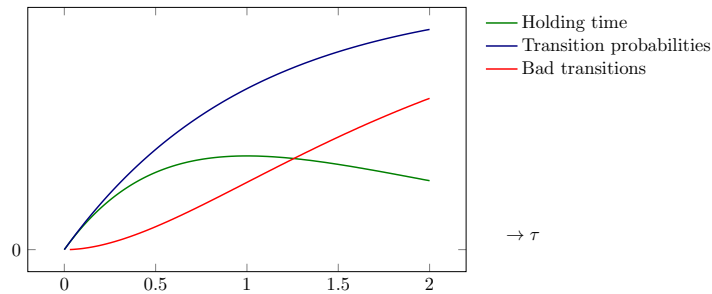
\begin{figure}[htp]
    \centering
    \scalebox{0.7}{\begin{tikzpicture}
        \begin{axis}[domain=0:2, restrict y to domain=0:10, width=0.6\linewidth, legend pos=outer north east, legend cell align=left, height=0.4\linewidth, legend style={fill opacity=0.5, text opacity=1, draw=none}, ytick={0}, xlabel={$\to \tau$}, x label style={at={(axis description cs: 1.13, 0.28)}}]
            \addplot[samples=200, smooth, thick, green!50!black] {x * exp(-1 * x)};
            \addlegendentry{ Holding time}
            \addplot[samples=200, smooth, thick, blue!50!black] {1 - exp(-x)};
            \addlegendentry{ Transition probabilities}
            \addplot[samples=200, smooth, thick, red] {1 - (1 + x) * exp(-x)};
            \addlegendentry{ Bad transitions}
        \end{axis}
    \end{tikzpicture}}
    \caption{Estimating $\mathbf K$.}
    \label{fig:choosing-tau}
\end{figure}

\section{Proofs from Section~\ref{subsec:discretization}}
\label{sec:appendix-sc}
We define $q_{y}^{\ell} \coloneqq e^{K_{yy}^{\ell}\tau}=\Pr\left[E \ge\tau\right]$
for $E \sim \mathrm Exp(|K_{yy}|)$
as the probability to remain in state $y$ in time $\tau$. Since
we are observing states at discrete time intervals of size $\tau$,
our estimator is 
\[
\hat{q}_y^{\ell} = \frac{1}{c_{y}^{\ell}} \left|\left\{ \mathbf{y}\in{\bf X}^{\ell},0 \le i < m:\mathbf{x}_{i}=y\land\mathbf{x}_{i+1}=y\right\} \right|
\]
where
$
    c_{y}^{\ell} = \left|\left\{ \mathbf{x}\in{\bf X}^{\ell},i:\mathbf{x}_{i}=y\right\} \right|
$
is the number of transitions from chain $\ell$ that traverse through $y$.
Note that $\hat{q}_{y}^{\ell}$ is a biased estimator of $q_{y}^{\ell}$
because we are unable to tell whether transitions are bad (see Definition~\ref{dfn:bad}). That is, when estimating the holding
time with our estimator, it could happen
that $\mathbf{x}_{i}=x_{\tau i}=y$ and $\mathbf{x}_{i+1}={x}_{\tau(i+1)}=y$
but there is a $\xi\in(0, 1)$ with ${x}_{\tau i + \xi}\not=y$. 
We thus have to set $\tau$ small enough to avoid bad samples. In
particular, we set
\[
\tau = \frac{\sqrt{\epsilon_{\mathrm h}}}{3K_{\max}}.
\]
We now estimate one row of the rate matrix $K_{y}^{\ell}=(K_{yz}^{\ell})_z$
for a fixed chain $\ell$ and state $y$. For brevity, we will omit $\ell$
in $K_{yz}=K_{yz}^{\ell}$ and omit $z$ in $q=q_{z}^{\ell}$,
$\hat{q}=\hat{q}_{y}^{\ell}$, and $m=m_{y}^{\ell}$.

\smallskip
{\sc Lemma}~\ref{lem:holding}.
\emph{
Given $c=\Omega\left(e^{\left|K_{yy}\right|\tau}\epsilon_{\mathrm h}^{-2}\log(Ln)\right)=\Omega\left(\epsilon_{\mathrm h}^{-2}\log(Ln)\right)$
transitions, our estimator $\hat{q}$ for the holding time satisfies
\[
\left|\hat{q}-q\right|\le\epsilon_{\mathrm h}q
\]
with high probability.
}

\begin{proof}
We first bound the number of bad transitions. In particular, we want that
only a $2\gamma \coloneqq \frac{\epsilon_{\mathrm h}}{2}q$ fraction of the $c$ transitions
are bad so that we do not incur to much error from these samples.
We first calculate the probability to obtain a single bad sample.
Here, we use that the larger $\left|K_{yy}\right|$, the more likely
it is to switch states, so the probability of obtaining a bad sample
is maximized in the state with largest $\left|K_{yy}\right|$. Let
thus $E, E' \sim\mathrm{Exp}\left(K_{\max}\right)$ be independent random
variables. By the memorylessness of the exponential distribution,
\begin{align*}
\Pr\left[E + E'<\tau\right] & =\int_{0}^{\tau}\Pr\left[E'\le\tau-t\right]f_{E}(t)dt\\
 & =K_{\max}\int_{0}^{\tau}\left(1-e^{-K_{\max}(\tau-t)}\right)e^{-K_{\max}t}dt\\
 & =1-\left(1+K_{\max}\tau\right)e^{-K_{\max}\tau}\\
 & \le1-\left(1+K_{\max}\tau\right)\left(1-K_{\max}\tau\right)\\
 & =K_{\max}^{2}\tau^{2} 
\end{align*}
where $f_E$ is the PDF of $E$.
We further bound
\begin{multline*}
K_{\max}^{2}\tau^{2}=
\frac{\epsilon_{\mathrm h}}{9}
\le\frac{\epsilon_{\mathrm h}}{4}-\frac{\epsilon_{\mathrm h}}{12}\cdot\frac{\sqrt{\epsilon_{\mathrm h}}\left|K_{xx}\right|}{K_{\max}}
\le\frac{\epsilon_{\mathrm h}}{4}\left(1-\frac{1}{3}\sqrt{\epsilon_{\mathrm h}}\frac{\left|K_{xx}\right|}{K_{\max}}\right) \\
\le\frac{\epsilon_{\mathrm h}}{4}e^{\frac{1}{3}\sqrt{\epsilon_{\mathrm h}}\frac{K_{xx}}{K_{\max}}}
=\frac{\epsilon_{\mathrm h}}{4}e^{K_{xx}\tau}=\frac{\epsilon_{\mathrm h}}{4}q=\gamma .
\end{multline*}
Let now $H_{j}=1$ if there is no state change in the observed transition $j \in [c]$.
Let $B_{j}=1$ if the $j$-th transition is bad. By a Chernoff bound,
\begin{align*}
\Pr\left[\sum\nolimits_{j=1}^{c}B_{j}\ge2c\gamma\right]\le\exp\left(-\frac{1}{3}c\gamma\right)
=\exp\left(-\frac{1}{12}\epsilon_{\mathrm h}cq\right)=O\left(\frac{1}{\mathrm{poly}\left(Ln\right)}\right).
\end{align*}
Thus, with high probability, at most a $2\gamma$ fraction of the
samples is bad. We therefore incur an additive error of at most $2\gamma=\frac{\epsilon_{\mathrm h}}{2}q$
from bad samples. Since $2\gamma\le\frac{1}{2}$, we can use another
Chernoff bound on the good samples, which make up at least half the
total samples, to show that the estimation error from sampling is
at most $\frac{\epsilon_{\mathrm h}}{2}$ with high probability:
\begin{align*}
\Pr\left[\Big|q-\frac{2}{c}\sum\nolimits_{j=1}^{c/2}H_{j}\Big|\ge\frac{\epsilon_{\mathrm h}}{2}q\right]
 \le2\exp\left(-\frac{1}{24}\epsilon_{\mathrm h}^{2}cq\right)
 =O\left(\frac{1}{\mathrm{poly}\left(Ln\right)}\right). \qedhere
\end{align*}
\end{proof}
Let $p_{z}\coloneqq\frac{K_{yz}}{\left|K_{yy}\right|}$ be the transition
probability from state $y$ to $z$ within time $\tau$. We estimate
$p_{z}$ through 
\[
\hat{p}_{z} \coloneqq \frac{\left|\left\{ \mathbf{x}\in{\bf X}^{\ell},0 \le i < m:\mathbf{x}_{i}=y\land\mathbf{x}_{i+1}=z\right\} \right|}{\left|\left\{ \mathbf{x}\in{\bf X}^{\ell},0 \le i < m:\mathbf{x}_{i}=y\land{\bf x}_{i+1}\not=y\right\} \right|}
\]
for all states $y\not=x$. Let also
$\mathbf{p}\coloneqq\left(p_{z}\right)_{z\not=y}$ and $\hat{{\bf p}} \coloneqq \left(\hat{p}_{z}\right)_{z\not=y}$
be discrete probability vectors.

\begin{lemma}
\label{lem:transition}
With $c=\Omega\Big(\frac{n\kappa}{\epsilon_{\mathrm t}^{2}\sqrt{\epsilon_{\mathrm h}}}\log\left(Ln\right)\Big)$
transitions, our estimator for the transition probabilities
satisfies
\[
    \mathrm{TV}({\bf p},\hat{{\bf p}})\le\epsilon_{\mathrm t}
\]
with high probability.
\end{lemma}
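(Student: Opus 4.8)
The plan is to recognize $\hat{\mathbf p}$ as the empirical version of a single ``conditioned discretized'' distribution and then split the error into a deterministic bias and a sampling term. Writing $N_z\coloneqq|\{\mathbf x\in\mathbf X^\ell,\,0\le i<m:\mathbf x_i=y\wedge\mathbf x_{i+1}=z\}|$ and $N\coloneqq\sum_{z\neq y}N_z$ so that $\hat p_z=N_z/N$, I would introduce
\[
    p^\ast_z\;\coloneqq\;\frac{T_{yz}(\tau)}{1-T_{yy}(\tau)}\qquad(z\neq y),
\]
the law of the observed next state conditioned on $\mathbf x_i=y$ and $\mathbf x_{i+1}\neq y$, and use $\mathrm{TV}(\mathbf p,\hat{\mathbf p})\le\mathrm{TV}(\mathbf p,\mathbf p^\ast)+\mathrm{TV}(\mathbf p^\ast,\hat{\mathbf p})$, aiming for a deterministic bias bound $\mathrm{TV}(\mathbf p,\mathbf p^\ast)=O(\sqrt{\epsilon_{\mathrm h}})$ and a sampling bound $\mathrm{TV}(\mathbf p^\ast,\hat{\mathbf p})\le\epsilon_{\mathrm t}/2$.

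For the bias, I would decompose a step $[i\tau,(i+1)\tau]$ starting at $y$ by the number of CTMC jumps inside it. On the event of exactly one jump the step lands at $z\neq y$ with probability $p_z\,g_z$, where $g_z=\int_0^\tau|K_{yy}|e^{-|K_{yy}|t}e^{-|K_{zz}|(\tau-t)}\,\mathrm{d}t\in[(1-q)e^{-K_{\max}\tau},\,1-q]$ is the residual-holding factor at $z$; on the event of two or more jumps, which has probability at most $(1-q)K_{\max}\tau$ (the first jump occurs with probability $1-q$, a second one, with a rate $\le K_{\max}$, within the leftover time with probability $\le K_{\max}\tau$), the landing state is essentially unconstrained. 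Writing $T_{yz}(\tau)=(1-q)p_z+\delta_z$, these two estimates give $\sum_{z\neq y}|\delta_z|\le2(1-q)K_{\max}\tau$, hence $S\coloneqq1-T_{yy}(\tau)=(1-q)(1\pm2K_{\max}\tau)$; inserting $K_{\max}\tau=\sqrt{\epsilon_{\mathrm h}}/3$ into $p^\ast_z-p_z=-p_z\frac{\sum_{z'\neq y}\delta_{z'}}{S}+\frac{\delta_z}{S}$ and summing over $z$ then yields $\mathrm{TV}(\mathbf p,\mathbf p^\ast)=O(\sqrt{\epsilon_{\mathrm h}})$. This is below $\epsilon_{\mathrm t}/2$ in the regime in which the lemma is applied in Theorem~\ref{thm:sc}, where $\sqrt{\epsilon_{\mathrm h}}$ is a $\Theta(\kappa)$ factor below $\epsilon_{\mathrm t}$; in general the argument delivers $\mathrm{TV}(\mathbf p,\hat{\mathbf p})=O(\sqrt{\epsilon_{\mathrm h}})+\epsilon_{\mathrm t}/2$.

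For the sampling term I would first lower bound the effective sample size $N$. Each of the $c$ transitions through $y$ produces a state change with probability $S=1-T_{yy}(\tau)\ge\tfrac13(1-q)\ge\tfrac16K_{\min}\tau=\tfrac1{18}\sqrt{\epsilon_{\mathrm h}}/\kappa$ (using $|K_{yy}|\ge K_{\min}$ and $\tau=\sqrt{\epsilon_{\mathrm h}}/(3K_{\max})$), so a Chernoff bound for Markov chains \cite{chung2012chernoffhoeffding} gives $N=\Omega(c\sqrt{\epsilon_{\mathrm h}}/\kappa)$ with high probability; thus $c=\Omega\!\big(\tfrac{n\kappa}{\epsilon_{\mathrm t}^2\sqrt{\epsilon_{\mathrm h}}}\log(Ln)\big)$ forces $N=\Omega\!\big(\tfrac{n}{\epsilon_{\mathrm t}^2}\log(Ln)\big)$. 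Since, by the Markov property of the discretized chain $T(\tau)$, the observed post-$y$ states are draws from $\mathbf p^\ast$ on a domain of size $\le n$, I would conclude with the standard $\ell_1$ concentration of an empirical distribution (in its Markov-chain form, via \cite{chung2012chernoffhoeffding}): $N=\Omega((n+\log(Ln))/\epsilon_{\mathrm t}^2)$ samples force $\|\hat{\mathbf p}-\mathbf p^\ast\|_1\le\epsilon_{\mathrm t}$, i.e.\ $\mathrm{TV}(\mathbf p^\ast,\hat{\mathbf p})\le\epsilon_{\mathrm t}/2$, with high probability. Combining the two bounds gives the lemma.

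The hard part will be the bias estimate: I need conditioning on ``a state change was observed'' to recover the true jump distribution $\mathbf p$ up to $O(\sqrt{\epsilon_{\mathrm h}})$, which forces me to simultaneously control the residual-holding-time distortion $g_z$ and the mass redistributed by multi-jump (``bad'', Definition~\ref{dfn:bad}) steps, and to keep careful track of the fact that $1-T_{yy}(\tau)=\Theta(\sqrt{\epsilon_{\mathrm h}}/\kappa)$ can be tiny for slow states — the latter is exactly what dilutes the informative sample count and inflates the required number of transitions by the $\kappa/\sqrt{\epsilon_{\mathrm h}}$ factor.
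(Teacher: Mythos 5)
Your proof is correct, and while its skeleton matches the paper's (lower-bound the per-step probability of an observed state change by $1-e^{-K_{\min}\tau}\ge\sqrt{\epsilon_{\mathrm h}}/(6\kappa)$, apply a Chernoff bound to get $\Omega(c\sqrt{\epsilon_{\mathrm h}}/\kappa)$ effective samples, then invoke the standard $\Omega(n/\epsilon_{\mathrm t}^2)$ bound for learning a discrete distribution in TV), your treatment of the bias is genuinely different and in fact sharper. The paper reuses the bad-transition computation from Lemma~\ref{lem:holding}: it bounds the \emph{fraction} of bad samples among the observed state changes, which divides the $O(\epsilon_{\mathrm h})$ bad-step probability by the change probability $\Theta(\sqrt{\epsilon_{\mathrm h}}/\kappa)$ and therefore needs the side condition $\kappa\sqrt{\epsilon_{\mathrm h}}\le\frac{9}{24}\epsilon_{\mathrm t}$; it also implicitly treats the remaining good changes as clean draws from $\mathbf p$. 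You instead compare $\mathbf p$ with the exact conditional law $p^\ast_z=T_{yz}(\tau)/(1-T_{yy}(\tau))$ via the jump-count decomposition $T_{yz}(\tau)=(1-q)p_z+\delta_z$ with $\sum_z|\delta_z|\le 2(1-q)K_{\max}\tau$; the $(1-q)$ factors cancel in $p^\ast_z-p_z=\big(\delta_z-p_z\sum_{z'}\delta_{z'}\big)/S$, giving a deterministic bias of $O(K_{\max}\tau)=O(\sqrt{\epsilon_{\mathrm h}})$ with no $\kappa$ inflation, and it also accounts for the residual-holding distortion $g_z$ that the paper's ``good samples follow $\mathbf p$'' shortcut quietly ignores. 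What each buys: the paper's route is shorter and recycles Lemma~\ref{lem:holding}; yours yields the weaker requirement $\sqrt{\epsilon_{\mathrm h}}\lesssim\epsilon_{\mathrm t}$ in place of $\kappa\sqrt{\epsilon_{\mathrm h}}\lesssim\epsilon_{\mathrm t}$ and makes explicit that the lemma, as stated, implicitly carries such a relation between $\epsilon_{\mathrm h}$ and $\epsilon_{\mathrm t}$ --- a caveat present in both proofs and satisfied where the lemma is used in Theorem~\ref{thm:sc} (there $\sqrt{\epsilon_{\mathrm h}}=\epsilon/(24\kappa)$ and $\epsilon_{\mathrm t}=\epsilon/2$), so your note on this point is apt rather than a gap.
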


\begin{proof}
We first bound the probability to see a transition in a given sample.
Since the probability of observing a transition is minimized for $K_{\min}$,
let $E=\mathrm{Exp}\left(K_{\min}\right)$. Using the fact that $e^{-x} \geq 1-\frac{x}{2}$ for $x \in [0, 1.59]$, we can   bound the
probability of observing a transition as
\[
\Pr\left[E<\tau\right] =1-e^{-K_{\min}\tau}
 =1-e^{-\kappa^{-1}\frac{\sqrt{\epsilon_{\mathrm h}}}{3}}
 \ge \frac{\sqrt{\epsilon_{\mathrm h}}}{6\kappa}.
\]
We obtain that at least a $\frac{\sqrt{\epsilon_{\mathrm h}}}{12\kappa}$
fraction of the samples are transitions with high probability, since
by a Chernoff bound,
\begin{align*}
\Pr\left[\frac{1}{m}\sum\nolimits_{i=1}^{c}T_{j}\le\frac{\sqrt{\epsilon_{\mathrm h}}}{12\kappa}\right]
    =\Pr\left[\sum\nolimits_{i=1}^{c}T_{j}\le\left(1-\frac{1}{2}\right)c\frac{\sqrt{\epsilon_{\mathrm h}}}{6\kappa}\right]
    \le\exp\left(-c\frac{\sqrt{\epsilon_{\mathrm h}}}{\kappa}\cdot\frac{1}{48}\right)
    =O\left(\frac{1}{\mathrm{poly}\left(Ln\right)}\right).
\end{align*}
where $T_j = 1 - H_j$ is a random variable
for observing a state change in the $j$-th transition.
With a similar argument as in Lemma~\ref{lem:holding}, we can show that
if $\kappa\sqrt{\epsilon_{\mathrm h}}\le\frac{9}{24}\epsilon_{\mathrm t}$, only a
$\frac{\epsilon_{\mathrm t}}{2}$-fraction of the transition samples are bad.
Furthermore, it is well know that to estimate a discrete distribution
with support $n-1$ to $\mathrm{TV}\left(\mathbf{p},\hat{\mathbf{p}}\right)\le\frac{\epsilon_{\mathrm t}}{2}$,
we require $\Omega(n/\epsilon_{\mathrm t}^{2})$ many samples of
transitions.
This requires that $\frac{\sqrt{\epsilon_{\mathrm h}}}{12\kappa}c=\Omega(n / \epsilon_{\mathrm t}^{2})$
which is satisfied by setting $c$ as in the theorem statement.
\end{proof}
Since $e^{K_{yy}\tau}=q$, we set
\[
\hat{K}_{yy} \coloneqq \frac{\log\left(\hat{q}\right)}{\tau}
\qquad
\textrm{and}
\qquad
\hat{K}_{yz} \coloneqq \hat{p}_{z}\left|\hat{K}_{yy}\right|.
\]
We define the $y$-th row of the $\ell$-th chain
as $K_{y}^{\ell}=(K_{yz})_{z}$ and our estimate as
$\hat{K}_{y}^{\ell}=(\hat{K}_{yz})_{z}$.

\smallskip
Theorem~\ref{thm:sc}
\emph{
Using $c=\Omega\left(\frac{\kappa^{2}}{\epsilon^{3}}\left(n+\frac{\kappa^{2}}{\epsilon}\right)\log\left(Ln\right)\right)$
transitions, we can estimate $K_{y}^{\ell}$ such that $\mathrm{TV}({ K}_{y}^{\ell},\hat{{K}}_{y}^{\ell})\le\epsilon$
with high probability.
}


\begin{proof}
By the definition of the $\mathrm{TV}$ distance,
\begin{align*}
\mathrm{TV}({K}_{y}^{\ell},\hat{{K}}_{y}^{\ell}) & =\frac{1}{2}\int_{0}^{\infty}\sum\nolimits_{z\not=y}\left|\frac{\hat{K}_{yz}}{\hat{K}_{yy}}\hat{K}_{yy}e^{\hat{K}_{yy}t}-\frac{K_{yz}}{K_{yy}}K_{yy}e^{K_{yy}t}\right|dt\\
 & =\frac{1}{2 \tau}\int_{0}^{\infty}\sum\nolimits_{z\not=y}\left|\hat{p}_{z}\log\left(\hat{q}\right)\hat{q}^{t/\tau}-p_{z}\log\left(q\right)q^{t/\tau}\right|dt .
\end{align*}
We condition on the case that $\left|\hat{q}-q\right|\le\epsilon_{\mathrm h}q$
and $\mathrm{TV}\left(\mathbf{p},\hat{\mathbf{p}}\right)\le\epsilon_{\mathrm t}$
which both happen with high probability due to
Lemma~\ref{lem:holding} and Lemma~\ref{lem:transition},
respectively.
We can then apply the bound
\begin{align}
\label{eq:9}
\big|\hat{a}\hat{b}-ab\big| =\big|\hat{a}(\hat{b}-b)+(\hat{a}-a)b\big|
 \le|\hat{b}-b|\cdot|\hat{a}|+|\hat{a}-a|\cdot|b|
\end{align}
twice to each inner term. That is, we first bound
\begin{align*}
&\big|{\hat{q}^{t/\tau}}{\log\left(\hat{q}\right)}-{q^{t/\tau}}{\log\left(q\right)}\big| \\
 &\le\big|\log\left(\hat{q}\right)-\log(q)\big|\cdot\hat{q}^{t/\tau}+\big|\hat{q}^{t/\tau}-q^{t/\tau}\big|\cdot\left|\log\left(q\right)\right|\\
 & \le2\epsilon\hat{q}^{t/\tau}+\big|\hat{q}^{t/\tau}-q^{t/\tau}\big|\cdot\left|\log\left(q\right)\right|.
\end{align*}
since $\left|\log\left(\hat{q}\right)-\log\left(q\right)\right|=\log\left(\max\left\{ \frac{\hat{q}}{q},\frac{q}{\hat{q}}\right\} \right)\le e^{2\epsilon_{\mathrm h}}$.
We use this to bound
\begin{align*}
&\big|{\hat{p}_{z}}{\log\left(\hat{q}\right)\hat{q}^{t/\tau}}-{p_{z}}{\log\left(q\right)q^{t/\tau}}\big|\\
& \le\big|\log\left(\hat{q}\right)\hat{q}^{t/\tau}-\log\left(q\right)q^{t/\tau}\big|\cdot\hat{p}_{z}+\left|\hat{p}_{z}-p_{z}\right|\cdot\big|\log\left(q\right)q^{t/\tau}\big|\\
 & \le2\epsilon_{\mathrm h}\hat{q}^{t/\tau}\hat{p}_{z}+\big|\hat{q}^{t/\tau}-q^{t/\tau}\big|\cdot\left|\log\left(q\right)\right|\cdot\hat{p}_{z}+\left|\hat{p}_{z}-p_{z}\right|\cdot\big|\log\left(q\right)q^{t/\tau}\big| .
\end{align*}
Plugging this back in, we obtain
\begin{align*}
\mathrm{TV}({K}_{y}^{\ell},\hat{{K}}_{y}^{\ell})
\le\frac{1}{2}\int_{0}^{\infty}\sum_{z\not=y}\Bigg(\underbrace{2\frac{\epsilon_{\mathrm h}}{\tau}\hat{q}^{t/\tau}\hat{p}_{z}}_{(\mathrm I)}
\quad +\underbrace{\left|\hat{q}^{t/\tau}-q^{t/\tau}\right|\cdot\frac{\left|\log\left(q\right)\right|}{\tau}\cdot\hat{p}_{z}}_{(\mathrm{II})}+\underbrace{\left|\hat{p}_{z}-p_{z}\right|\cdot\left|\frac{\log\left(q\right)}{\tau}q^{t/\tau}\right|}_{(\mathrm{III})}\Bigg)dt.
\end{align*}
We analyze all three error terms separately. First, we bound
\begin{align*}
(\mathrm{I})=2\frac{\epsilon_{\mathrm h}}{\tau}\sum_{z\not=y}\hat{p}_{yz}\int_{0}^{\infty}\hat{q}^{t/\tau}dt
 =2\epsilon_{\mathrm h}\frac{1}{\left|\log\left(\hat{q}\right)\right|}
 \le2\epsilon_{\mathrm h}\frac{1}{\left|\log\left(q\right)\right|-\log\left(1+\epsilon_{\mathrm h}\right)}
 =2\frac{\epsilon_{\mathrm h}}{\left|K_{yy}\right|\tau-\epsilon_{\mathrm h}}
\end{align*}
and, assuming that $\sqrt{\epsilon_{\mathrm h}}\le\kappa^{-1}$,
\begin{align*}
(\mathrm{II})&=\int_{0}^{\infty}\sum_{z\not=y}\left|\hat{q}^{t/\tau}-q^{t/\tau}\right|\cdot\frac{\left|\log\left(q\right)\right|}{\tau}\cdot\hat{p}_{z}dt \\
 & =\left|K_{yy}\right|\cdot\int_{0}^{\infty}\left|\hat{q}^{t/\tau}-q^{t/\tau}\right|dt\\
 & \le\left|K_{yy}\right|\left(\frac{1}{\left|K_{yy}\right|-\frac{\epsilon_{\mathrm h}}{\tau}}-\frac{1}{\left|K_{yy}\right|}\right)\\
 & =\frac{\epsilon_{\mathrm h}}{\left|K_{yy}\right|\tau-\epsilon_{\mathrm h}}.
\end{align*}
Finally,
\begin{align*}
(\mathrm{III})&=\int_{0}^{\infty}\sum_{z\not=y}\left|\hat{p}_{z}-p_{z}\right|\cdot\left|\frac{\log\left(q\right)}{\tau}q^{t/\tau}\right|dt \\
& =\sum_{z\not=y}\left|\hat{p}_{z}-p_{z}\right|\cdot\left|K_{yy}\right|\int_{0}^{\infty}e^{K_{yy}t}dt\\
 & =\sum_{z\not=y}\left|\hat{p}_{z}-p_{z}\right|\\
 & =2\mathrm{TV}\left({\bf p},\hat{{\bf p}}\right).
\end{align*}
Thus, we incur a total error of
\[
\mathrm{TV}({K}_{y}^{\ell},\hat{{K}}_{y}^{\ell})
 \le2\frac{\epsilon_{\mathrm h}}{\left|K_{yy}\right|\tau-\epsilon_{\mathrm h}}+\mathrm{TV}\left({\bf p},\hat{{\bf p}}\right)\\
 \le12\kappa\sqrt{\epsilon_{\mathrm h}}+\epsilon_{\mathrm t}
\]
since $\left|K_{yy}\right|\tau-\epsilon_{\mathrm h}\ge\frac{1}{6}\sqrt{\epsilon_{\mathrm h}}\kappa^{-1}$
if $6\sqrt{\epsilon_{\mathrm h}}\le\kappa^{-1}$. We can thus set $\sqrt{\epsilon_{\mathrm h}}=\frac{\epsilon}{24\kappa}$
and $\epsilon_{\mathrm t}=\frac{\epsilon}{2}$ to obtain $\mathrm{TV}({ K}_{y},\hat{{K}}_{y})\le\epsilon$.
With Lemma~\ref{lem:holding} and
Lemma~\ref{lem:transition}, this
gives us the bound on $m$
in the theorem statement.
\end{proof}

\section{Proofs from Section~\ref{sec:clustering}}

\spara{Long trails}
\label{sec:appendix-long-length}%
\label{subsec:long-trails}%
In order to apply the theorem
of \cite{kausik2023mdps}, we
need to ensure that each
pair of chains
exhibits a model difference
$\Delta$ in a state that is
visited sufficiently often.
To ensure the former, we
show how to transfer a model
difference from the mixture
of CTMCs to a mixture of
discrete-time Markov chains:

\begin{lemma}
\label{lem:kausik}
For any two rate matrices
$K, K' \in \{K^1, \dots, K^L\}$
and a state $y \in [n]$,
if $\| K_y - K'_y \|_2 \ge
\frac 1 \tau \Delta + 8 \tau (1 + K_{\max}^2)$ then
$\| e^{K \tau}_y - e^{K' \tau}_y \|_2 \ge \Delta$.
\end{lemma}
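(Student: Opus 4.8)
The natural approach is to isolate the linear term of the matrix exponential and treat everything of higher order as a controllable remainder. Write $e^{K\tau} = I_n + \tau K + R_K$ where $R_K \coloneqq \sum_{k \ge 2} \tfrac{(\tau K)^k}{k!}$, and likewise $e^{K'\tau} = I_n + \tau K' + R_{K'}$. Subtracting the $y$-th rows, the identity rows cancel, so $e^{K\tau}_y - e^{K'\tau}_y = \tau(K_y - K'_y) + (R_K)_y - (R_{K'})_y$, and the triangle inequality gives
\[
   \| e^{K\tau}_y - e^{K'\tau}_y \|_2 \ \ge\ \tau\,\|K_y - K'_y\|_2 \;-\; \|(R_K)_y\|_2 \;-\; \|(R_{K'})_y\|_2 .
\]
Hence it suffices to prove the remainder bound $\|(R_K)_y\|_2 \le 4\tau^2 K_{\max}^2$ (and the identical bound for $K'$): substituting the hypothesis $\|K_y - K'_y\|_2 \ge \frac{1}{\tau}\Delta + 8\tau(1+K_{\max}^2)$ into the display then yields $\|e^{K\tau}_y - e^{K'\tau}_y\|_2 \ge \Delta + 8\tau^2(1+K_{\max}^2) - 8\tau^2 K_{\max}^2 = \Delta + 8\tau^2 \ge \Delta$. (We may assume $\Delta \ge 0$, since otherwise the claim is trivial.)

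For the remainder, the crucial point is not to route the estimate through the spectral norm of $K$, which can grow with $n$ — a rate matrix may have large column sums even when all transition rates are at most $K_{\max}$. Instead I would use the maximum row $\ell_1$-norm $\rho_\infty(A) \coloneqq \max_i \sum_j |A_{ij}|$. Because the off-diagonal entries of a rate matrix are nonnegative and $K_{ii} = -\sum_{j \ne i} K_{ij}$, every row of $K$ has $\ell_1$-norm exactly $2|K_{ii}| \le 2 K_{\max}$, so $\rho_\infty(K) \le 2K_{\max}$. Since $\rho_\infty$ is submultiplicative, $\rho_\infty(K^k) \le (2K_{\max})^k$, and in particular the $y$-th row of $K^k$ has $\ell_1$-norm at most $(2K_{\max})^k$. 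Using $\|v\|_2 \le \|v\|_1$ and summing the series,
\[
   \|(R_K)_y\|_2 \ \le\ \sum_{k \ge 2} \frac{\tau^k}{k!}\,(2K_{\max})^k \ =\ e^{2\tau K_{\max}} - 1 - 2\tau K_{\max}.
\]
To close the estimate, observe that the hypothesis already pins $\tau$ to a moderate regime: since $\|K_y - K'_y\|_2 \le \|K_y\|_1 + \|K'_y\|_1 \le 4K_{\max}$, the hypothesis forces $8\tau(1+K_{\max}^2) \le 4K_{\max}$, hence $2\tau K_{\max} \le K_{\max}^2/(1+K_{\max}^2) < 1$. In this range the elementary inequality $e^x \le 1 + x + x^2$ for $x \in [0,1]$ (in the same spirit as the bound invoked in Lemma~\ref{lem:transition}), applied with $x = 2\tau K_{\max}$, gives $e^{2\tau K_{\max}} - 1 - 2\tau K_{\max} \le (2\tau K_{\max})^2 = 4\tau^2 K_{\max}^2$, exactly the remainder bound needed above.

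The only genuinely delicate step is the remainder estimate, and specifically the choice to measure it in the $\ell_\infty \!\to\! \ell_\infty$ operator norm rather than the spectral norm; once that is in place, the rest is the triangle inequality together with a second-order Taylor tail and the observation that the lemma's hypothesis is vacuous unless $\tau$ is small. I would only double-check the constants — the ``$+1$'' inside $1 + K_{\max}^2$ is what supplies the harmless slack $8\tau^2$ at the end — but no further ideas are needed.
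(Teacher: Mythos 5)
Your proof is correct, and while it shares the paper's overall skeleton (expand $e^{K\tau}=I_n+\tau K+\text{tail}$, apply the triangle inequality to the $y$-th row, and bound each tail row by $4K_{\max}^2\tau^2$), it handles the one delicate step differently. The paper bounds the tail $A=\sum_{k\ge2}\frac{1}{k!}(K\tau)^k$ by invoking the Gershgorin circle theorem to place the eigenvalues of $K\tau$ in $[-2K_{\max}\tau,0]$ and then passing from an eigenvalue bound on $A$ to the row bound $\|A_y\|_2\le 4K_{\max}^2\tau^2$ (assuming $K_{\max}\tau\le\tfrac12$); since rate matrices are generally non-normal, the jump from spectral radius to row norms is the loosest point of that argument. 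You instead bound the tail row through the induced $\ell_\infty$ norm: each row of $K$ has $\ell_1$-norm $2|K_{yy}|\le 2K_{\max}$, submultiplicativity gives $\|(K^k)_y\|_1\le(2K_{\max})^k$, and $\|\cdot\|_2\le\|\cdot\|_1$ plus the elementary bound $e^x-1-x\le x^2$ for $x\le1$ yields the same $4\tau^2K_{\max}^2$. This is a genuinely cleaner route for the key estimate, and you also extract the needed regime $2\tau K_{\max}<1$ directly from the hypothesis (via $\|K_y-K'_y\|_2\le 4K_{\max}$ and the trivial case $\Delta<0$), whereas the paper simply assumes $K_{\max}\tau\le\tfrac12$, which in its context is justified only by the external choice of $\tau$. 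The final arithmetic, including the role of the ``$+1$'' in $8\tau(1+K_{\max}^2)$ as slack, matches the paper's.
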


\begin{proof}
By the definition of the
matrix exponential as
a Taylor series,
\[
    e^{K \tau} - e^{K' \tau}
    = \tau (K - K') -
    \sum_{k=2}^\infty \frac 1 {k!} ((K \tau)^k  - (K' \tau)^k)
\]
and thus, by the triangle
inequality,
\begin{align}
    \label{eq:12}
    \|e^{K \tau}_y - e^{K' \tau}_y\|_2
    \ge \tau \| K_y - K'_y \|_2
    - \| A_y \|_2 - \| A'_y \|_2
\end{align}
where
$A \coloneqq \sum_{k=2}^\infty
= (K \tau)^k$
and $A'$ is defined analogously.
It thus remains to analyze
$\| A_y \|$.
To this end, let again
$K_{\max} \coloneqq
\max_{\ell, y} |K^\ell_{yy}|$.
Since rate matrices
are diagonally
dominant, we know by the
Gershgorin circle theorem
that all eigenvalues
of $K \tau$ lie in
$[-2 K_{\max} \tau, 0]$.
It is easy to see that the
magnitude of the eigenvalues
of $A$ are thus bounded by
\[
    |\lambda(A)|
    \le \sum_{k=2}^\infty 
    \frac 1 {k!} (2 K_{\max} \tau)^k
    = e^{2 K_{\max} \tau} - 1 - 2 K_{\max} \tau 
    \le 4 K_{\max}^2 \tau^2
\]
for $K_{\max} \tau \le \frac 1 2$
and therefore
$\| A_y \|, \| A'_y \| \le 4 K_{\max}^2 \tau^2$.
We plug this back into
\eqref{eq:12} and obtain
$
    \|e^{K \tau}_y - e^{K' \tau}_y\|_2
    \ge \tau \| K_y - K'_y \|_2
    - 8 K_{\max}^2 \tau^2
    \ge \Delta
$.
\end{proof}

When setting
$\tau^{-1} = \Theta( \kappa K_{\max} )$ in the
as in the context
of Section~\ref{subsec:discretization},
we obtain:
If
$
    \| K_y - K'_y \|_2
    = \kappa \Omega\left(
    \kappa K_{\max} \Delta
    + K_{\min}
    \right)
$
then
$\| e^{K \tau}_y - e^{K' \tau}_y \|_2 \ge \Delta$.

\spara{Very Long Trails}
\label{sec:appendix-very-long-length}%
%
%
We define the $\pi$-norm of
a vector $u \in \mathbb R^n$ through
\[
    \| u \|_\pi^2 =
    \sum_{i=1}^n \frac{u_i^2}{\pi(i)} .
\]

\begin{theorem}[Chernoff-Hoeffding
Bound for Random Walks
\cite{chung2012chernoffhoeffding}]
    Let $\mathbf x \in [n]^m$ be a random walk
    of length $m$ in a discrete-time
    Markov chain $M \in \R^{n \times n}$
    with associated starting
    probabilities $s \in [0,1]^n$.
    Let the stationary distribution
    of $M$ be $\pi$ and
    the mixing time $t_{\mathrm{mix}}$.
    Let $f \colon [n] \to [0, 1]$ be a
    function with
    $\mu \coloneqq \mathbb E_{y \sim \pi}[f(y)]$
    and $F \coloneqq \sum_{z \in \mathbf x} f(z)$.
    Then,
    \[
        \Pr[F \ge (1 + \delta) \mu m]
        \le \| s \|_\pi \cdot
        \left\{ \begin{array}{ll}
        e^{-\Omega(\delta^2 \mu m / t_{\mathrm{mix}})}
        & \mathrm{for}\textrm{ } 0 \le \delta \le 1 \\
        e^{-\Omega(\delta \mu m / t_{\mathrm{mix}})}
        & \mathrm{for}\textrm{ } \delta \ge 1 .
        \end{array} \right.
    \]
\end{theorem}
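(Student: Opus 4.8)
The plan is to prove the bound by the exponential-moment (matrix Chernoff) method for Markov chains, following \cite{chung2012chernoffhoeffding}. First I would fix a parameter $r > 0$ (to be optimized at the end) and apply Markov's inequality to the nonnegative random variable $e^{rF}$:
\[
    \Pr[F \ge (1+\delta)\mu m]
    = \Pr[e^{rF} \ge e^{r(1+\delta)\mu m}]
    \le e^{-r(1+\delta)\mu m}\,\mathbb{E}[e^{rF}] .
\]
This reduces the entire statement to an upper bound on the moment generating function $\mathbb{E}[e^{rF}]$ together with a choice of $r$ that recovers the two claimed regimes.

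Writing the walk as $x_0, \dots, x_{m-1}$ with $x_0 \sim s$ and introducing the diagonal weighting matrix $E_r \coloneqq \mathrm{diag}(e^{rf(1)}, \dots, e^{rf(n)})$, a direct expansion of the expectation over all length-$m$ paths factorizes as
\[
    \mathbb{E}[e^{rF}] = s^\top E_r (M E_r)^{m-1} \mathbf 1 .
\]
The next step is to pass to the $\pi$-weighted geometry: I would view $M$ as an operator on $L^2(\pi)$ and measure the start through its density $s/\pi$, whose $L^2(\pi)$-norm equals $\sqrt{\sum_i s_i^2/\pi_i} = \|s\|_\pi$. Bounding the bilinear form above by Cauchy--Schwarz in this inner product isolates exactly the factor $\|s\|_\pi$ appearing in the statement, multiplied by the $m$-th power of the relevant norm (top eigenvalue) of the perturbed operator $M E_r$.

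The heart of the argument is to show that this perturbed top eigenvalue $\rho_r$ obeys $\rho_r \le \exp\!\big(r\mu(1 + c\,r\,t_{\mathrm{mix}})\big)$ for $r$ up to order $1/t_{\mathrm{mix}}$. For reversible $M$ this follows from first-order perturbation theory around $r=0$: the unperturbed $M$ has leading eigenvalue $1$ with left and right eigenvectors $\pi$ and $\mathbf 1$, and since $E_r = I + r\,\mathrm{diag}(f) + O(r^2)$, the derivative of the eigenvalue at $r=0$ is $\pi^\top \mathrm{diag}(f)\,\mathbf 1 = \mathbb{E}_\pi[f] = \mu$, while the second-order curvature term is governed by the spectral gap, comparable to $1/t_{\mathrm{mix}}$. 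Substituting $\rho_r^m \le \exp(r\mu m + c\,r^2 \mu m\,t_{\mathrm{mix}})$ into the Markov bound gives $\exp(-r\delta\mu m + c\,r^2 \mu m\,t_{\mathrm{mix}})$; optimizing over $r$ yields $r^\star = \delta/(2c\,t_{\mathrm{mix}})$. For $0 \le \delta \le 1$ this $r^\star$ lies in the admissible range and produces the exponent $-\Omega(\delta^2 \mu m / t_{\mathrm{mix}})$, while for $\delta \ge 1$ we cap $r$ at its maximal admissible value $\Theta(1/t_{\mathrm{mix}})$ and obtain $-\Omega(\delta \mu m / t_{\mathrm{mix}})$, matching the two cases.

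The main obstacle is the general, non-reversible case, where $M$ need not be normal, may have complex eigenvalues, and the clean ``top eigenvalue / spectral gap'' perturbation is unavailable. Here I would follow \cite{chung2012chernoffhoeffding} and replace the eigenvalue estimate by a direct operator-norm bound on blocks $(M E_r)^{t_{\mathrm{mix}}}$ of length equal to the mixing time, over which the chain contracts toward $\pi$ in the $\pi$-norm regardless of reversibility; telescoping these blocks recovers the same $\exp(r\mu m(1 + O(r t_{\mathrm{mix}})))$ control. This substitution of mixing time for spectral gap is precisely what makes the bound hold for all ergodic chains and is the reason $t_{\mathrm{mix}}$ enters the exponent.
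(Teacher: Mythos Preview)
The paper does not prove this theorem; it is quoted verbatim from \cite{chung2012chernoffhoeffding} and used as a black box to derive inequality~\eqref{eq:3} and Theorem~\ref{thm:longtrails}. There is therefore nothing in the paper to compare your proposal against.

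That said, your sketch is a faithful outline of the argument in \cite{chung2012chernoffhoeffding}: Markov's inequality on $e^{rF}$, the matrix factorization $s^\top E_r (M E_r)^{m-1}\mathbf 1$, extraction of $\|s\|_\pi$ via Cauchy--Schwarz in $L^2(\pi)$, a perturbed-eigenvalue bound controlled by the spectral gap in the reversible case, and in the non-reversible case a block-of-length-$t_{\mathrm{mix}}$ contraction argument to replace the eigenvalue estimate. The optimization in $r$ and the split into the two regimes $0\le\delta\le1$ and $\delta\ge 1$ are also as in that reference. If you were asked to supply a proof here, this would be the right skeleton; just be aware that the non-reversible block contraction step hides most of the technical work (careful handling of the multiplicative reversiblization and the constants in the $O(r t_{\mathrm{mix}})$ correction), which your outline acknowledges but does not carry out.
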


For some fixed state $y$,
let $f(z) = 0$ if
$z = y$ and $f(z) = 1$, otherwise,
such that
$c_y \coloneqq \sum_{z \in \mathbf x} f(z) = m - F$
is the number of times
a random walk $\mathbf x$
traverses state $y$.
We bound the probability
of obtaining less than
$\theta$ samples:
\begin{align}
    \label{eq:3}
    \Pr[c_y \le \theta]
    \le \Pr[F \ge (1 + \delta) \mu m]
    \le \| s \|_\pi e^{- \Omega(\delta \mu m / t_{\mathrm{mix}})}
\end{align}
for $\mu = 1 - \pi(y)$ and
$\delta = \Omega(\frac{\pi(y) - \theta / m}\mu)$.

\begin{theorem}
\label{thm:longtrails}
Given a Markov chain with transition probabilities $M$, we can
learn $M$ up to recovery error $\epsilon$ from a single trail of length
\[
    m = \Omega\left( 
        \frac{n \epsilon^{-2} + t_{\mathrm{mix}}(M)}{\pi_{\mathrm{min}}}
        \log \frac n {\pi_{\mathrm{min}}}
    \right)
\]
with high probability.
\end{theorem}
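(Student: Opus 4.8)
I would split the argument into a \emph{coverage} step (every state is visited enough times by the single length-$m$ trail $\mathbf x$) and an \emph{estimation} step (given enough visits to a state, its empirical outgoing distribution is accurate), joined by a union bound over states. The estimator is the natural one: $\hat M_{yz}$ is the fraction of the $c_y$ visits to $y$ in $\mathbf x$ that are immediately followed by $z$, so $\hat M_y$ is an empirical distribution built from $c_y$ samples of the successor of $y$.

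For coverage, the object to control is $c_y$, with $\mathbb E[c_y] \approx \pi(y) m \ge \pi_{\mathrm{min}} m$. I would invoke the Chernoff--Hoeffding bound for random walks (\cite{chung2012chernoffhoeffding}) with the indicator $f(z) = \mathbf 1[z = y]$, so $\mu = \pi(y)$ and $F = c_y$; its companion lower-tail estimate, taken at the constant relative deviation $\delta = \tfrac12$, gives
\[
    \Pr\!\left[ c_y \le \tfrac12 \pi(y) m \right]
    \le \| s \|_\pi\, e^{-\Omega(\pi(y) m / t_{\mathrm{mix}})}
    \le \pi_{\mathrm{min}}^{-1/2}\, e^{-\Omega(\pi_{\mathrm{min}} m / t_{\mathrm{mix}})}
\]
using $\| s \|_\pi \le \pi_{\mathrm{min}}^{-1/2}$ and $\pi(y) \ge \pi_{\mathrm{min}}$. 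It matters to bound $c_y$ directly here rather than its complement $m - c_y$ as in \eqref{eq:3}, since the complement has relative deviation only $\Theta(\pi(y))$ and would cost an extra $\pi_{\mathrm{min}}^{-1}$ in the length. For $m = \Omega\big(\pi_{\mathrm{min}}^{-1} t_{\mathrm{mix}} \log(n / \pi_{\mathrm{min}})\big)$ the right-hand side is $\mathrm{poly}(\pi_{\mathrm{min}} / n)$, so a union bound over the $n$ states yields $c_y \ge \tfrac12 \pi(y) m \ge \tfrac12 \pi_{\mathrm{min}} m$ for all $y$ with high probability. The second term of the stated length is calibrated so that this guaranteed count beats the estimation threshold below: $\tfrac12 \pi_{\mathrm{min}} m \ge c_0\, n \epsilon^{-2} \log(n/\pi_{\mathrm{min}})$ once $m = \Omega\big(\pi_{\mathrm{min}}^{-1} n \epsilon^{-2} \log(n/\pi_{\mathrm{min}})\big)$.

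For estimation, the strong Markov property gives that, conditioned on $c_y = k$, the successors of the $k$ visits to $y$ are i.i.d.\ $\sim M_y$ (the final visit possibly lacking an observed successor is harmless since $k$ is large), so $\hat M_y$ is the empirical distribution of $k$ i.i.d.\ samples. The standard distribution-learning bound then yields $\mathrm{TV}(M_y, \hat M_y) \le \epsilon$ with probability $1 - \delta$ once $k = \Omega\big((n + \log(1/\delta)) \epsilon^{-2}\big)$ (e.g.\ from $\mathbb E\| \hat M_y - M_y \|_2 \le k^{-1/2}$, bounded differences, and $\| \cdot \|_1 \le \sqrt n\, \| \cdot \|_2$). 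Setting $\delta = \mathrm{poly}(\pi_{\mathrm{min}}/n)$ makes $c_0\, n \epsilon^{-2} \log(n/\pi_{\mathrm{min}})$ samples suffice; combined with the coverage step and a union bound over $y \in [n]$, this gives $\mathrm{TV}(M_y, \hat M_y) \le \epsilon$ for every $y$ with high probability, hence $\textrm{recovery-error}(M, \hat M) = \tfrac1n \sum_y \mathrm{TV}(M_y, \hat M_y) \le \epsilon$.

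The main obstacle is the coverage step: the concentration of $c_y$ must hold down to failure probability $1/\mathrm{poly}(n, \pi_{\mathrm{min}}^{-1})$ — to absorb both the $\| s \|_\pi \le \pi_{\mathrm{min}}^{-1/2}$ prefactor and the union bound over states — yet the length should carry only a single factor $1/\pi_{\mathrm{min}}$. This is precisely why one applies the Markov-chain Chernoff bound to the event $\{\text{walk is at }y\}$ (which has $\Theta(1)$ relative deviation) and not to its complement, and why the bracket takes the additive form $n \epsilon^{-2} + t_{\mathrm{mix}}$: one summand per subtask, each scaled by $\pi_{\mathrm{min}}^{-1}\log(n/\pi_{\mathrm{min}})$. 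The estimation step and the conditional-independence claim are routine, the latter using irreducibility of $M$ (which is what makes $\pi$ and $t_{\mathrm{mix}}$, and hence the statement itself, well defined).
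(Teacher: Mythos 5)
Your proposal is correct and follows the same skeleton as the paper's proof in the appendix: per-state coverage via the Chernoff--Hoeffding bound for random walks of \cite{chung2012chernoffhoeffding}, per-state estimation of the outgoing distribution from the visit count $c_y$ with the estimator $\hat M_{yz} = c_{yz}/c_y$, and a union bound over the $n$ states with the $\| s \|_\pi \le \pi_{\mathrm{min}}^{-1/2}$ prefactor absorbed into the exponent. Where you deviate, your choices are sound and in one place arguably cleaner. For coverage, the paper applies the upper-tail bound to the complement indicator ($\mu = 1 - \pi(y)$, $\delta = \Omega((\pi(y) - \theta/m)/\mu)$ in \eqref{eq:3}) and then uses the linear-in-$\delta$ exponent, even though that $\delta$ is typically below $1$ where only the $\delta^2$ form is stated; your application of the lower-tail bound to $f(z) = \mathbf 1[z = y]$ with constant relative deviation $\delta = \tfrac12$ reaches the exponent $\Omega(\pi(y) m / t_{\mathrm{mix}})$ without that regime issue (the lower tail is not restated in the paper's appendix, but it is available in the cited reference, so invoking it is legitimate). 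For estimation, the paper uses a Hoeffding bound per subset $Z \subseteq [n]$ plus a $2^n$ union bound as in \eqref{eq:4}, while you use the $L_2$-plus-bounded-differences route; both give the same $n\epsilon^{-2}$ scaling. The paper also balances the two failure modes through a single threshold $\theta$, whereas you separate them with the threshold $\tfrac12 \pi(y) m$; either bookkeeping yields the stated $m$. One caveat applies equally to both write-ups: treating the successors of the visits to $y$ as i.i.d.\ given a lower bound on the count $c_y$ is the usual mild abuse (conditioning on the total count is not literally the strong Markov property applied to the first $k$ visits), and your parenthetical acknowledgment of this is at the same level of rigor as the paper's own conditioning on $c_y \ge \theta$.
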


Let $M_y = (M_{yz}) \in \R^n$ be the vector of transition
probabilities out of state $y$.
We define an estimator $\hat M_y$ 
through
$\hat M_{yz} \coloneqq c_{yz} / c_y$,
where $c_{yz}$ is the number of
times the trail transitions from
state $y$ to $z$ and
$c_y \coloneqq \sum_{z} c_{yz}$.

\begin{proof}
We first bound the TV-distance from $M_y$ to $\hat M_y$ for
a fixed count $c_y$.
Choose an arbitrary set of states $Z \subseteq [n]$ and let
$M_y(Z) \coloneqq \sum_{z\in Z} M_{yz}$. By a Hoeffding bound,
\[
    \Pr\left[| M_y(Z) - \hat M_y(Z) | > \epsilon \right]
    \le 2 e^{- 2 c_y \epsilon^2} .
\]
By a union bound,
the probability that any set $Z$ experiences
error more than $\epsilon$ under
$\theta$ observations is at most
\begin{align}
    \label{eq:4}
    \Pr\left[\TV(M_y, \hat M_y) > \epsilon \right]
    \le 2^n \cdot 2 e^{- 2 c_y \epsilon^2}
    \le e^{n - 2 c_y \epsilon^2}
\end{align}
for sufficiently large $n$.
Let now
\[
    \theta = O\left( m \pi(y)
    + \frac{n - m \pi(y) \epsilon^2}{1 / t_{\mathrm{mix}} + \epsilon^2}
    \right)
\]
where $t_{\mathrm{mix}} \coloneqq t_{\mathrm{mix}}(M)$.
We consider two bad events. First, that
there are not enough samples, i.e. $c_y \le \theta$.
Second, that given at least $\theta$ samples,
the estimation error is larger than $\epsilon$.
Using \eqref{eq:3},
the probability of the first bad event is at most
\[
    \Pr[c_y \le \theta] \le
    \| s \|_\pi e^{-\Omega(\delta \mu m / t_{\mathrm{mix}})}
    = \| s \|_\pi \exp \left(
        -\Omega\left(\frac{ m \pi(y) - n \epsilon^{-2} }
          { \epsilon^{-2} + t_{\mathrm{mix}} }\right) 
    \right) .
\]
Furthermore, the probability of the
second bad event is, due to
\eqref{eq:4},
\[
    \Pr[ \TV(M_y, \hat M_y) > \epsilon \mid c_y \ge \theta]
    \le \exp \left(
        -\Omega\left(\frac{ m \pi(y) - n \epsilon^{-2} }
          { \epsilon^{-2} + t_{\mathrm{mix}} }\right) 
    \right) .
\]
Combining both, we see that
probability that of
error $\ge \epsilon$ when
estimating $M_x$ is at most
\begin{align*}
    \Pr[\TV(M_y, \hat M_y) > \epsilon]
    &\le \| s\|_\pi \exp \left(
        -\Omega\left(\frac{ m \pi(y) - n \epsilon^{-2} }
          { \epsilon^{-2} + t_{\mathrm{mix}} }\right)
    \right) \\
    &\le \frac 1 {\sqrt{\pi_{\mathrm{min}}}}
        \exp\left( - \Omega\left(
            \frac{m \pi_{\mathrm{min}} - n \epsilon^{-2}}{\epsilon^{-2} + t_{\mathrm{mix}}}
        \right) \right) .
\end{align*}
By a union bound, the probability that
for a fixed chain, 
any state has estimation
error more than $\epsilon$ is
at most
\[
    \frac n {\sqrt{\pi_{\mathrm{min}}}} \exp \left(
        -\Omega\left(\frac{ m \pi_{\mathrm{min}} - n \epsilon^{-2} }
          { \epsilon^{-2} + t_{\mathrm{mix}} }\right) 
    \right) 
\]
Therefore, with high probability, we are
able to estimate the chain from a single trail
within error $\epsilon$ when setting
$m$ as the the theorem statement.
\end{proof}
Finally, we remark that the probability that there is
a chain for which we do not obtain
a single trail is, by a union bound,
at most
\[
    \sum_{\ell=1}^L (1 - \| s^\ell \|_1)^m
    = L \cdot (1 - \Omega(1 / L))^m
    \le L \cdot e^{- \Omega(m / L)} .
\]
It thus suffices to set
$m = \Omega(L \log L)$
to obtain samples from ever
chain with high probability.

\section{Proofs from Section~\ref{subsec:recovery}}
\label{sec:appendix-recovery}

{\sc Theorem}~\ref{thm:amgm}.
\emph{
For each $\mathbf x \in \mathbf X$,
\begin{align*}
    \textstyle
    \prod_{\ell=1}^L
    \Pr[\mathbf x \mid \mathbf K \cap \ell]^{a (\mathbf x, \ell)}
    \le
    \sum_{\ell=1}^L
    a (\mathbf x, \ell) \cdot \Pr[\mathbf x \mid \mathbf K \cap \ell]
    \textstyle
    \le L \cdot (\max_\ell a(\mathbf x, \ell)) \cdot
    \prod_{\ell=1}^L
    \Pr[\mathbf x \mid \mathbf K \cap \ell]^{a (\mathbf x, \ell)} .
\end{align*}
}

\begin{proof}[Proof (Theorem~\ref{thm:amgm})]
    The first inequality can be directly derived from
    the arithmetic-geometric mean inequality
    using the factors $\hat a(\mathbf x, \ell)$.
    For the second, it is worth noting that without loss of generality 
    we can assume
    $\sum_{\ell=1}^L \Pr[\mathbf x \mid \mathbf K \cap \ell] = 1$. This is because both the arithmetic and geometric mean
    scale linearly. We thus want to show that
    \begin{align}
        \label{eq:8}
        \textstyle
        \sum_{\ell=1}^L p_\ell^2 \le
        C \cdot \prod_{\ell=1}^L p_\ell^{p_\ell}
    \end{align}
    for $p_\ell = \hat a(\mathbf x, \ell)
     = \Pr[\mathbf x \mid \mathbf K \cap \ell]$.
    Note that \eqref{eq:8} is equivalent to
    \[
        2^{\log_2 \sum_{\ell=1}^L p_\ell^2}
        =
        2^{- H_2(\mathbf p)}
        \le C \cdot 2^{- H_1(\mathbf p)}
        = C \cdot 2^{\sum_{\ell=1}^L p_\ell \log_2 p_\ell}
    \]
    where $H_\alpha(\mathbf p)$ is the \Renyi{} entropy\footnote{The
    \Renyi{} entropy of order $\alpha$ is defined as
    $H_\alpha(\mathbf p) = \frac 1 {1 - \alpha} \log_2\left( \sum_{\ell=1}^L p_i^\alpha \right)$
    while the Shannon entropy $H_1(\mathbf p)$ and min-entropy $H_\infty(\mathbf p) = -\log_2 \max_\ell p_\ell$ are defined trough the limit.}
    of $\mathbf p$ and a suitable constant $C > 0$.
    By well-known facts about the \Renyi{}-entropy,
    \begin{align*}
        H_1(\mathbf p) - H_2(\mathbf p) \le \log_2 L - H_2(\mathbf p)
        \le \log_2 L - H_\infty(\mathbf p)
        = \log_2 L + \log_2 \max_\ell p_\ell
    \end{align*}
    so we obtain $C = L \cdot \max_\ell p_\ell$
    as required for the theorem statement.
\end{proof}

\end{document}